\newcommand{\ie}{i.e.}
\newcommand{\eg}{e.g.}
\newcommand{\N}{\mathbb{N}} 
\newcommand{\R}{\mathbb{R}} 
\newcommand{\B}{\mathbb{B}} 
\newcommand{\range}[1]{[#1]} 
\renewcommand{\O}{\mathcal{O}} 
\renewcommand{\vec}[1]{\bm{#1}} 
\newcommand{\mat}[1]{\bm{#1}} 
\newcommand{\set}[1]{\mathcal{#1}} 
\newcommand{\norm}[1]{\left\lVert#1\right\rVert} 
\newcommand{\1}{\mathds{1}} 
\newcommand{\gw}{w} 
\newcommand{\dist}{\mathrm{d}} 
\newcommand{\sign}{\mathrm{sign}} 
\newcommand{\Thres}{\mathrm{Thres}} 
\newcommand{\E}{\mathbb{E}} 
\newcommand{\bP}{\mathbb{P}} 
\newcommand{\Unif}{\mathrm{Unif}} 
\newcommand{\Normal}{N} 
\newcommand{\code}{\texttt} 
\algrenewcommand\algorithmicrequire{\textbf{Input:}}
\algrenewcommand\algorithmicensure{\textbf{Output:}}
\algnewcommand\And{\textbf{and}}
\algnewcommand\Or{\textbf{or}}
\newcounter{phase}[algorithm]
\newlength{\phaserulewidth}
\newcommand{\setphaserulewidth}{\setlength{\phaserulewidth}}
\newcommand{\phase}[1]{%
    \vspace{-1.25ex}\Statex\leavevmode\llap{\hdashrule{\dimexpr\labelwidth+\labelsep}{\phaserulewidth}{4\phaserulewidth}}\hdashrule{\linewidth}{\phaserulewidth}{4\phaserulewidth}
    \Statex\strut\refstepcounter{phase}
    \vspace{-2.75ex}
    }
\begin{document}

\title{Memorization With Neural Nets:\\Going Beyond the Worst Case}

\author{\name Sjoerd Dirksen \email s.dirksen@uu.nl \\
       \addr Mathematical Institute\\
       Utrecht University\\
       3584 CD Utrecht, Netherlands
       \AND
       \name Patrick Finke \email p.g.finke@uu.nl \\
       \addr Mathematical Institute\\
       Utrecht University\\
       3584 CD Utrecht, Netherlands
       \AND
       \name Martin Genzel\thanks{Work done while at Utrecht University.} \email martin.genzel@merantix-momentum.com \\
       \addr Merantix Momentum GmbH\\
       13355 Berlin, Germany}
       
\editor{Mahdi Soltanolkotabi}

\maketitle

\begin{abstract}
    In practice, deep neural networks are often able to easily interpolate their training data. To understand this phenomenon, many works have aimed to quantify the memorization capacity of a neural network architecture: the largest number of points such that the architecture can interpolate any placement of these points with any assignment of labels. For real-world data, however, one intuitively expects the presence of a benign structure so that interpolation already occurs at a smaller network size than suggested by memorization capacity.
    In this paper, we investigate interpolation by adopting an instance-specific viewpoint. We introduce a simple randomized algorithm that, given a fixed finite data set with two classes, with high probability constructs an interpolating three-layer neural network in polynomial time. The required number of parameters is linked to geometric properties of the two classes and their mutual arrangement. As a result, we obtain guarantees that are independent of the number of samples and hence move beyond worst-case memorization capacity bounds. We verify our theoretical result with numerical experiments and additionally investigate the effectiveness of the algorithm on MNIST and CIFAR-10.
\end{abstract}

\begin{keywords}
  memorization, interpolation, neural networks, random hyperplane tessellations, high-dimensional geometry
\end{keywords}

\section{Introduction}
\label{sec:introduction}

The \emph{bias-variance tradeoff}~\citep{shalev2014understanding, hastie2009elements} has been a cornerstone of classical machine learning theory that illustrates the relationship between the bias of a model and its variance, and how they affect its generalization performance. It states that if the model is too simple (high bias), it may underfit as it does not capture the underlying patterns in the data. However, if it is too complex (high variance), it may overfit noise in the training data and fail to generalize well. The resulting conventional wisdom was to adjust the model complexity to achieve a balance between underfitting and overfitting, which would then lead to good generalization.

This classical viewpoint has been uprooted by modern practice in deep learning, where it is common to use heavily overparameterized neural networks that fit the used training data (almost) perfectly. In spite of this \mbox{(near-)perfect} fit, these models can generalize well to new data. In fact, it can be observed that as the model complexity increases, the test error first decreases, then increases (as predicted by the bias-variance trade-off), and then decreases again. This phenomenon, coined the \emph{double descent phenomenon}~\citep{Belkin_2019}, is well documented not only for deep neural networks but for a wide range of machine learning methods, see, \eg, \cite{Belkin_2019,Bel21, nakkiran2019deep,mei2020generalization, hastie2020surprises}. The second descent of the test error is observed at the \emph{interpolation threshold}, where the model has become complex enough to interpolate the training samples. Thus, to gain a deeper understanding of double descent it is important to identify at which size a neural network can interpolate finitely many samples.

To determine the interpolation threshold, we may look at the literature on the \emph{memorization capacity} of neural networks, which quantifies the number of parameters and neurons necessary for a network to be able to interpolate \emph{any} $N$ data points with \emph{arbitrary} labels. Thus, memorization capacity offers a worst-case quantitative analysis of the interpolation threshold. In this analysis, `the network architecture comes first and the data comes later'. As a result, the required network complexity for memorization scales in terms of the number of training data (see Section~\ref{sec:related-works} for more details). In practical applications, however, `the data comes first and the network architecture comes later': the neural network architecture and size are tuned to given training data via cross-validation. Intuitively, one expects that the training data possesses some `nice' structure so that interpolation is achievable with a smaller network complexity than suggested by memorization capacity---which assumes arbitrary data and arbitrary labels.

In this paper, we investigate interpolation by adopting an instance-specific viewpoint. We introduce a simple randomized algorithm that, given a fixed finite data set with two classes, with high probability constructs an interpolating neural network in polynomial time, see Theorem~\ref{thm:main-result-informal}. We then link the required number of parameters to the \emph{mutual complexity} of the data set, which depends on both the geometric properties of two data classes as well as their mutual arrangement. As a result, we obtain guarantees that are independent of the number of samples and instead yield a `problem-adaptive' bound on the interpolation threshold. Finally, we carry out numerical simulation experiments to illustrate our theoretical result. In addition, we investigate the effectiveness of our interpolation algorithm on MNIST and CIFAR-10.

\subsection{Summary of Results}
\label{sec:summary}

Let us first formalize the concept of interpolation in a classification setting with two classes. The setting with binary labels is considered for simplicity---our results can be readily extended to multiple classes by a one-versus-many approach (see Section~\ref{sec:experiments/multiclass-classification} for details). In the following, $\set{X}^-, \set{X}^+ \subset R \B_2^d$ denote disjoint and finite sets, representing two classes of objects, where $\B_2^d$ denotes the unit Euclidean ball in $\R^d$.

\begin{definition}[Interpolation]\label{prob:interpolation}
    We say that a classification function $F\colon \R^d \to \{\pm 1\}$ \emph{interpolates} $\set{X}^-$ and $\set{X}^+$ if, for all $\vec{x}^- \in \set{X}^-$ and $\vec{x}^+ \in \set{X}^+$,
    \begin{equation*}
        F(\vec{x}^-) = -1
        \quad\text{and}\quad
        F(\vec{x}^+) = +1.
    \end{equation*}
\end{definition}

In this work, we will formulate a concrete, randomized algorithm that takes $\set{X}^-$ and $\set{X}^+$ as inputs and produces an interpolating neural net as an output (Algorithm~\ref{alg:pruning}). As the statement of the algorithm requires some technical preparation, we postpone its discussion to Section~\ref{sec:mainResults}. Our main result, informally stated as  Theorem~\ref{thm:main-result-informal} below and developed in full detail in Section~\ref{sec:mainResults}, shows that this algorithm succeeds with high probability in polynomial time and provides bounds on the size of the interpolating network. The bounds are phrased in terms of two structural assumptions on the data, that together quantify the difficulty of the interpolation problem.

First, we will assume that the classes are \emph{$\delta$-separated}. This assumption is also common in a number of works on memorization capacity, \eg,~\cite{vershynin2020memory,rajput2021exponential,vardi2021optimal}. Below we will write, for any sets $\set{A}, \set{B} \subset \R^d$,
\begin{equation*}
    \dist(\vec{a},\set{B}) = \inf_{\vec{b}\in \set{B}} \|\vec{a}-\vec{b}\|_2, \qquad \dist(\set{A},\set{B}) = \inf_{\vec{a}\in \set{A}} \dist(\vec{a},\set{B}).
\end{equation*}

\begin{definition}[$\delta$-separation]\label{asu:separation}
    $\set{A}$ and $\set{B}$ are \emph{$\delta$-separated} if $\dist(\set{A},\set{B})\geq \delta$.
\end{definition}

Second, we will quantify the problem difficulty using the following notion that was first introduced in \cite{dirksen2022separation} (in a slightly different form). 

\begin{definition}[Mutual covering]\label{asu:mutual-covering}
    We call 
    \begin{equation*}
    \begin{aligned}
        \set{C}^- & = \{\vec{c}_1^-, \dots, \vec{c}_{M^-}^-\} \subset \set{X}^-, & r_1^-, \dots, r_{M^-}^- & \geq 0,\\
        \set{C}^+ & = \{\vec{c}_1^+, \dots, \vec{c}_{M^+}^+\} \subset \set{X}^+, & r_1^+, \dots, r_{M^+}^+ & \geq 0
    \end{aligned}
    \end{equation*}
    a \emph{mutual covering} for $\set{X}^-$ and $\set{X}^+$ if the sets
    \begin{equation*}
        \set{X}_\ell^- \coloneqq \set{X}^- \cap \B_2^d(\vec{c}_\ell^-, r_\ell^-)
        \quad\text{and}\quad
        \set{X}_j^+ \coloneqq \set{X}^+ \cap \B_2^d(\vec{c}_j^+, r_j^+),
    \end{equation*}
    for $\ell \in \range{M^-}$ and $j \in \range{M^+}$, cover $\set{X}^-$ and $\set{X}^+$, respectively. We call these sets the \emph{components} of the mutual covering and call $M^-$ and $M^+$ the \emph{mutual covering numbers}. 
\end{definition}

\begin{figure}[t]
    \centering
    \includegraphics[width=0.85\linewidth]{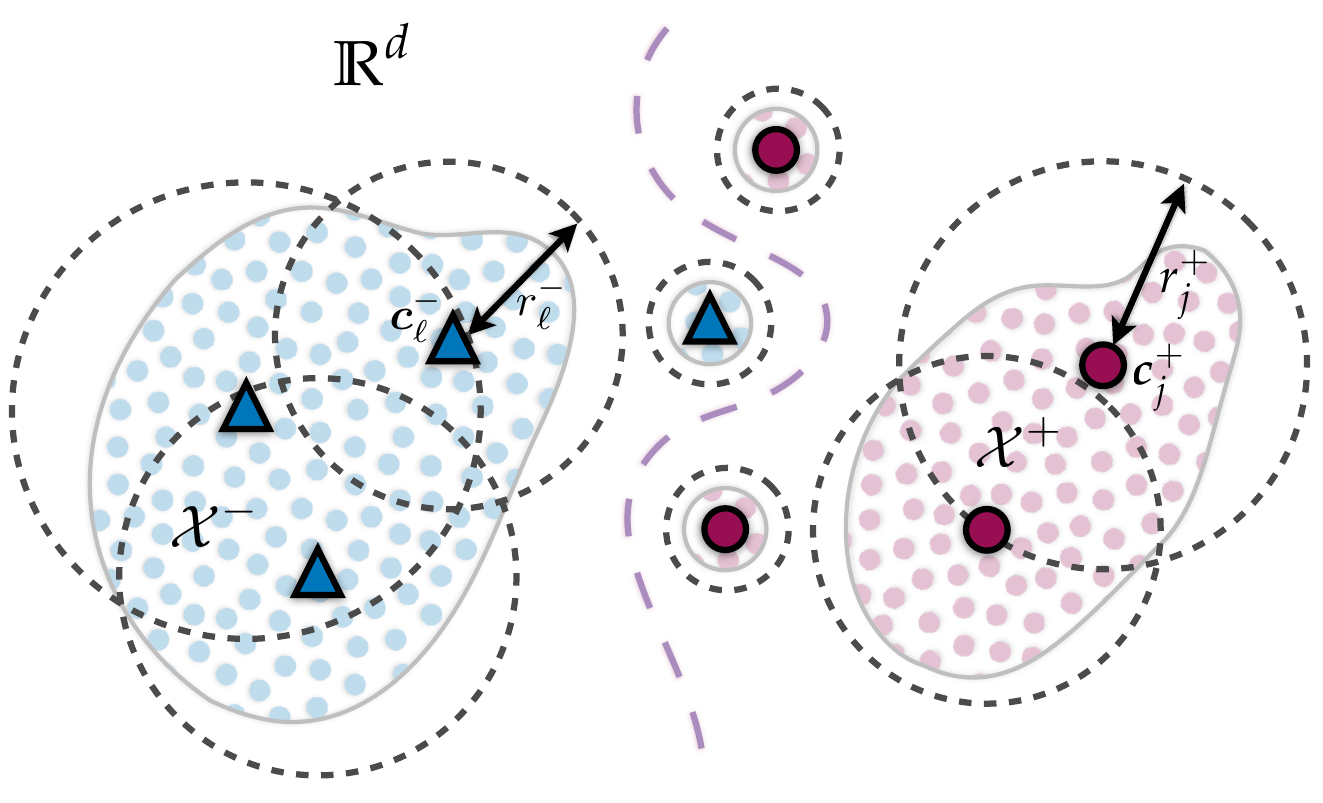}
    \caption{\textbf{The mutual covering is `problem-adaptive'.} Condition~\eqref{eq:condition-radius} on the radii in Theorem~\ref{thm:main-result-informal} allows a covering `adapted to' the mutual arrangement of the data: only the parts of the data that lie close to the ideal decision boundary need to be covered using balls with small diameters---other parts can be crudely covered using larger balls.}
    \label{fig:mutual-covering}
\end{figure}

As we have only finitely many inputs, clearly a mutual covering always exists. However, if the arrangement of the classes is benign, the mutual covering numbers can be much smaller than the number of samples. 

To see how the notion of mutual covering allows us to quantify the difficulty of a (binary) interpolation problem, we turn to our main result. In Theorem~\ref{thm:main-result-informal} we require the existence of a mutual covering with radii
\begin{equation}\label{eq:condition-radius}
    r_\ell^- \lesssim \frac{\dist(\vec{c}_\ell^-, \set{C}^+)}{\log^{1/2}(e R / \dist(\vec{c}_\ell^-, \set{C}^+))}
    \quad\text{and}\quad
    r_j^+ \lesssim \frac{\dist(\vec{c}_j^+, \set{C}^-)}{\log^{1/2}(e R / \dist(\vec{c}_j^+, \set{C}^-))}.
\end{equation}
Geometrically, this means that the components covering $\set{X}^-$ and $\set{X}^+$ cannot intersect (more precisely, need to be slightly separated from) the ideal decision boundary between the two sets, as is illustrated in Figure~\ref{fig:mutual-covering}. In particular, components with a small radius are only needed close to the ideal decision boundary, while parts that are far away from this boundary can be crudely covered with large components. Compared to classical coverings with balls of a fixed radius \citep[as used in the classical notion of the Euclidean covering number of a set, see, \eg,][]{vershynin2018high}, this can drastically reduce the required number of components.

While the mutual covering numbers $M^-$ and $M^+$ can be viewed as a measure of the \emph{global complexity} of the data, our result also involves the \emph{local complexity}, measured by the `sizes' of the components. Specifically, define $\omega \coloneqq \max\{\omega^-, \omega^+\}$ where
\begin{equation}\label{eq:local-complexity-params}
    \omega^- \coloneqq \max_{\ell \in \range{M^-}} \frac{\gw^2(\set{X}_\ell^- - \vec{c}_\ell^-)}{\dist^3(\vec{c}_\ell^-, \set{C}^+)}
    \quad\text{and}\quad
    \omega^+ \coloneqq \max_{j \in \range{M^+}} \frac{\gw^2(\set{X}_j^+ - \vec{c}_j^+)}{\dist^3(\vec{c}_j^+, \set{C}^-)}.
\end{equation}
The quantities $\omega^-$ and $\omega^+$ measure the scaled version of the `size' of the largest (centered) component of $\set{X}^-$ and $\set{X}^+$, respectively. Here, the \emph{Gaussian mean width} of a set $\set{A} \subset \R^d$ is defined as
\begin{equation*}
    \gw(\set{A}) \coloneqq \E \sup_{\vec{x} \in \set{A}} | \langle \vec{g}, \vec{x} \rangle | ,
\end{equation*}
where $\vec{g} \sim \Normal(\vec{0}, \mat{I}_d)$ denotes a standard Gaussian random vector. The mean width is a well-established complexity measure in high-dimensional statistics and geometry which is sensitive to low-dimensional structures such as sparsity, unions of low-dimensional subspaces, or manifolds, see, \eg, \cite{vershynin2018high} for a detailed discussion and examples. We refer to Remark~\ref{rem:omegaEst} for straightforward estimates of $\omega$.

We are now ready to present the informal version of our main result, which we state for the case of threshold activations. Intuitively, this should be the most challenging because the signal amplitude is lost. It is possible to prove analogous results for other activations. In fact, Algorithm~\ref{alg:pruning} only requires an activation function $\sigma$ such that $\sigma(t) = 0$ for $t \leq 0$ and $\sigma(t) > 0$ for $t > 0$, which, \eg, includes the ReLU. Note, however, that the bounds on the network size may change for different activations.

\begin{theorem}[Informal]\label{thm:main-result-informal}
    Let $\set{X}^-, \set{X}^+ \subset R\B_2^d$ be finite and disjoint. Suppose that there is a mutual covering with $\delta$-separated centers and radii satisfying \eqref{eq:condition-radius}. Then, with high probability, Algorithm~\ref{alg:pruning} terminates in polynomial time and outputs a $2$-hidden-layer fully-connected neural network with threshold activations,
    \begin{equation*}
        \O\left( M^- + R\delta^{-1} \log(2 M^- M^+) + R \omega \right)
    \end{equation*}
    neurons and
    \begin{equation*}
        \O\left( R (d + M^-) (\delta^{-1} \log(2 M^- M^+) + \omega ) \right)
    \end{equation*}
    parameters, that interpolates $\set{X}^-$ and $\set{X}^+$.
\end{theorem}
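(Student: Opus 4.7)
The plan is to build the interpolating network explicitly in three pieces, following a random hyperplane tessellation strategy in the spirit of~\cite{dirksen2022separation}. First I would populate the first hidden layer with $m$ threshold neurons that implement a random affine hyperplane tessellation of $R\B_2^d$: draw i.i.d.\ directions $\vec{g}_1,\ldots,\vec{g}_m \sim \Normal(\vec{0},\mat{I}_d)$ and shifts $\tau_1,\ldots,\tau_m \sim \Unif[-R,R]$, and set the $i$-th first-layer neuron to be $\vec{x}\mapsto \1\{\langle \vec{g}_i,\vec{x}\rangle > \tau_i\}$. Each input then receives a binary hash $\vec{h}(\vec{x}) \in \{0,1\}^m$ recording on which side of each hyperplane it lies. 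In the second hidden layer I would place one threshold neuron per negative component $\vec{c}_\ell^-$, which fires exactly when $\vec{h}(\vec{x})$ agrees with $\vec{h}(\vec{c}_\ell^-)$ on a carefully chosen subset of coordinates; this is a conjunction of first-layer outputs, realised by a single threshold neuron. The output neuron is a disjunction of these indicators and returns $-1$ if any negative indicator fires and $+1$ otherwise.

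For this construction to interpolate, the tessellation must satisfy two properties simultaneously with high probability over the random hyperplanes: \emph{global separation}, meaning that for every pair $(\ell,j)$ at least one hyperplane strictly separates $\vec{c}_\ell^-$ from $\vec{c}_j^+$ with a positive margin; and \emph{local consistency}, meaning that for every $\ell$ and every $\vec{x}\in\set{X}_\ell^-$, the hash $\vec{h}(\vec{x})$ agrees with $\vec{h}(\vec{c}_\ell^-)$ on all hyperplanes witnessing the global separation against the positive centers (and symmetrically for positive components). Given these two properties, the $\ell$-th indicator of the second hidden layer can simply AND together those hyperplanes that separate $\vec{c}_\ell^-$ from each positive center; it then fires on every point of $\set{X}_\ell^-$ but on no point of $\set{X}^+$, and the OR at the output yields the claimed interpolant.

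The core technical step is a quantitative random hyperplane tessellation estimate that delivers both properties at once. For global separation, a single hyperplane with uniform shift separates a fixed pair at Euclidean distance $d$ with probability of order $d/R$, so $m \gtrsim R\delta^{-1}\log(M^-M^+)$ hyperplanes suffice by a union bound over the at most $M^-M^+$ pairs. For local consistency, a Gaussian mean-width deviation estimate in the style of~\cite{dirksen2022separation} shows that the number of hyperplanes which wrongly separate $\vec{c}_\ell^-$ from some $\vec{x}\in\set{X}_\ell^-$ is, with high probability, of order $m\,\gw(\set{X}_\ell^- - \vec{c}_\ell^-)/R$ plus lower-order terms. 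Combining this with condition~\eqref{eq:condition-radius} and the definition~\eqref{eq:local-complexity-params} of $\omega$, the choice $m = \O(R\delta^{-1}\log(2 M^-M^+) + R\omega)$ guarantees that, for every $\ell$, strictly more hyperplanes correctly separate $\vec{c}_\ell^-$ from each $\vec{c}_j^+$ than are spoilt by the local dispersion of $\set{X}_\ell^-$ (and analogously when the roles are swapped). A valid AND-mask therefore exists for each second-layer neuron, and Algorithm~\ref{alg:pruning} amounts to identifying it greedily (the ``pruning'' step), which runs in polynomial time.

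The hard part is to balance the two regimes so that they coexist in a single tessellation of size $m$. The $\log^{1/2}$ denominator in~\eqref{eq:condition-radius} is calibrated precisely so that, at the local scale $\dist(\vec{c}_\ell^-,\set{C}^+)$, the local error rate contributed by the component of radius $r_\ell^-$ stays strictly below the probability of correct global separation at that same scale; matching these scales is what removes the explicit dependence on $N = |\set{X}^-|+|\set{X}^+|$ from the final bound. Once the tessellation is fixed, the parameter count is routine: the first hidden layer contributes $(d+1)m$ parameters, each of the $M^-$ second-layer neurons has at most $m+1$ inputs, and the output layer adds $M^-+1$, which aggregate to the stated $\O(R(d+M^-)(\delta^{-1}\log(2M^-M^+)+\omega))$ bound, while the neuron count is $M^- + m + 1 = \O(M^- + R\delta^{-1}\log(2M^-M^+) + R\omega)$.
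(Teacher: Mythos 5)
Your plan tracks the paper's strategy closely — a random affine hyperplane tessellation in the first layer, one second-layer neuron per negative component, a count comparison between separating and ``spoilt'' hyperplanes, and a Gaussian mean-width estimate calibrated against condition~\eqref{eq:condition-radius} so that the bound is free of $N$ — and the final neuron/parameter accounting matches. The place where a genuine gap opens is the second-layer construction. Your neurons implement a \emph{hard conjunction} over a curated subset $S_\ell$ of unspoilt separating hyperplanes (an AND-mask), and you would have to explicitly select $S_\ell$ by avoiding spoilt hyperplanes. Algorithm~\ref{alg:pruning} builds a different neuron: for a selected sample $\vec{x}_*^-$ it uses the \emph{full} zero-mask $\vec{u}_{\vec{x}_*^-}=\1[\Phi(\vec{x}_*^-)=\vec{0}]$ (no curated subset) and the data-dependent margin $m_{\vec{x}_*^-}=\min_{\vec{x}^+\in\set{X}^+}\langle\vec{u}_{\vec{x}_*^-},\Phi(\vec{x}^+)\rangle$, and fires when the \emph{count} of one-sided disagreements with $\vec{x}_*^-$ falls below that margin, not when a fixed set of hyperplanes all agree. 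Moreover the forward selection in the algorithm ranges over reference points $\vec{x}_*^-$, not over hyperplane subsets, so it cannot ``greedily identify'' your AND-mask. Because Theorem~\ref{thm:main-result-informal} asserts a property of the output of Algorithm~\ref{alg:pruning}, your argument as written proves a weaker existence statement (closer to Corollary~\ref{cor:support-cover-informal}) but does not establish the theorem.

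To close the gap you should analyze the algorithm's actual count-threshold neuron. The paper's Lemma~\ref{lem:pruning/lemma} does this by combining (i) a Chernoff-type quantitative lower bound $\langle\vec{u}_{\vec{c}_\ell^-},\Phi(\vec{c}_j^+)\rangle\gtrsim n\,\|\vec{c}_\ell^- - \vec{c}_j^+\|_2/\lambda$ (Lemma~\ref{lem:Thres_NN_finite_1st_layer}, which is strictly stronger than the one-hyperplane union bound you invoke for ``global separation,'' though your later ``strictly more hyperplanes'' phrasing shows you realized a count is needed) with (ii) a triangle inequality on $\|\Phi(\cdot)-\Phi(\vec{c}_\ell^-)\|_1$ and $\|\Phi(\cdot)-\Phi(\vec{c}_j^+)\|_1$ controlled by the tessellation deviation estimate you cite (Lemma~\ref{lem:signPert}). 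Together these show $\hat{\varphi}_{\vec{x}_*^-}$ is already a robust separator for the entire component containing $\vec{x}_*^-$, which is exactly what makes the algorithm's forward selection terminate after at most $M^-$ steps. You have the right ingredients but plug them into the wrong neuron.
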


A first interesting feature of this result is the asymmetric dependence on the complexities of the classes: the network size depends linearly on $M^-$ but only logarithmically on $M^+$. As it is possible to interchange the roles of $\set{X}^-$ and $\set{X}^+$, we may always think of $\set{X}^-$ as the `smaller' set. Second, our bounds are independent of the number of samples. This is a fundamental difference between memorization capacity and our instance-specific approach to interpolation, see the discussion in Section~\ref{sec:related-works}. To highlight this second point further, we deduce an interpolation result for infinite sets from our analysis. In contrast to Theorem~\ref{thm:main-result-informal}, the proof is nonconstructive.

\begin{corollary}
    \label{cor:support-cover-informal}
    Let $\set{X}^-, \set{X}^+ \subset R \B_2^d$ be (possibly infinite) sets. Suppose that there is a mutual covering with $\delta$-separated centers and radii satisfying~\eqref{eq:condition-radius}. Then, there exists a neural network of the same size as in Theorem~\ref{thm:main-result-informal} that interpolates $\set{X}^-$ and $\set{X}^+$.
\end{corollary}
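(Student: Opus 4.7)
My plan is to derive the corollary nonconstructively from the probabilistic argument underlying Theorem~\ref{thm:main-result-informal}. Algorithm~\ref{alg:pruning} consumes the explicit samples and so cannot literally be executed on an infinite input, but the architecture size it produces is governed entirely by quantities intrinsic to the mutual covering, namely $M^\pm$, $\delta$, $R$, and $\omega$, all of which remain finite and meaningful for possibly infinite bounded sets. I would therefore re-read the proof of the theorem as an existence statement and transport it verbatim.

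Concretely, the proof of Theorem~\ref{thm:main-result-informal} draws the first-layer hyperplanes from a Gaussian distribution and then specifies the remaining layers deterministically from the centers $\set{C}^\pm$ and the resulting tessellation. The failure event splits into two kinds of bad events: random hyperplanes failing to separate pairs of centers, which is controlled by their pairwise distances and $\delta$; and points in a component $\set{X}_\ell^-$ falling on the wrong side of a hyperplane tailored to the center $\vec{c}_\ell^-$, a random hyperplane tessellation event controlled by $\gw(\set{X}_\ell^- - \vec{c}_\ell^-)$ (symmetrically for $\set{X}^+$). The Gaussian-process estimates underlying these bounds do not use finiteness of the components, only boundedness and measurability of the indexing set. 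Since the components lie in $R\B_2^d$ and the relevant suprema reduce to separable Gaussian processes via a countable dense subset, the same high-probability success bound holds for infinite $\set{X}^\pm$, and the probabilistic method then yields a deterministic realization of the Gaussian weights for which the constructed network of the stated size interpolates $\set{X}^-$ and $\set{X}^+$.

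The main obstacle I expect is reconciling Algorithm~\ref{alg:pruning}'s pruning step with the nonconstructive argument: the stated neuron and parameter counts are phrased for the post-pruning network, so I need to verify that the pre-pruning random construction, which is all that the probabilistic argument produces, already achieves the same asymptotic bound up to constants. If it does not, a fallback is to apply the theorem to an exhausting sequence of finite subsets of $\set{X}^\pm$ whose union is dense in each class, and extract a limiting realization of the Gaussian weights by compactness after normalizing each hidden-layer weight vector. This fallback is more delicate under threshold activations because pointwise convergence of parameters need not imply pointwise convergence of the network output, so the extraction would pass to a subsequence along which the sign pattern of the first-layer tessellation stabilizes on each countable dense sample; a diagonal argument then yields a single realization that is correct on every point of $\set{X}^\pm$.
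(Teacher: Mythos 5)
Your central observation is correct: the probabilistic guarantees behind Lemma~\ref{lem:pruning/lemma} rest on Lemma~\ref{lem:signPert}, whose Gaussian-mean-width bounds do not use finiteness of the components, so the high-probability success event extends verbatim to infinite $\set{X}^\pm$. The gap is in how you account for the second-layer width. You propose to check whether the ``pre-pruning'' construction --- one neuron per $\vec{x}^-\in\set{X}^-$ --- already meets the $M^-$ bound; that branch cannot work, since for infinite $\set{X}^-$ the pre-pruning network has infinitely many neurons, and there is no probabilistic event that collapses it. Your compactness/diagonalization fallback is a way around this, but as you note it is technically delicate (threshold activations do not pass nicely to limits), and it is not needed.

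The paper avoids the issue entirely by changing what the algorithm is fed rather than reinterpreting its output: apply Algorithm~\ref{alg:pruning} to the \emph{finite} center sets $\set{C}^-$ and $\set{C}^+$ (with $n_{\min}$ as in Theorem~\ref{thm:pruning/covering-bound}), the sole modification being that each second-layer bias is computed as $m_{\vec{c}_\ell^-}=\inf_{\vec{x}^+\in\set{X}^+}\langle\vec{u}_{\vec{c}_\ell^-},\Phi(\vec{x}^+)\rangle$ over all of $\set{X}^+$ rather than over $\set{C}^+$ (well-defined and attained because, under threshold activations, these inner products take values in $\{0,\dots,n\}$). This is a genuine finite procedure, its second loop trivially outputs at most $M^-$ neurons, and inspecting the proof of Lemma~\ref{lem:pruning/lemma} --- with infinite components, exactly as you argued --- shows the resulting network interpolates all of $\set{X}^\pm$ with positive probability, which by the probabilistic method gives the deterministic existence statement. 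So the right move is not to worry about the pruning on $\set{X}^-$ but to never run the algorithm on $\set{X}^-$ at all.
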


\subsection{Organization}

The rest of the paper is organized as follows. In Section~\ref{sec:related-works} we discuss related works, then introduce notation in Section~\ref{sec:notation}. In Section~\ref{sec:mainResults} we present Algorithm~\ref{alg:pruning} and give intuition on how it works. We also state the formal counterpart of our main result in Theorem~\ref{thm:pruning/covering-bound}. All proofs are contained in Section~\ref{sec:proofs}. In Section~\ref{sec:numerical-experiments}, we verify our theoretical findings in illustrative numerical experiments and additionally investigate the performance of our algorithm on real data sets. We conclude with a short summary in Section~\ref{sec:conclusion}.

\subsection{Related Works}
\label{sec:related-works}

\paragraph{Memorization capacity.}

Neural network architectures used in practice are powerful memorizers: it has been observed that various popular architectures for image classification do not only interpolate their training data, but can even interpolate this data when the labels are replaced by random labels (after re-training), see, \eg, \citet{zhang2017understanding}. To understand this phenomenon, an extensive literature has studied the memorization capacity of neural networks, by quantifying how large a network needs to be to interpolate any $N$ points with \emph{arbitrary labels}. In this case, we will say that the network can memorize $N$ points. In practice, memorization results often include some assumptions on the inputs. Here we will summarize relevant memorization literature that makes similar structural assumptions on the inputs, such as $\delta$-separation or a bound on the norm. Other works consider randomized samples or samples drawn from a distribution, see, \eg, \citet{GWZ19,Dan20,ZZH21}.

The study of the memorization capacity of neural networks with threshold activations has a rich history. Assuming that the points are in general position,\footnote{A set of $N$ points in $\R^d$ is said to be in general position if any subset of $d$ vectors is linearly independent.} \cite{baum1988capabilities} showed that a 1-hidden-layer threshold network with $\O(N + d)$ parameters and $\O(\lceil N / d \rceil)$ neurons is enough to memorize binary labels of $N$ points in $\R^d$. In \cite{huang1990bounds} it was shown that $\O(Nd)$ parameters and $\O(N)$ neurons are enough to memorize real labels, without placing any additional constraints on the points. Assuming that the points are $\delta$-separated and lie on the unit sphere, \cite{vershynin2020memory} proved that a deep threshold (or ReLU) network can memorize binary labels using $\widetilde{\O}(e^{1/\delta^2} (d + \sqrt{N}) + N)$ parameters and $\widetilde{\O}(e^{1/\delta^2} + \sqrt{N})$ neurons. The exponential dependence on $\delta$ was improved by \cite{rajput2021exponential}, who proved that $\widetilde{\O}(d/\delta + N)$ parameters and $\widetilde{\O}(1/\delta + \sqrt{N})$ neurons are enough for memorization of binary labels, while further only requiring bounded norm instead of unit norm. The constructions of both \cite{vershynin2020memory} and \cite{rajput2021exponential} are probabilistic, while the ones of \cite{baum1988capabilities} and  \cite{huang1990bounds} are purely deterministic.

There have been a number of works on the memorization capacity of networks with other activations. We will only summarize the results for ReLU activations due to its popularity in practice, and refer to, \eg, \cite{Hua03,park2021provable,MaT23} and the references therein for other activations. The work \cite{bubeck2020network} extended the result of \cite{baum1988capabilities} to the case of real-valued labels using a network with ReLU activation with a size of the same order. Using weight sharing in the first layer, \cite{zhang2017understanding} showed that a 1-hidden-layer ReLU network could memorize real-valued labels using $\O(N + d)$ parameters and $\O(N)$ neurons, with no further assumptions on the points. \cite{yun2019small} proved that both multi-class and real-valued labels can be memorized by a ReLU net with two and three hidden layers, respectively, using $\O(d \sqrt{N} + N)$ parameters and $\O(\sqrt{N})$ neurons. \cite{park2021provable} achieved the first result on memorization with a sublinear number of parameters: assuming that the points are separated, they showed that ReLU (or hard-tanh) nets can memorize multiple classes using $\widetilde{\O}(d + N^{2/3})$ parameters, constant width and $\widetilde{\O}(N^{2/3})$ layers. \cite{vardi2021optimal} improved the above dependence on $N$ from $N^{2/3}$ to $\sqrt{N}$, which is optimal. Specifically, assuming that the points are $\delta$-separated and have bounded norm, they show that a ReLU net with $\widetilde{\O}(d + \sqrt{N})$ parameters, constant width and $\widetilde{\O}(\sqrt{N})$ layers is enough to memorize multi-class labels.

To directly compare the above with our results, we consider a \emph{trivial} mutual covering that always `works' regardless of the labels of the points: we cover each point by its own component with a radius of zero. Thus, $M^- = N^- \coloneqq |\set{X}^-|$, $M^+ = N^+ \coloneqq |\set{X}^+|$ and $\omega = 0$. Hence, in the worst case Theorem~\ref{thm:main-result-informal} yields a network with $\O(R (d + N^-) \delta^{-1} \log(2 N^- N^+))$ parameters and $\O\left(N^- + R\delta^{-1} \log(2 N^- N^+)\right)$ neurons. If $N^- \simeq N^+$, the number of neurons scales (slightly worse than) linear in the number of points, which is worse than the best result on memorization capacity for networks using the threshold activation. In Proposition~\ref{prop:pruning/sharpness-of-upper-bound} we show that the linear scaling in terms of $M^-$ in Theorem~\ref{thm:main-result-informal} is not a proof artifact. Hence, our method cannot recover optimal performance in the worst case. It is an interesting open question whether our method can be modified to achieve this. 

Nevertheless, in practical situations one can hope that a much better mutual covering exists, due to intrinsic low-dimensional structure of the input data and/or a more benign label assignment than arbitrary labelling. In such cases Theorem~\ref{thm:main-result-informal} can guarantee a much smaller interpolating network. In particular, since our bounds are independent of the number of samples we can derive interpolation results for infinite sets (Corollary~\ref{cor:support-cover-informal}). In contrast, results on memorization capacity cannot have this feature. The VC-dimension\footnote{The VC-dimension is the maximal $N$ for which there exist points $\vec{x}_1, \dots, \vec{x}_N \in \R^d$ such that for every assignment of labels $y_1, \dots, y_N \in \{\pm 1\}$ there exists a set of parameters $\theta$ such that the network interpolates the samples, \ie, $F_\theta(x_i) = y_i$ for all $i \in \range{N}$.} of feed-forward neural networks with threshold activation is $\O(W \log W)$ \citep{Baum1989WhatSN}, where $W$ denotes the total number of parameters, \ie, the sum of the number of weights and biases over all layers. Hence, to memorize more samples than this upper bound, one would necessarily need to add more parameters to the network. Similar results hold for arbitrary piecewise linear activations such as the ReLU~\citep{bartlett2017nearlytight} or analytic definable activation functions~\citep{sontag1997shattering}.

Upon acceptance of this paper, the work of \cite{lee2024defining} was pointed out to us by one of the reviewers, which bears some similarities to ours. It introduces the concept of a polytope-basis cover of a dataset of two classes. They show that if this basis is known, then an interpolating three-layer fully-connected ReLU network can be associated to such a cover. They then provide upper (and some lower) bound on the network width sufficient for the existence of an interpolating net if the data is a convex polytope, structured as a simplicial complex, or can be covered by the difference of prismatic polytopes. While their theoretical guarantees are pure existence results, they also introduce a number of heuristic algorithms that yield small near-interpolating ReLU nets on, e.g., MNIST and CIFAR10. These methods are only guaranteed to terminate and, in contrast to our work, no guarantees are derived on the runtime, size of the network, and interpolation success.

\paragraph{Separation capacity.}

Related to interpolation is the question of \emph{separation capacity} of a neural network: under what conditions can a neural network make two (not necessarily finite) classes linearly separable? Obviously, a network with separation capacity can be extended to an interpolating network by adding the separating hyperplane as an additional layer.

In \cite{an2015can} it was shown that any two disjoint sets can be made linearly separable using a deterministic two-layer ReLU neural net. However, their proof is non-constructive and they provided no estimates on the size of the network. Inspired by this work, \cite{dirksen2022separation} showed that a wide enough two-layer random ReLU network can make any two $\delta$-separated sets linearly separable if the weights and biases are chosen from appropriate distributions. Unlike the existence result of \cite{an2015can}, they provided bounds linking the number of required neurons to geometric properties of the classes and their mutual arrangement via a notion of mutual covering similar to Definition~\ref{asu:mutual-covering}. This instance-specific viewpoint allows them to overcome the curse of dimensionality if the data carries a low-complexity structure. Following up on this, \cite{ghosal2022randomly} showed that even a wide enough one-layer ReLU net is enough to accomplish separation. They introduced a deterministic memorization algorithm which is then `implemented' by a random neural network. As \cite{dirksen2022separation} they also used a mutual covering to capture the complexity of the data.

While the above results could be applied to interpolation, the required number of parameters would be larger than what we require in Theorem~\ref{thm:main-result-informal}. Both \cite{dirksen2022separation} and \cite{ghosal2022randomly} yield networks scaling polynomially in terms of the mutual covering numbers, while our network scales only linearly.

The present paper is strongly influenced by \cite{dirksen2022separation}---we adopt an instance-specific viewpoint and the notion of mutual covering. However, instead of separation, we directly focus on interpolation. Together with our only partially randomized approach, this allows us to prove better bounds for this case.

\paragraph{Random hyperplane tesselations.}

As will become apparent below, our technical analysis is linked to tessellations created by random hyperplanes with Gaussian directions and uniformly distributed shifts, which were recently intensively studied in \cite{DiM21,DMS22}. In particular, \cite{DMS22} derived a sharp bound on the number of hyperplanes needed to induce a uniform tessellation of a given set, meaning that the Euclidean distance between any two points in the set corresponds to the fraction of hyperplanes separating them up to a prespecified error. We will use some insights from these works, see in particular Lemma~\ref{lem:signPert}.

\subsection{Setup and Notation}
\label{sec:notation}

For any $1\leq p\leq \infty$ we let $\norm{\cdot}_p$ denote the $\ell_p$ norm. We use $\B_2^d(\vec{c}, r)$ to denote the Euclidean ball in $\R^d$ with center $\vec{c} \in \R^d$ and radius $r \geq 0$ and we denote the unit ball by $\B_2^d$. For $n \in \N$, we set $\range{n} \coloneqq \{1, \dots, n\}$. For any set $\set{A}$ we use $|\set{A}|$ to denote its cardinality and let $\1_{\set{A}}$ denote its indicator. We let $\sign$ denote the function
\begin{equation*}
    \sign(x)=\begin{cases}
    +1 & \text{if } x\geq 0,\\
    -1 & \text{else}.
    \end{cases}
\end{equation*}
For a function $\sigma \colon \R \to \R$ and a vector $\vec{x} \in \R^d$ we denote the element-wise application by $\sigma(\vec{x}) = (\sigma(x_i))_{i=1}^d$. If an equality holds up to an absolute constant $C$, we write $A \gtrsim B$ instead of $A \geq C \cdot B$. We write $A \simeq B$ if $A \gtrsim B \gtrsim A$. We use $\O(\, \cdot \,)$ to omit constant terms and $\widetilde{\O}(\, \cdot \,)$ to additionally omit logarithmic terms. We define the distance between any point $\vec{x} \in \R^d$ and a set $\set{X} \subset \R^d$ as $\dist(\vec{x}, \set{X}) \coloneqq \inf \{ \norm{\vec{x} - \vec{y}}_{2} : \vec{y} \in \set{X} \}$. We denote the hyperplane with direction $\vec{v} \in \R^d$ and shift $\tau \in \R$ by $H[\vec{v}, \tau] \coloneqq \{ \vec{x} \in \R^d : \langle \vec{v}, \vec{x} \rangle + \tau = 0 \}.$ For $\vec{x}, \vec{y} \in \R^n$ we define $\1[\vec{x}=\vec{y}] \in \{0, 1\}^n$ by
\begin{equation*}
    (\1[\vec{x} = \vec{y}])_i = \begin{cases}
        1 & \text{if $x_i = y_i$},\\
        0 & \text{else}.
    \end{cases}
\end{equation*}
We denote by $\vec{0}, \vec{1} \in \R^d$ the vector with entries all equal to $0$ and all equal to $1$, respectively. We denote the standard multivariate normal distribution in $d$ dimensions by $\Normal(\vec{0}, \mat{I}_d)$ and the uniform distribution on $\set{A} \subset \R^d$ by $\Unif(\set{A})$.

\section{Interpolation Algorithm and Main Results}
\label{sec:mainResults}

\begin{figure}[t]
    \centering
    \includegraphics[width=.7\linewidth]{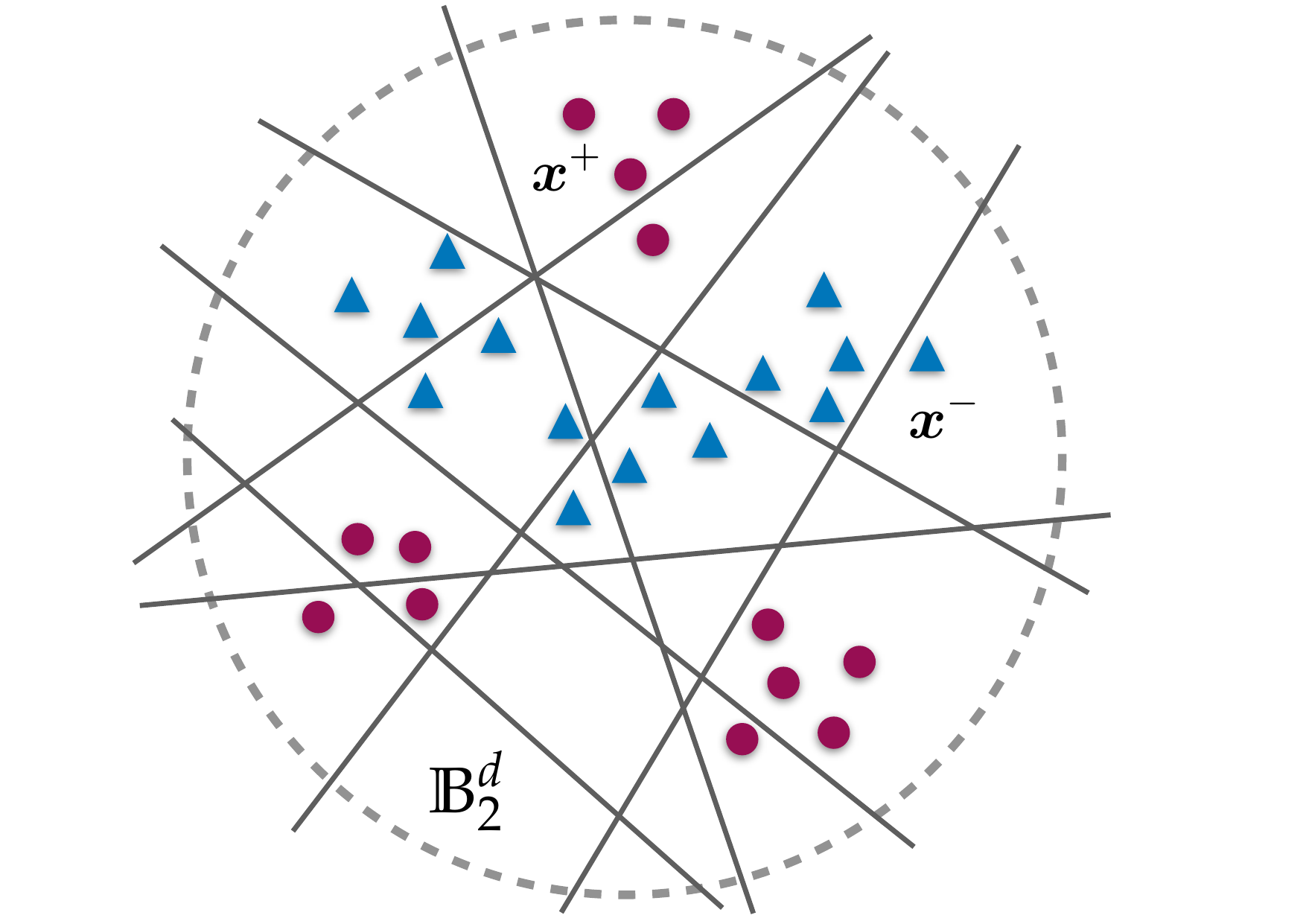}
    \caption{\textbf{Random hyperplanes in the input domain $\R^d$.} In Algorithm~\ref{alg:pruning} we iteratively sample random hyperplanes $H[\vec{w}_i, b_i]$ until every pair of points with opposite labels is separated by at least one of them. This tessellates the space into multiple cells, where each cell is only populated with points of the same label. Each hyperplane can be associated with one of the neurons of the first layer $\Phi$.}
    \label{fig:interpolation/tesselation}
\end{figure}

\begin{figure}[t]
    \centering
    \includegraphics[width=.75\linewidth]{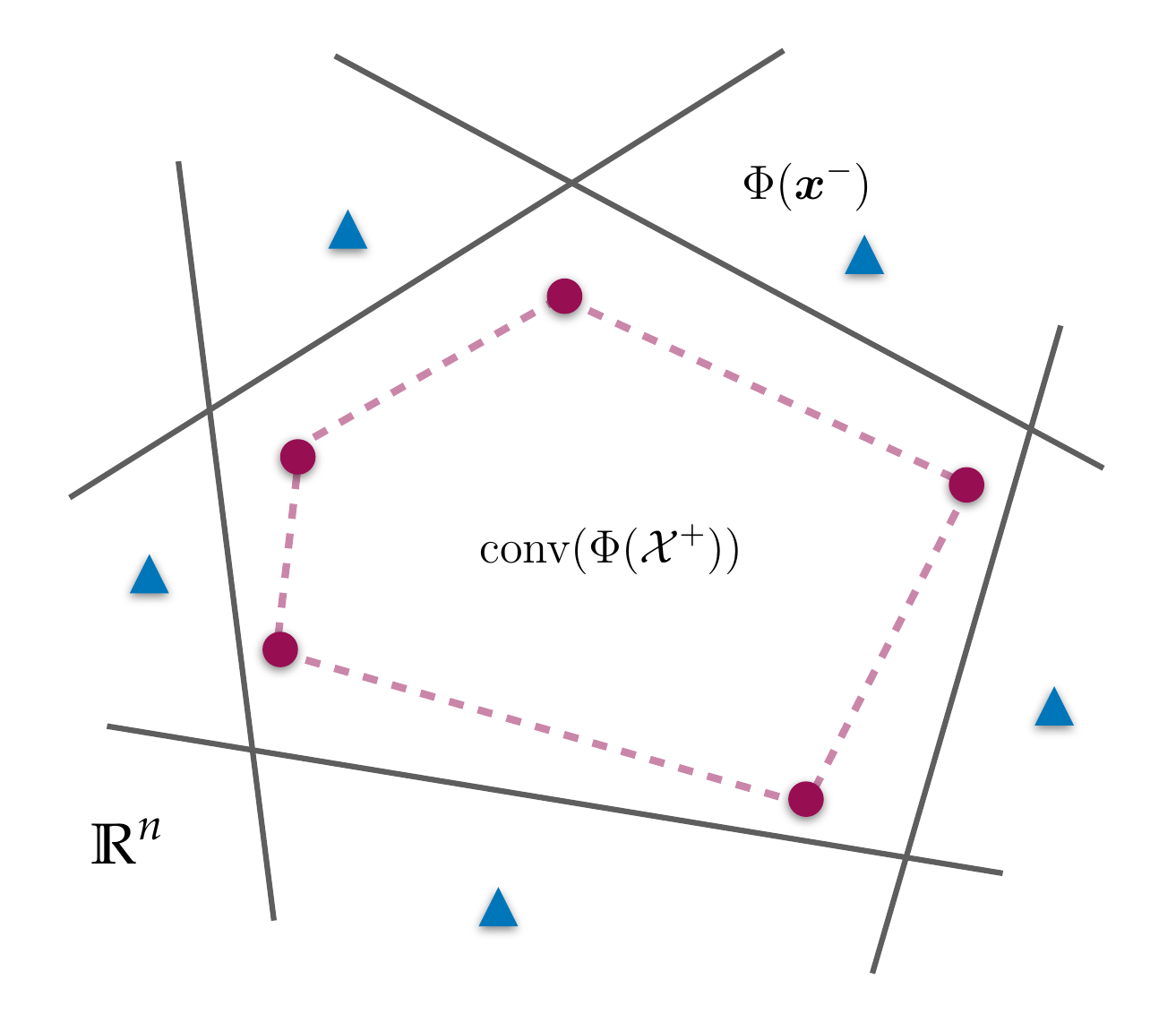}
    \caption{\textbf{The effect of the first layer $\Phi$.} After transforming the data with the first layer $\Phi$ we can, for each $\vec{x}^- \in \set{X}^-$, construct a hyperplane $H[-\vec{u}_{\vec{x}^-}, m_{\vec{x}^-}]$ that separates $\Phi(\set{X}^+)$ from $\Phi(\vec{x}^-)$. Each hyperplane can be associated with one of the neurons in the second layer.}
    \label{fig:interpolation/first-layer}
\end{figure}

Consider any disjoint $\set{X}^-, \set{X}^+ \subset R \B_2^d$ with $N^- \coloneqq |\set{X}^-|$ and $N^+ \coloneqq |\set{X}^+|$. Let $\sigma\colon \R \to \R$ satisfy $\sigma(t) = 0$ for $t \leq 0$ and $\sigma(t) > 0$ for $t > 0$. Let us outline our method to construct an interpolating three-layer neural network:

\begin{enumerate}
    \item To build the first layer $\Phi\colon \R^d \to \R^n$, we iteratively sample i.i.d. random hyperplanes $H[\vec{w}_i, b_i]$ until any $\vec{x}^- \in \set{X}^-$ is separated from any $\vec{x}^+ \in \set{X}^+$ by at least one of them (see Figure~\ref{fig:interpolation/tesselation} and Definition~\ref{def:hyperplane-separation}). Each hyperplane includes a shift $b_i$ so that it is able to separate points located on a ray emanating from the origin. In the worst case, one could have points with opposite labels close to the boundary of $R \B_2^d$, hence one needs the maximal shift to scale at least like $R$. We let $\mat{W}$ be the matrix containing the $\vec{w}_i$ as its rows and let $\vec{b}$ be the vector having the $b_i$ as its coordinates. We define the first, random layer $\Phi$ of the network by $\Phi(\vec{x})=\sigma(\mat{W}\vec{x}+\vec{b})$. Since all pairs of points with opposite labels are separated by at least one hyperplane, $\Phi$ has the following property: for any $(\vec{x}^-, \vec{x}^+) \in \set{X}^- \times \set{X}^+$ there exists at least one $i \in \range{n}$ with
    \begin{equation}\label{eq:interpolation/motivation/first-layer}
        \Phi_i(\vec{x}^-) = 0
        \quad\text{and}\quad
        \Phi_i(\vec{x}^+) > 0.
    \end{equation}
    This enables us to distinguish between points of different labels.
    
    \item We then exploit~\eqref{eq:interpolation/motivation/first-layer} in the following way. For $\vec{x}^- \in \set{X}^-$ consider the mask $\vec{u}_{\vec{x}^-} = \1[\Phi(\vec{x}^-) = \vec{0}]$. By~\eqref{eq:interpolation/motivation/first-layer},
    \begin{equation*}
        \langle \vec{u}_{\vec{x}^-}, \Phi(\vec{x}^-) \rangle = 0
        \quad\text{and}\quad
        \langle \vec{u}_{\vec{x}^-}, \Phi(\vec{x}^+) \rangle > 0 \quad \text{for all } \vec{x}^+ \in \set{X}^+.
    \end{equation*}
    Geometrically, this means that the hyperplane $H[-\vec{u}_{\vec{x}^-}, m_{\vec{x}^-}]$, where
    \begin{equation*}
        m_{\vec{x}^-} = \min_{\vec{x}^+ \in \set{X}^+} \langle \vec{u}_{\vec{x}^-}, \Phi(\vec{x}^+) \rangle,
    \end{equation*}
    separates $\Phi(\set{X}^+)$ from $\Phi(\vec{x}^-)$ (see Figure~\ref{fig:interpolation/first-layer}). Let $\mat{U} \in \R^{N^- \times n}$ be the matrix with rows $\vec{u}_{\vec{x}^-}$ and let $\vec{m} \in \R^{N^-}$ be the vector with coordinates $m_{\vec{x}^-}$. We then define the second layer $\hat{\Phi}\colon \R^n \to \R^{\hat{n}}$ of the network by $\hat{\Phi}(\vec{z}) = \sigma(- \mat{U} \vec{z} + \vec{m})$. This layer satisfies, for every $\vec{x}^-\in \set{X}^-$,
    \begin{equation}\label{eq:interpolation/motivation/second-layer}
        [\hat{\Phi}(\Phi(\vec{x}^-))]_{\vec{x}^-} > 0
        \quad\text{and}\quad
        [\hat{\Phi}(\Phi(\vec{x}^+))]_{\vec{x}^-} = 0 \quad \text{for all } \vec{x}^+ \in \set{X}^+.
    \end{equation}
    Thus, in the second hidden layer, there is a dedicated neuron to detect each point of $\set{X}^-$, but none of them activates on $\set{X}^+$.
    
    \item In the output layer, we simply sum the output from the second layer $\hat{\Phi}$. By~\eqref{eq:interpolation/motivation/second-layer}, for all $\vec{x}^- \in \set{X}^-$ and $\vec{x}^+ \in \set{X}^+$,
    \begin{equation*}
        \langle \vec{1}, \hat{\Phi}(\Phi(\vec{x}^-)) \rangle > 0
        \quad\text{and}\quad
        \langle \vec{1}, \hat{\Phi}(\Phi(\vec{x}^+)) \rangle = 0
    \end{equation*}
    and hence  $\sign(-\cdot)$ outputs the correct label.
\end{enumerate}

The second step of this method is rather naive: for \emph{every} $\vec{x}^-\in \set{X}^-$, we construct a dedicated neuron 
\begin{equation}\label{eqn:dedNeur}
    \hat{\varphi}_{\vec{x}^-}(\vec{z}) = \sigma(-\langle \vec{u}_{\vec{x}^-}, \vec{z} \rangle + m_{\vec{x}^-})
\end{equation}
that distinguishes $\Phi(\vec{x}^-)$ and $\Phi(\set{X}^+)$, \ie, $\hat{\varphi}_{\vec{x}^-}(\Phi(\vec{x}^-))>0$ and $\hat{\varphi}_{\vec{x}^-}(\Phi(\vec{x}^+))=0$ for all $\vec{x}^+\in\set{X}^+$. This potentially leads to redundancy, since to get an interpolating net at the third step, it suffices if for each $\vec{x}^-$ there is \emph{some} $\vec{x}_*^-$ such that $\hat{\varphi}_{\vec{x}_*^-}$ distinguishes $\Phi(\vec{x}^-)$ and $\Phi(\set{X}^+)$. We can especially hope for this to be true if $\vec{x}^-$ is `close enough to' $\vec{x}_*^-$ in a suitable sense. This is illustrated in Figure~\ref{fig:pruning/motivation}. Thus we can improve the second step by forward selection: we iteratively select elements $\vec{x}_*^-$ from $\set{X}^-$ and construct the associated neuron $\hat{\varphi}_{\vec{x}_*^-}$ until there is a distinguishing neuron for each element in $\set{X}^-$.

\begin{figure}
    \centering
    \includegraphics[width=.65\linewidth]{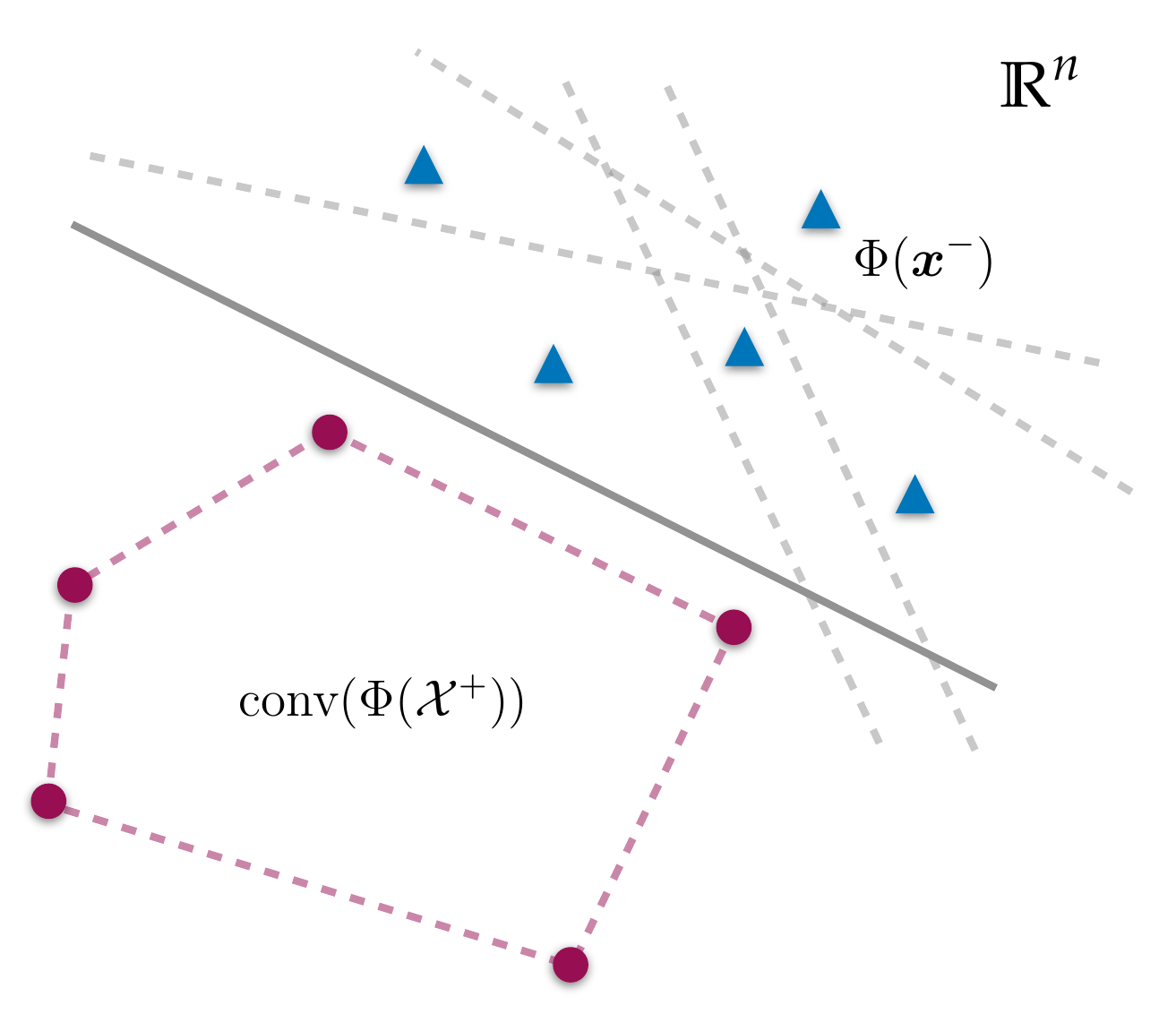}
    \caption{\textbf{Motivation for forward selection.} While each $\Phi(\vec{x}^-)$ is separated by a corresponding `dedicated' hyperplane from $\Phi(\set{X}^+)$ (depicted in dashed grey), we can identify a single hyperplane $H[-\vec{u}_{\vec{x}_*^-}, m_{\vec{x}_*^-}]$ (depicted in grey) that separates several $\Phi(\vec{x}^-)$ from $\Phi(\set{X}^+)$ simultaneously. The other hyperplanes are redundant and the corresponding neurons do not need to be included in the second layer $\hat{\Phi}$.}
    \label{fig:pruning/motivation}
\end{figure}

These considerations lead to our interpolation algorithm formalized in Algorithm~\ref{alg:pruning}.

\begin{algorithm}
\caption{Interpolation}\label{alg:pruning}
\begin{algorithmic}[1]
\Require Disjoint and finite $\set{X}^-, \set{X}^+ \subset \R^d$, activation $\sigma\colon \R \to \R$ satisfying $\sigma(t) = 0$ for $t \leq 0$ and $\sigma(t) > 0$ for $t > 0$, (minimal) width of the first layer $n_{\min} \geq 0$.
\Ensure A three-layer fully-connected neural network $F\colon \R^d \to \{\pm 1\}$ that interpolates $\set{X}^-$ and $\set{X}^+$.
\vspace{.5em}
\phase{First layer $\Phi$}
\State Calculate $R \gets \max_{\vec{x} \in \set{X}^- \cup \set{X}^+} \norm{\vec{x}}_2$ and choose $\lambda \gtrsim R$.
\State Initialize $\set{S} \gets \emptyset$ and $n \gets 0$.
\While{$\set{S} \not= \set{X}^- \times \set{X}^+$ \textbf{or} $n < n_{\min}$}\label{lin:condWhileAlgGreedy}
    \State Update $n \gets n + 1$.
    \State Sample 
    \begin{equation*}
        \vec{w}_n \sim \Normal(\vec{0}, \mat{I}_d),
        \quad
        b_n \sim \Unif([-\lambda, \lambda]).
    \end{equation*}
    \State Update $\set{S}$ according to
    \begin{equation*}
        \set{S} \gets \set{S} \cup \{ (\vec{x}^-, \vec{x}^+) \in \set{X}^- \times \set{X}^+ : \langle \vec{w}_n, \vec{x}^- \rangle \leq -b_n < \langle \vec{w}_n, \vec{x}^+ \rangle \}.
    \end{equation*}
\EndWhile
\State Define $\Phi(\vec{x}) = \sigma(\mat{W} \vec{x} + \vec{b})$ with $\mat{W} \in \R^{n \times d}$ and $\vec{b} \in \R^n$ where
\begin{equation*}
    \mat{W} \gets \begin{bmatrix} \vec{w}_1^\top\\ \vdots\\ \vec{w}_n^\top \end{bmatrix}\quad\text{and}\quad\vec{b} \gets \begin{bmatrix} b_1\\ \vdots\\ b_n \end{bmatrix}.
\end{equation*}
\phase{Second layer $\hat{\Phi}$}
\State Initialize $\set{C} \gets \set{X}^-$ and $\hat{n} \gets 0$.
\While{$\set{C} \not= \emptyset$}
    \State Update $\hat{n} \gets \hat{n} + 1$.
    \State Select $\vec{x}_{\hat{n}}^- \in \set{C}$ uniformly at random from $\set{C}$ and calculate
    \begin{equation*}
        \vec{u}_{\hat{n}} \gets \1[\Phi(\vec{x}_{\hat{n}}^-) = \vec{0}],
        \quad
        m_{\hat{n}} \gets \min_{\vec{x}^+ \in \set{X}^+} \langle \vec{u}_{\hat{n}}, \Phi(\vec{x}^+) \rangle.
    \end{equation*}
    \State Update $\set{C}$ according to
    \begin{equation*}
        \set{C} \gets \set{C} \setminus \{ \vec{x}^- \in \set{C} : \langle \vec{u}_{\vec{x}_{\hat{n}}^-}, \Phi(\vec{x}^-) \rangle < m_{\vec{x}_{\hat{n}}^-} \}.
    \end{equation*}
\EndWhile
\State Define $\hat{\Phi}(\vec{z}) = \sigma(-\mat{U} \vec{z} + \vec{m})$ with $\mat{U} \in \R^{\hat{n} \times n}$ and $\vec{m} \in \R^{\hat{n}}$ where
\begin{equation*}
    \mat{U} \gets \begin{bmatrix} \vec{u}_1^\top\\ \vdots\\ \vec{u}_{\hat{n}}^\top \end{bmatrix}
    \quad\text{and}\quad
    \vec{m} \gets \begin{bmatrix} m_1\\ \vdots\\ m_{\hat{n}} \end{bmatrix}.
\end{equation*}
\phase{Output network $F$}
\State Return $F(\vec{x}) = \sign(-\langle \vec{1}, \hat{\Phi}(\Phi(\vec{x})) \rangle)$. 
\end{algorithmic}
\end{algorithm}

\begin{remark}
    First, let us briefly comment on the parameter $n_{\min}$ in the first loop of Algorithm~\ref{alg:pruning}, which is the minimal width of the first layer $\Phi$. In (the proof of) Proposition~\ref{pro:pruning/correctness} we will see that the first loop (and hence, the algorithm) terminates with probability $1$, regardless of the choice of $n_{\text{min}}$. In Theorem~\ref{thm:pruning/covering-bound}, we will derive a lower bound on $n_{\min}$ that ensures that the algorithm terminates with high probability after $n_{\min}$ iterations and derive an upper bound on the total size of the output net $F$. The first condition in line~3 of the algorithm will in this case be redundant. We include this condition to ensure that the algorithm is always guaranteed to terminate, for any choice of $n_{\text{min}}$.
    
    Second, we comment on the parameter $\lambda$, which is the maximal shift of the hyperplanes in the first layer. The condition $\lambda \gtrsim R$ in the first line of Algorithm~\ref{alg:pruning} is used to guarantee that every pair of samples with different labels is separated by at least one of the hyperplanes (even if they are on a line through the origin, Proposition~\ref{pro:interpolation/termination}), and that they induce a uniform tesselation, allowing us to relate the fraction of hyperplanes between points to their Euclidean distance (Lemmas~\ref{lem:Thres_NN_finite_1st_layer} and~\ref{lem:signPert}). As this condition involves an unknown constant, for a practical application $\lambda$ can be treated like a hyperparameter. In Section~\ref{sec:numerical-experiments} we will see that $\lambda \geq R$ typically is sufficient and, depending on the data set, smaller values might also work.
\end{remark}

Let us now state our main results.

\begin{proposition}[Termination and correctness]
    \label{pro:pruning/correctness}
    Let $\set{X}^-, \set{X}^+ \subset \R^d$ be disjoint and finite. Then Algorithm~\ref{alg:pruning} terminates with probability $1$ and its output $F$ interpolates $\set{X}^-$ and $\set{X}^+$.
\end{proposition}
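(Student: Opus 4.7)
The plan is to decompose the claim into (a) termination of the two while loops (with probability 1) and (b) correctness of the interpolating network on both classes. The algorithm has only finitely many possible failure modes because $\set{X}^-$ and $\set{X}^+$ are finite, so the main work is to show that each loop's termination condition is eventually satisfied and then trace the logical conditions \eqref{eq:interpolation/motivation/first-layer}--\eqref{eq:interpolation/motivation/second-layer} through the construction.

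For the first loop, I would argue that for every fixed distinct pair $(\vec{x}^-, \vec{x}^+) \in \set{X}^- \times \set{X}^+$ the probability that a single sampled hyperplane $H[\vec{w}_n, b_n]$ with $\vec{w}_n \sim N(\vec{0}, \mat{I}_d)$ and $b_n \sim \Unif([-\lambda, \lambda])$ (with $\lambda \gtrsim R$) separates the two points is strictly positive. Since the samples are i.i.d., the probability that no hyperplane separates this particular pair after $n$ iterations decays geometrically in $n$. A union bound over the $N^- N^+$ pairs shows that the event $\set{S} = \set{X}^- \times \set{X}^+$ occurs with probability $1$ for sufficiently large $n$, which together with the $n \geq n_{\min}$ guard implies the first loop terminates almost surely.

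For the second loop, I would show that each iteration is guaranteed to remove at least one point from $\set{C}$, so the loop terminates after at most $N^-$ iterations deterministically (conditional on the first loop having terminated). Indeed, after the first loop every pair $(\vec{x}^-, \vec{x}^+)$ is separated by some hyperplane, so \eqref{eq:interpolation/motivation/first-layer} holds. This implies that for the chosen $\vec{x}_{\hat{n}}^-$ the mask $\vec{u}_{\hat{n}} = \1[\Phi(\vec{x}_{\hat{n}}^-) = \vec{0}]$ satisfies $\langle \vec{u}_{\hat{n}}, \Phi(\vec{x}^+) \rangle > 0$ for every $\vec{x}^+ \in \set{X}^+$ and hence $m_{\hat{n}} > 0$; since $\langle \vec{u}_{\hat{n}}, \Phi(\vec{x}_{\hat{n}}^-) \rangle = 0 < m_{\hat{n}}$, the point $\vec{x}_{\hat{n}}^-$ itself gets removed from $\set{C}$, giving the required strict decrease.

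For correctness, the key observation is exactly what the second loop's termination condition enforces: when $\set{C} = \emptyset$, every $\vec{x}^- \in \set{X}^-$ was at some stage removed, meaning there exists an index $\hat{n}$ with $\langle \vec{u}_{\hat{n}}, \Phi(\vec{x}^-) \rangle < m_{\hat{n}}$. Applying $\sigma$ (which is positive on positives) to $-\langle \vec{u}_{\hat{n}}, \Phi(\vec{x}^-) \rangle + m_{\hat{n}} > 0$ gives a strictly positive coordinate of $\hat{\Phi}(\Phi(\vec{x}^-))$, so $\langle \vec{1}, \hat{\Phi}(\Phi(\vec{x}^-)) \rangle > 0$ and $F(\vec{x}^-) = -1$. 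Conversely, for any $\vec{x}^+ \in \set{X}^+$ the definition $m_{\hat{n}} = \min_{\vec{x}^+ \in \set{X}^+} \langle \vec{u}_{\hat{n}}, \Phi(\vec{x}^+) \rangle$ immediately gives $-\langle \vec{u}_{\hat{n}}, \Phi(\vec{x}^+) \rangle + m_{\hat{n}} \leq 0$ for every $\hat{n}$, so $\hat{\Phi}(\Phi(\vec{x}^+)) = \vec{0}$ and $F(\vec{x}^+) = \sign(0) = +1$. The main (mild) obstacle is being careful with the case $\lambda \gtrsim R$ and points $\vec{x}^-$, $\vec{x}^+$ that may be nearly collinear with the origin or close to the boundary of $R \B_2^d$; this is precisely what the shift $b_n \sim \Unif([-\lambda, \lambda])$ with $\lambda \gtrsim R$ is designed to handle, ensuring the separation probability for each pair is genuinely positive.
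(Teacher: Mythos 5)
Your proposal is correct and follows essentially the same decomposition as the paper: positive per-pair separation probability yields geometric decay of the failure event, a union bound then gives almost-sure termination of the first loop, the second loop terminates in at most $N^-$ steps because the chosen candidate always removes itself once $\set{S} = \set{X}^- \times \set{X}^+$, and correctness follows by tracing \eqref{eq:interpolation/motivation/first-layer}--\eqref{eq:interpolation/motivation/second-layer} through the output $\sign$. The only cosmetic difference is that the paper invokes the quantitative separation bound of Lemma~\ref{lem:prob_single_hyperplane_separates_two_points} (to reuse it for Proposition~\ref{pro:interpolation/termination}), whereas you simply assert strict positivity, which is all the probability-$1$ statement actually needs.
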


From the discussion at the start of this section, it is clear that Algorithm~\ref{alg:pruning} produces an interpolating network \emph{if} the first loop of the algorithm terminates. We will prove termination in Section~\ref{sec:proofCorrectness}.

Additionally, the following gives an estimate of the run time of Algorithm~\ref{alg:pruning}.

\begin{proposition}[Run time]
    \label{pro:run-time}
    Let $\set{X}^-, \set{X}^+ \subset \R^d$ be finite and $\delta$-separated. Let $N^- \coloneqq |\set{X}^-|$ and $N^+ \coloneqq |\set{X}^+|$ and denote $N \coloneqq N^- + N^+$. Assume that $N^- \simeq N^+$, the input dimension $d$ is constant and the activation function $\sigma$ is computable in constant time. Then Algorithm~\ref{alg:pruning} has a run time of at most
    \begin{equation*}
        \O(\delta^{-1} \lambda \log(N / \eta) N^2),
    \end{equation*}
    with probability at least $1 - \eta$.
\end{proposition}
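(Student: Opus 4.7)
The plan is to separately bound the running time of the two \texttt{while} loops of Algorithm~\ref{alg:pruning}. Since the input dimension $d$ is treated as a constant and $\sigma$ is computable in constant time, all per-iteration arithmetic costs become simple polynomials in $N$ and in the number of hyperplanes $n$ produced by the first loop. The only probabilistic ingredient of the analysis is then a high-probability upper bound on $n$; termination per se is already guaranteed by Proposition~\ref{pro:pruning/correctness}, but here we need a quantitative rate.

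For the first loop, I would start by computing a lower bound on the probability $p$ that one sample $(\vec{w}, b)$ with $\vec{w} \sim \Normal(\vec{0}, \mat{I}_d)$ and $b \sim \Unif([-\lambda,\lambda])$ separates a fixed pair $(\vec{x}^-, \vec{x}^+)$ with $\|\vec{x}^+ - \vec{x}^-\|_2 \geq \delta$. Conditional on $\vec{w}$, separation occurs exactly when $-b$ lies in the interval $[\langle \vec{w}, \vec{x}^-\rangle, \langle \vec{w}, \vec{x}^+\rangle)$, whose intersection with $[-\lambda, \lambda]$ has length $\langle \vec{w}, \vec{x}^+ - \vec{x}^-\rangle$ provided both endpoints lie in $[-\lambda, \lambda]$. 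Restricting to the constant-probability Gaussian event $\{\|\vec{w}\|_2 \leq C\sqrt{d}\} \cap \{\langle \vec{w}, \vec{x}^+ - \vec{x}^-\rangle \geq c\delta\}$ and using $\lambda \gtrsim R$ (so that $|\langle \vec{w}, \vec{x}^\pm\rangle| \leq CR\sqrt{d} \leq \lambda$ since $d$ is constant) yields $p \gtrsim \delta / \lambda$. A union bound over the at most $N^2$ pairs then shows that after $n_0 \simeq p^{-1} \log(N^2/\eta) \simeq \delta^{-1} \lambda \log(N/\eta)$ samples the first loop has terminated with probability at least $1-\eta$. Each iteration costs $O(N)$ to form the inner products $\langle \vec{w}_n, \vec{x}\rangle$ plus $O(N^2)$ to update $\set{S}$, giving a first-loop cost of $O(n_0 N^2) = O(\delta^{-1} \lambda \log(N/\eta) N^2)$ on that event.

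The second loop is deterministic in length: each iteration removes at least the chosen $\vec{x}_{\hat{n}}^-$ from $\set{C}$, so there are at most $N^-$ iterations. After a one-time precomputation of $\Phi$ on all inputs at cost $O(N n_0)$, each iteration spends $O(n_0)$ on the mask $\vec{u}_{\hat{n}}$, $O(N^+ n_0)$ on $m_{\hat{n}}$, and $O(|\set{C}| n_0) = O(N^- n_0)$ on the update of $\set{C}$, totalling $O(N n_0)$ per iteration and $O(N^- \cdot N n_0) = O(N^2 n_0)$ overall. Adding the two loops yields the claimed $O(\delta^{-1} \lambda \log(N/\eta) N^2)$ bound.

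The main obstacle is the lower bound $p \gtrsim \delta / \lambda$: one must handle the joint Gaussian–uniform randomness cleanly and control the possibility that the interval determined by $\langle \vec{w}, \vec{x}^\pm\rangle$ leaks outside $[-\lambda, \lambda]$. The cleanest route is to restrict the $\vec{w}$-expectation to a good Gaussian event on which this cannot happen, which is exactly where constant-dimension is used; tracking the $\sqrt{d}$ dependence explicitly would replace the hypothesis $\lambda \gtrsim R$ by $\lambda \gtrsim R\sqrt{d}$, but for the constant-$d$ statement of the proposition this subtlety is absorbed into the hidden constants.
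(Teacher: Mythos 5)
Your proof is correct and follows essentially the same approach as the paper: decompose the cost into per-iteration work (each iteration of the first loop is $\O(N^2)$, each of the second is $\O(nN)$ after precomputing $\Phi$, with $\hat{n}\leq N^-$) and multiply by a high-probability bound $n \lesssim \delta^{-1}\lambda\log(N/\eta)$ on the number of hyperplanes. The only difference is cosmetic: the paper gets this bound on $n$ by directly invoking Proposition~\ref{pro:interpolation/termination} (which in turn cites Lemma~\ref{lem:prob_single_hyperplane_separates_two_points}), whereas you re-derive the per-hyperplane separation probability $p\gtrsim\delta/\lambda$ from scratch via a good Gaussian event --- a derivation that is a bit lossier in $d$ than Lemma~\ref{lem:prob_single_hyperplane_separates_two_points}, but as you note this is immaterial under the constant-$d$ hypothesis.
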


\begin{remark}
    The run time of Algorithm~\ref{alg:pruning} has a bottleneck of $\O(N^2)$ in terms of the number of samples, which may be serious for large data sets. This bottleneck already occurs in the first loop. In Section~\ref{sec:numerical-experiments} we will consider a variation of the algorithm in which the number of hyperplanes drawn in the first layer is a hyperparameter. As we will see in Theorem~\ref{thm:pruning/covering-bound}, this algorithm is guaranteed to succeed with high probability if the number of draws is chosen large enough. In this case, the run time of the algorithm is dictated by the construction of the second layer, which takes time $\O(M^- N^+)$.
\end{remark}

To complement Proposition~\ref{pro:pruning/correctness} we derive a high probability bound on the size of the network produced by Algorithm~\ref{alg:pruning}. This bound will (at least in our proof) depend on the choice of the activation function $\sigma$. We focus on the setting with threshold activations, \ie, we consider
\begin{equation*}
    \sigma(t) = \Thres(t) = \begin{cases}
    1 & \text{if } t > 0,\\
    0 & \text{else}.
    \end{cases}
\end{equation*}

Let us first observe that in the limit, the shape of the activation region of every neuron in the second layer is a Euclidean ball of a `maximal radius', \ie, that touches the closest point in the set $\set{X}^+$. This gives geometric intuition on why the size of the second layer is naturally connected with the mutual covering numbers.

\begin{proposition}[Limit shape of activation regions---threshold activations]\label{pro:limitShapeAct}
    Take\\any $\vec{x}_*^-\in \mathcal{X}^-$ and let $\set{A}_{\vec{x}_*^-}$ be the activation region of $\hat{\varphi}_{\vec{x}_*^-}$. Then, for any $\vec{x} \in \R^d \setminus \partial \set{B}_{\vec{x}_*^-}$,
    \begin{equation*}
        \lim_{\lambda\to\infty}\lim_{n\to \infty} \1_{\set{A}_{\vec{x}_*^-}}(\vec{x})
        = \1_{\set{B}_{\vec{x}_*^-}}(\vec{x})
    \end{equation*}
    almost surely, where $\set{B}_{\vec{x}_*^-} = \B_2^d(\vec{x}_*^-; \dist(\vec{x}_*^-,\set{X}^+))$.
\end{proposition}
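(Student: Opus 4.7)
The plan is to rewrite the activation region in terms of counts of separating hyperplanes, apply the law of large numbers to pass to a population-level expression, and then compute the explicit $\lambda\to\infty$ limit, which turns out to select the Euclidean ball of radius $\dist(\vec{x}_*^-,\set{X}^+)$.

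First I would reformulate $\set{A}_{\vec{x}_*^-}$ concretely. With threshold activation, $(\vec{u}_{\vec{x}_*^-})_i = 1$ iff $\vec{x}_*^-$ lies on the non-positive side of $H[\vec{w}_i,b_i]$, and $\Phi_i(\vec{x})=1$ iff $\vec{x}$ lies strictly on the positive side. Setting
\begin{equation*}
    N(\vec{a},\vec{b}) \coloneqq |\{i\in\range{n}:\langle\vec{w}_i,\vec{a}\rangle+b_i\le 0,\ \langle\vec{w}_i,\vec{b}\rangle+b_i>0\}|,
\end{equation*}
we get $\langle\vec{u}_{\vec{x}_*^-},\Phi(\vec{x})\rangle = N(\vec{x}_*^-,\vec{x})$ and $m_{\vec{x}_*^-}=\min_{\vec{x}^+\in\set{X}^+} N(\vec{x}_*^-,\vec{x}^+)$, so
\begin{equation*}
    \set{A}_{\vec{x}_*^-} = \bigl\{\vec{x}\in\R^d : N(\vec{x}_*^-,\vec{x}) < m_{\vec{x}_*^-}\bigr\}.
\end{equation*}

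Next I would take $n\to\infty$ for fixed $\lambda$. For any $\vec{a},\vec{b}\in\R^d$ the indicators defining $N(\vec{a},\vec{b})$ are i.i.d., so by the strong law of large numbers $\tfrac{1}{n}N(\vec{x}_*^-,\vec{x}) \to p_\lambda(\vec{x}_*^-,\vec{x})$ almost surely, where
\begin{equation*}
    p_\lambda(\vec{a},\vec{b}) \coloneqq \bP\bigl[\langle\vec{w},\vec{a}\rangle+b\le 0<\langle\vec{w},\vec{b}\rangle+b\bigr],
\end{equation*}
with $\vec{w}\sim\Normal(\vec{0},\mat{I}_d)$ and $b\sim\Unif([-\lambda,\lambda])$. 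Taking a minimum over the finite set $\set{X}^+$ preserves a.s. convergence, so $\tfrac{1}{n}m_{\vec{x}_*^-} \to p_\lambda^*\coloneqq \min_{\vec{x}^+\in\set{X}^+} p_\lambda(\vec{x}_*^-,\vec{x}^+)$. Hence, for any fixed $\vec{x}$ with $p_\lambda(\vec{x}_*^-,\vec{x})\neq p_\lambda^*$, the indicator $\1_{\set{A}_{\vec{x}_*^-}}(\vec{x})$ stabilises almost surely to $\1[p_\lambda(\vec{x}_*^-,\vec{x})<p_\lambda^*]$.

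Then I would evaluate $p_\lambda$ asymptotically. Conditioning on $\vec{w}$, the probability reduces to a length-of-interval computation on $b$: for $\lambda$ strictly larger than $R\|\vec{w}\|_2$ both endpoints $-\langle\vec{w},\vec{x}_*^-\rangle$ and $-\langle\vec{w},\vec{x}\rangle$ lie in $[-\lambda,\lambda]$, and we obtain
\begin{equation*}
    p_\lambda(\vec{x}_*^-,\vec{x}) \,=\, \frac{1}{2\lambda}\,\E\bigl[(\langle\vec{w},\vec{x}-\vec{x}_*^-\rangle)_+ \cdot \1_{\{\lambda > R\|\vec{w}\|_2\}}\bigr] + o(1/\lambda),
\end{equation*}
with the error coming from the event $\{\lambda\le R\|\vec{w}\|_2\}$, which has negligible contribution by dominated convergence (the integrand is bounded by $\|\vec{w}\|_2\cdot 2R$ and $\vec{w}$ has Gaussian tails). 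Using the standard identity $\E[(\langle\vec{w},\vec{v}\rangle)_+]=\|\vec{v}\|_2/\sqrt{2\pi}$ then gives
\begin{equation*}
    2\lambda\, p_\lambda(\vec{x}_*^-,\vec{x}) \;\xrightarrow[\lambda\to\infty]{}\; \frac{\|\vec{x}-\vec{x}_*^-\|_2}{\sqrt{2\pi}}.
\end{equation*}

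Finally, since $\vec{x}_*^-\notin\set{X}^+$ each denominator $\|\vec{x}^+-\vec{x}_*^-\|_2$ is strictly positive, so the ratio $p_\lambda(\vec{x}_*^-,\vec{x})/p_\lambda^*$ tends to $\|\vec{x}-\vec{x}_*^-\|_2/\dist(\vec{x}_*^-,\set{X}^+)$. For $\vec{x}\notin\partial\set{B}_{\vec{x}_*^-}$ this ratio is either strictly less than or strictly greater than $1$, so for all sufficiently large $\lambda$ the event $\{p_\lambda(\vec{x}_*^-,\vec{x})<p_\lambda^*\}$ coincides with $\{\vec{x}\in\set{B}_{\vec{x}_*^-}\}$, proving the claim. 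The main obstacle is the careful bookkeeping of the two successive limits together with the boundary-of-the-interval correction in the computation of $p_\lambda$; everything else is routine once the problem is translated into the population-level quantity $p_\lambda$.
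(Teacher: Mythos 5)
Your proof is correct and takes essentially the same route as the paper's: rewrite the activation of $\hat{\varphi}_{\vec{x}_*^-}$ as a count of separating hyperplanes, apply the strong law of large numbers to pass (for fixed $\lambda$) to the population-level separation probability, and then show that as $\lambda\to\infty$ this probability scales like $\|\cdot\|_2/(\sqrt{2\pi}\,\lambda)$, so the comparison against the minimum over $\set{X}^+$ selects precisely the ball of radius $\dist(\vec{x}_*^-,\set{X}^+)$. The only cosmetic differences are that you work directly with the one-sided separation probability $p_\lambda$ (avoiding the paper's factor of $\tfrac12$ from the sign-disagreement probability), derive the $\lambda\to\infty$ asymptotics by conditioning and dominated convergence rather than via the explicit case decomposition of \cite[Lemma A.1]{DMS22} followed by monotone convergence, and phrase the final comparison via the ratio $p_\lambda/p_\lambda^*$ rather than through explicit gap parameters $\varepsilon$ and $\delta$.
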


Let us give an intuitive sketch for the proof of Proposition~\ref{pro:limitShapeAct}. Roughly speaking, the neuron $\hat{\varphi}_{\vec{x}_*^-}$ activates when the fraction of hyperplanes separating the reference point $\vec{x}_*^-$ and the input $\vec{x}$ is smaller than a threshold value, which is the minimal fraction of hyperplanes separating $\vec{x}_*^-$ and any $\vec{x}^+ \in \set{X}^+$. If $n\to \infty$, then the fraction of hyperplanes separating $\vec{x}_*^-$ and any $\vec{z}$ becomes proportional to the probability that a hyperplane separates the two. Finally, as $\lambda\to\infty$, this probability becomes proportional to $\dist(\vec{x}_*^-,\vec{z})/\lambda$. Hence, in the double limit, the neuron activates when $\dist(\vec{x}_*^-,\vec{x})$ is smaller than $\dist(\vec{x}_*^-,\mathcal{X}^+)$.

Let us now state the main result of our work.

\begin{theorem}[Size of interpolating net---threshold activations]
    \label{thm:pruning/covering-bound}
    Let $\set{X}^-, \set{X}^+ \subset R \B_2^d$ be finite and disjoint. Let $\sigma$ be the threshold activation and $\lambda \gtrsim R$.
    Suppose that there is a mutual covering of $\set{X}^-$ and $\set{X}^+$ such that the centers $\set{C}^-$ and $\set{C}^+$ are $\delta$-separated and the radii satisfy
    \begin{equation*}
        r_{\ell}^-\lesssim \frac{\dist(\vec{c}_{\ell}^-,\mathcal{C}^+)}{\log^{1/2}(e\lambda/\dist(\vec{c}_{\ell}^-,\mathcal{C}^+))}
        \quad\text{and}\quad
        r_{j}^+\lesssim \frac{\dist(\vec{c}_{j}^+,\mathcal{C}^-)}{\log^{1/2}(e\lambda/\dist(\vec{c}_{j}^+,\mathcal{C}^-))}
    \end{equation*}
    for all $\ell \in \range{M^-}$ and $j \in \range{M^+}$. Set $\omega \coloneqq \max\{\omega^-, \omega^+\}$ where
    \begin{equation*}
        \omega^- \coloneqq \max_{\ell \in \range{M^-}} \frac{\gw^2(\set{X}_\ell^- - \vec{c}_\ell^-)}{\dist^3(\vec{c}_\ell^-, \set{C}^+)}
        \quad\text{and}\quad
        \omega^+ \coloneqq \max_{j \in \range{M^+}} \frac{\gw^2(\set{X}_j^+ - \vec{c}_j^+)}{\dist^3(\vec{c}_j^+, \set{C}^-)}.
    \end{equation*}
    Suppose that
    \begin{equation}
        \label{eqn:condnminMain}
        n_{\min} \gtrsim \lambda\delta^{-1} \log(2 M^- M^+ / \eta) + \lambda\omega.
    \end{equation}
    Then, with probability at least $1-\eta$, the neural network computed by Algorithm~\ref{alg:pruning} has layer widths $n=n_{\min}$ and $\hat{n} \leq M^-$.
\end{theorem}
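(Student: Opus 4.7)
The plan is to analyze the two loops of Algorithm~\ref{alg:pruning} in turn, driving both estimates from the same high-probability event governing the random hyperplanes. For the first loop, I would show that after exactly $n = n_{\min}$ samples the separating set $\set{S}$ already equals $\set{X}^- \times \set{X}^+$, which forces the while-loop to exit. For $\vec{w} \sim \Normal(\vec{0}, \mat{I}_d)$ and $b \sim \Unif([-\lambda, \lambda])$ with $\lambda \gtrsim R$, a single hyperplane separates a $\delta$-separated pair of centres $(\vec{c}_\ell^-, \vec{c}_j^+)$ with probability of order $\delta/\lambda$. A Chernoff estimate and a union bound over the $M^- M^+$ centre pairs then show that, when $n_{\min}$ exceeds the first summand of \eqref{eqn:condnminMain}, each such pair is separated by a fraction $\gtrsim \delta/\lambda$ of the drawn hyperplanes with probability at least $1-\eta/2$. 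To transfer this to arbitrary pairs $(\vec{x}^-, \vec{x}^+) \in \set{X}_\ell^- \times \set{X}_j^+$, I would apply the uniform tessellation estimate of Lemma~\ref{lem:signPert} to the centred components $\set{X}_\ell^- - \vec{c}_\ell^-$ and $\set{X}_j^+ - \vec{c}_j^+$: only a small ``flip set'' of hyperplanes can change sides when moving from a centre to a nearby point, and its fraction is controlled by $\gw(\set{X}_\ell^- - \vec{c}_\ell^-)$ and $\gw(\set{X}_j^+ - \vec{c}_j^+)$. The $\lambda \omega$ summand in $n_{\min}$ is calibrated so that, under the radius condition \eqref{eq:condition-radius}, this flip fraction is strictly dominated by the centre-separation fraction, so every pair retains at least one separating hyperplane.

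For the second loop, the bound $\hat{n} \leq M^-$ reduces to showing that each component $\set{X}_\ell^-$ is emptied from $\set{C}$ in a single iteration. Conditioned on the event above, fix any $\vec{x}_*^- \in \set{X}_\ell^-$ selected in an iteration of the second loop and any other $\vec{x}^- \in \set{X}_\ell^-$. The count $\langle \vec{u}_{\vec{x}_*^-}, \Phi(\vec{x}^-) \rangle$ tallies hyperplanes that are inactive on $\vec{x}_*^-$ and active on $\vec{x}^-$; it is bounded above by the (oriented) number of hyperplanes separating these two points, which by uniform tessellation on $\set{X}_\ell^- - \vec{c}_\ell^-$ scales like $n r_\ell^-/\lambda$. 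Conversely, applying the same tessellation estimate to each $\set{X}_j^+ - \vec{c}_j^+$ lower bounds the oriented counts appearing in $m_{\vec{x}_*^-}$ by order $n \dist(\vec{x}_*^-, \set{X}^+)/\lambda \gtrsim n (\dist(\vec{c}_\ell^-, \set{C}^+) - r_\ell^-)/\lambda$, uniformly over $\vec{x}^+ \in \set{X}^+$. The logarithmic gap built into \eqref{eq:condition-radius} is precisely what is needed for the strict inequality $\langle \vec{u}_{\vec{x}_*^-}, \Phi(\vec{x}^-) \rangle < m_{\vec{x}_*^-}$, so $\set{X}_\ell^-$ is drained from $\set{C}$ in the same iteration, and the second loop runs at most $M^-$ times.

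The hardest technical point is the one-sided lower bound on $m_{\vec{x}_*^-}$: unlike the upper bound on $\langle \vec{u}_{\vec{x}_*^-}, \Phi(\vec{x}^-) \rangle$, which follows from controlling the total separating count, the minimum defining $m_{\vec{x}_*^-}$ requires a lower bound on an oriented hyperplane count that must hold simultaneously for all $\vec{x}^+ \in \set{X}^+$. This forces the uniform tessellation estimate to be applied to every component $\set{X}_j^+$ and the resulting error terms to be summed without exhausting the $\omega$-budget, while also matching the log factor from the union bound over $M^- M^+$ centre pairs — this joint calibration is what dictates the precise shape of $n_{\min}$ in \eqref{eqn:condnminMain}. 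Once both layers are placed on a common good event of probability at least $1-\eta$, the conclusions $n = n_{\min}$ and $\hat{n} \leq M^-$ follow.
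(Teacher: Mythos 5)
Your overall strategy matches the paper's: use a Chernoff estimate (essentially Lemma~\ref{lem:Thres_NN_finite_1st_layer}) on separating hyperplanes applied to the $M^-M^+$ center pairs, transfer to arbitrary points within components via the uniform tessellation estimate (Lemma~\ref{lem:signPert}), and then prove that each neuron $\hat{\varphi}_{\vec{x}_*^-}$ is a \emph{robust} separator that drains its whole component $\set{X}_\ell^-$ from $\set{C}$ in one pass, so the second loop runs at most $M^-$ times. You also correctly realize that the first-loop termination at $n = n_{\min}$ falls out of the same event.

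However, there is a genuine gap precisely at the spot you flag as ``the hardest technical point,'' namely the lower bound on $m_{\vec{x}_*^-} = \min_{\vec{x}^+} \langle \vec{u}_{\vec{x}_*^-}, \Phi(\vec{x}^+)\rangle$. The issue is not only that you must go from the center $\vec{c}_j^+$ to a nearby $\vec{x}^+$ (which tessellation handles), but that the \emph{mask} $\vec{u}_{\vec{x}_*^-}$ is defined with respect to $\vec{x}_*^-$ rather than the center $\vec{c}_\ell^-$. The Chernoff event only controls $\langle \vec{u}_{\vec{c}_\ell^-}, \Phi(\vec{c}_j^+)\rangle$, so you need to relate $\vec{u}_{\vec{x}_*^-}$ to $\vec{u}_{\vec{c}_\ell^-}$, which a naive tessellation argument on $\set{X}_j^+$ does not give you. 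The paper's resolution is the threshold-specific identity $\vec{u}_{\vec{x}} = \vec{1} - \Phi(\vec{x})$, which yields $\vec{u}_{\vec{x}_*^-} - \vec{u}_{\vec{c}_\ell^-} = \Phi(\vec{c}_\ell^-) - \Phi(\vec{x}_*^-)$; then H\"older with $\|\Phi(\vec{c}_j^+)\|_\infty \le 1$ gives $|\langle \vec{u}_{\vec{x}_*^-} - \vec{u}_{\vec{c}_\ell^-}, \Phi(\vec{c}_j^+)\rangle| \le \|\Phi(\vec{c}_\ell^-) - \Phi(\vec{x}_*^-)\|_1$, which is controlled by the tessellation estimate on $\set{X}_\ell^-$ (not on $\set{X}_j^+$, which you instead use for the third error term $\Phi(\vec{x}^+) - \Phi(\vec{c}_j^+)$). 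Without this mask-to-feature-map identity — which is precisely why the theorem is stated for threshold activations — the three-term decomposition underlying the lower bound on $m_{\vec{x}_*^-}$ does not close. Supply this identity, and your outline becomes essentially the paper's proof of Lemma~\ref{lem:pruning/lemma} and Theorem~\ref{thm:pruning/covering-bound}.

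A minor additional remark: in your first-loop paragraph you speak of each center pair being separated by a fraction $\gtrsim \delta/\lambda$ of hyperplanes, but the calibration against the flip fractions $\lesssim r_\ell^-/\lambda + r_j^+/\lambda$ only works because the Chernoff lower bound actually scales as $\|\vec{c}_\ell^- - \vec{c}_j^+\|_2/\lambda$, i.e., with the true distance between the centers, which is what the radius condition \eqref{eq:condition-radius} is keyed to; the minimum separation $\delta$ alone is not enough.
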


\begin{remark}
\label{rem:omegaEst}
    We give a few examples of estimates of the Gaussian mean width \citep[see, \eg,][for further details]{vershynin2018high} to highlight some special cases of the condition~\eqref{eqn:condnminMain}.
    \begin{enumerate}
        \item For a finite set $\set{A} \subset \B_2^d$ we have $\gw(\set{A}) \lesssim \sqrt{\log(|\set{A}|)}$. As Algorithm~\ref{alg:pruning} requires a finite number $N$ of input samples, $\omega \lesssim \delta^{-1} \log(N)$.
        \item If $\set{A} \subset \B_2^d$ lies in a $k$-dimensional subspace, then $\gw(\set{A}) \lesssim \sqrt{k}$. Hence, for samples in a $k$-dimensional subspace, $\omega \lesssim \delta^{-1} k$.
        \item The set $\Sigma_s^d \coloneqq \{\vec{x} \in \B_2^d : \norm{\vec{x}}_{0} \leq s\}$ of $s$-sparse vectors in the unit ball, where $\norm{\vec{x}}_0$ counts the number of non-zero coordinates in $\vec{x}$, satisfies $\gw(\Sigma_s^d) \lesssim \sqrt{s \log(ed/s)}$. Hence, if the input samples are $s$-sparse, $\omega \lesssim \delta^{-1} s \log(ed/s)$.
    \end{enumerate}
    Notice that the latter two estimates are independent of the number of samples.
\end{remark}

The idea of the proof of Theorem~\ref{thm:pruning/covering-bound} is to show that if $\Phi$ is wide enough, then the neuron $\hat{\varphi}_{\vec{x}_*^-}$ associated with $\vec{x}_*^-$ (defined in \eqref{eqn:dedNeur}) not only separates $\Phi(\vec{x}_*^-)$ and $\Phi(\set{X}^+)$, but in fact acts as a \emph{robust separator}: it will also separate $\Phi(\vec{x}^-)$ and $\Phi(\set{X}^+)$ for all points $\vec{x}^-$ `close enough to' $\vec{x}_*^-$. The key formal observation is stated below in Lemma~\ref{lem:pruning/lemma}. Intuitively, the notion of `close enough' should be relative to the distance of $\vec{x}_*^-$ to the decision boundary. As a result, the size of the interpolating neural net is related to the `complexity' of a mutual covering of $\set{X}^-$ and $\set{X}^+$ in which only the parts of $\set{X}^-$ and $\set{X}^+$ that lie close to the decision boundary need to be covered using components with small diameter---other parts can be crudely covered using large components (see Figure~\ref{fig:mutual-covering}).

Finally, we prove that the statement of Theorem~\ref{thm:pruning/covering-bound} cannot be improved in a certain sense. Proposition~\ref{prop:pruning/sharpness-of-upper-bound} below shows that the upper bound on the size of the second layer $\hat{\Phi}$, as stated in Theorem~\ref{thm:pruning/covering-bound}, cannot be improved in general, assuming that, in addition, $\sigma$ is non-decreasing. Note that this assumption is satisfied by many popular activations, including the ReLU. In the proof, we construct a one-dimensional data set of points with alternating labels, which one could however embed (\eg, by appending zeros) into $\R^d$ for an arbitrary dimension $d \geq 1$. Note that the result holds independently of the random sampling of the first layer, so one cannot even find a benign choice of hyperplanes to improve the situation described below.

\begin{proposition}\label{prop:pruning/sharpness-of-upper-bound}
    Assume that $\sigma$ is non-decreasing, $\sigma(t)=0$ for $t\leq 0$ and $\sigma(t)>0$ for $t>0$. Let $M^- \geq 2$ and $M^+ \coloneqq M^- - 1$. Then, for all $N^- \geq M^-$ and $N^+ \geq M^+$, there exists $\set{X}^-, \set{X}^+ \subset [0, 1]$ with $N^- = |\set{X}^-|$ and $N^+ = |\set{X}^+|$, and a mutual covering $\set{C}^- = \{c_1^-, \dots, c_{M^-}^-\} \subset \set{X}^-$ and $\set{C}^+ = \{c_1^+, \dots, c_{M^+}^+\} \subset \set{X}^+$ such that the output $F$ of Algorithm~\ref{alg:pruning} has at least $M^-$ neurons in its second layer.
\end{proposition}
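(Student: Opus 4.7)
The plan is to construct a one-dimensional example with alternating labels and use the monotonicity of $\sigma$ to argue that each iteration of the second loop of Algorithm~\ref{alg:pruning} can remove at most one of the negative centers $c_1^-, \ldots, c_{M^-}^-$ from $\set{C}$, which forces the loop to run at least $M^-$ times.

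Concretely, I would take $d=1$, set $\epsilon \coloneqq 1/(2M^-)$, and place the centers on an alternating grid $c_k^- \coloneqq 2(k-1)\epsilon$ for $k \in \range{M^-}$ and $c_k^+ \coloneqq (2k-1)\epsilon$ for $k \in \range{M^+}$, yielding $c_1^- < c_1^+ < c_2^- < \cdots < c_{M^-}^- \in [0,1]$. To reach the prescribed cardinalities, I would add $N^- - M^-$ extra points to $\set{X}^-$ tightly clustered around $c_1^-$ and $N^+ - M^+$ extra points to $\set{X}^+$ tightly clustered around $c_1^+$, both clusters chosen small enough to stay strictly inside $(c_1^-, c_2^-)$. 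Taking $r_1^\pm$ just large enough to contain the respective clusters and setting all other radii to zero yields a valid mutual covering in the sense of Definition~\ref{asu:mutual-covering} with the prescribed centers and cardinalities.

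The technical core is a one-dimensional monotonicity lemma: for any $\vec{x}_*^-, \vec{x}^- \in \set{X}^-$, if some $\vec{x}^+ \in \set{X}^+$ lies strictly between them on the real line, then
\begin{equation*}
    \langle \vec{u}_{\vec{x}_*^-}, \Phi(\vec{x}^-) \rangle \geq \langle \vec{u}_{\vec{x}_*^-}, \Phi(\vec{x}^+) \rangle \geq m_{\vec{x}_*^-},
\end{equation*}
so $\vec{x}^-$ is not removed from $\set{C}$ by the neuron $\hat{\varphi}_{\vec{x}_*^-}$. To prove this, assume $\vec{x}_*^- < \vec{x}^+ < \vec{x}^-$ (the other case is symmetric) and fix $i$ with $u_{\vec{x}_*^-, i} = 1$, which is equivalent to $w_i \vec{x}_*^- + b_i \leq 0$. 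If $w_i \geq 0$, monotonicity of $\sigma$ applied to $w_i \vec{x}^+ + b_i \leq w_i \vec{x}^- + b_i$ immediately yields $\Phi_i(\vec{x}^+) \leq \Phi_i(\vec{x}^-)$. If $w_i < 0$, the inequalities $\vec{x}^+, \vec{x}^- > \vec{x}_*^-$ combined with $w_i \vec{x}_*^- + b_i \leq 0$ force both $w_i \vec{x}^+ + b_i$ and $w_i \vec{x}^- + b_i$ to be strictly negative, so the assumption $\sigma(t) = 0$ for $t \leq 0$ gives $\Phi_i(\vec{x}^+) = \Phi_i(\vec{x}^-) = 0$. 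Summing over $i$ proves the first inequality, and the second is just the definition of $m_{\vec{x}_*^-}$.

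To finish, define the gaps $I_k \coloneqq (c_{k-1}^+, c_k^+)$ with $c_0^+ \coloneqq -\infty$ and $c_{M^-}^+ \coloneqq +\infty$. By construction the $I_k$ are pairwise disjoint, every point of $\set{X}^-$ lies in some $I_k$, and each center $c_j^-$ lies in $I_j$. Hence whenever the second loop picks some $\vec{x}_*^- \in I_k$, for every $l \neq k$ there is a positive center in $\set{C}^+ \subset \set{X}^+$ strictly between $\vec{x}_*^-$ and $c_l^-$ (namely $c_k^+$ if $l > k$, or $c_{k-1}^+$ if $l < k$); the lemma then implies that no $c_l^-$ with $l \neq k$ is removed from $\set{C}$ in that iteration. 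Since all $M^-$ negative centers must eventually be removed before termination, the loop performs at least $M^-$ iterations, producing at least $M^-$ second-layer neurons. The only real obstacle is bookkeeping: one must arrange the extras so that no positive point leaks into an unintended gap, which is guaranteed by the tightness of the clustering.
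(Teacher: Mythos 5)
Your proposal is correct and follows essentially the same route as the paper: a one-dimensional alternating grid of centers, a monotonicity lemma for $\langle\vec{u}_{\vec{x}_*^-},\Phi(\,\cdot\,)\rangle$ along the line through $\vec{x}_*^-$ (your case split on the sign of $w_i$ is exactly the 1D specialization of the paper's ray argument, since in 1D the sign of $\langle\vec{w}_i,\vec{x}^--\vec{x}_*^-\rangle$ is the sign of $w_i$ when $\vec{x}^->\vec{x}_*^-$), and a counting step showing no neuron can serve two gaps at once. The only cosmetic difference is that the paper frames the final count as a correctness requirement of the output $F$ on the centers $\vec{c}_\ell^-$ rather than as an iteration count, and allows the extra points to be distributed near any center instead of clustering them all near $c_1^-$ and $c_1^+$; both framings give the same bound, and your bookkeeping caveat about the clusters being tight enough to stay in the gap $(-\infty,c_1^+)$ is trivially resolvable by shrinking the cluster radius below $\epsilon$.
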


\section{Proofs}
\label{sec:proofs}

In this section, we present the proofs that have previously been omitted.

\subsection{Proof of Proposition~\ref{pro:pruning/correctness}}
\label{sec:proofCorrectness}

We use the following terminology. 
\begin{definition}\label{def:hyperplane-separation}
    Let $\vec{v} \in \R^d \setminus \{\vec{0}\}$, $\tau \in \R$ and $t \geq 0$. A hyperplane $H[\vec{v}, \tau]$ \emph{$t$-separates $\set{X}^-$ from $\set{X}^+$} if
    \begin{equation*}
        \begin{aligned}
            \langle \vec{v}, \vec{x}^- \rangle + \tau & \leq -t \qquad \text{for all } \vec{x}^- \in \set{X}^-,\\
            \langle \vec{v}, \vec{x}^+ \rangle + \tau & > +t \qquad \text{for all } \vec{x}^+ \in \set{X}^+.
        \end{aligned}
    \end{equation*}
    If $t = 0$, we simply say that $H[\vec{v}, \tau]$ \emph{separates $\set{X}^-$ from $\set{X}^+$.}
\end{definition}

To prove Proposition~\ref{pro:pruning/correctness} it suffices to prove the following statement. It shows that the probability that the first loop of Algorithm~\ref{alg:pruning} stops, and hence the algorithm terminates, increases exponentially in terms of the number of hyperplanes $n$. Allowing $n$ to grow unbounded then directly yields Proposition~\ref{pro:pruning/correctness}.
\begin{proposition}
    \label{pro:interpolation/termination}
    Let $\set{X}^-, \set{X}^+ \subset R\B_2^d$ be finite and $\delta$-separated with $N^- \coloneqq |\set{X}^-|$ and $N^+ \coloneqq |\set{X}^+|$. Let $\lambda \gtrsim R$. Assume that the loop in Algorithm~\ref{alg:pruning} ran for at least $n$ iterations, where
    \begin{equation}\label{eq:interpolation/termination/condition-on-n}
        n \gtrsim \delta^{-1} \lambda \cdot \log(N^- N^+ / \eta).
    \end{equation}
    Then, the exit condition of the loop is satisfied with probability at least $1 - \eta$.
\end{proposition}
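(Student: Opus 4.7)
The plan is to fix an arbitrary pair $(\vec{x}^-,\vec{x}^+)\in\set{X}^-\times\set{X}^+$, lower bound the probability $p$ that a single random hyperplane $H[\vec{w},b]$ drawn as in Algorithm~\ref{alg:pruning} satisfies $\langle\vec{w},\vec{x}^-\rangle\leq -b<\langle\vec{w},\vec{x}^+\rangle$, and then apply a union bound. Indeed, once we show $p\gtrsim \delta/\lambda$ uniformly over pairs, independence of the $n$ draws gives $\bP[\text{this pair is never separated}]\leq(1-p)^n\leq e^{-np}$, and a union bound over the $N^-N^+$ pairs under the hypothesis~\eqref{eq:interpolation/termination/condition-on-n} will produce the claim.

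To estimate $p$ for a fixed pair, set $\vec{u}=\vec{x}^+-\vec{x}^-$, so $\|\vec{u}\|_2\geq\delta$ by the separation assumption. Conditioning on $\vec{w}$ and using that $-b\sim\Unif([-\lambda,\lambda])$ is independent of $\vec{w}$, the conditional probability of separation equals
\begin{equation*}
    \frac{\bigl|[\langle\vec{w},\vec{x}^-\rangle,\langle\vec{w},\vec{x}^+\rangle)\cap[-\lambda,\lambda]\bigr|}{2\lambda}.
\end{equation*}
On the event $E=\{|\langle\vec{w},\vec{x}^-\rangle|\leq\lambda\text{ and }|\langle\vec{w},\vec{x}^+\rangle|\leq\lambda\}$, the interval is automatically contained in $[-\lambda,\lambda]$, so the expression above simplifies to $\max(0,\langle\vec{w},\vec{u}\rangle)/(2\lambda)$, and hence
\begin{equation*}
    p\;\geq\;\frac{1}{2\lambda}\,\E\bigl[\max(0,\langle\vec{w},\vec{u}\rangle)\,\1_E\bigr].
\end{equation*}
The unrestricted expectation is $\|\vec{u}\|_2/\sqrt{2\pi}$ (mean of a half-normal), while the truncation correction is controlled by Cauchy--Schwarz as $\E[\max(0,\langle\vec{w},\vec{u}\rangle)\1_{E^c}]\leq\|\vec{u}\|_2\sqrt{\bP[E^c]}$. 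Since $\langle\vec{w},\vec{x}^\pm\rangle$ is centered Gaussian with variance at most $R^2$, standard Gaussian tail estimates give $\bP[E^c]\leq 4\exp(-\lambda^2/(2R^2))$. Choosing $\lambda$ larger than an absolute multiple of $R$ then renders the tail contribution a small fraction of the main term, yielding $p\gtrsim\|\vec{u}\|_2/\lambda\gtrsim\delta/\lambda$.

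The last step is routine: after $n$ i.i.d. draws, the probability that some pair remains unseparated is at most $N^-N^+\exp(-cn\delta/\lambda)$ for an absolute constant $c>0$, and demanding this to be at most $\eta$ gives precisely the requirement~\eqref{eq:interpolation/termination/condition-on-n}. The only delicate point lies in the probability estimate above, where one must verify that the absolute-constant hypothesis $\lambda\gtrsim R$ is strong enough to both keep the interval inside $[-\lambda,\lambda]$ with high probability and to absorb the truncation error $\|\vec{u}\|_2\sqrt{\bP[E^c]}$ while preserving a constant-fraction of the half-normal main term. This goes through because $\bP[E^c]$ decays like $\exp(-\lambda^2/R^2)$ and is therefore below any prescribed constant once $\lambda/R$ exceeds a universal threshold, independently of $\delta$ or the chosen pair.
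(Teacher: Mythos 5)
Your argument is correct and follows the same route as the paper's: fix a pair, lower-bound the probability that a single random hyperplane separates it by $\gtrsim\delta/\lambda$, then union bound over the $N^-N^+$ pairs and $n$ independent draws. The only difference is that you derive the single-hyperplane estimate directly from the uniform-shift construction (conditioning on $\vec{w}$, computing the half-normal expectation and controlling the truncation error via Gaussian tails), whereas the paper simply invokes Lemma~\ref{lem:prob_single_hyperplane_separates_two_points} (Theorem~18 of \citealp{dirksen2022separation}) for this bound; your derivation is in effect a correct proof of that cited lemma.
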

In the proof, we will use the following lower bound on the probability that a random hyperplane from Algorithm~\ref{alg:pruning} separates a fixed pair of points.

\begin{lemma}
    \cite[Theorem~18]{dirksen2022separation}
    \label{lem:prob_single_hyperplane_separates_two_points}
    There is an absolute constant $c > 0$ such that the following holds. Let $\vec{x}^-, \vec{x}^+ \in R\B_2^d$. Let $\vec{g} \in \R^d$ denote a standard Gaussian random vector and let $\tau\in [-\lambda,\lambda]$ be uniformly distributed. If $\lambda\gtrsim R$, then with probability at least $c\|\vec{x}^+ - \vec{x}^-\|_2 / \lambda$, the hyperplane $H[\vec{g},\tau]$ $\|\vec{x}^+ - \vec{x}^-\|_2$-separates $\vec{x}^-$ from $\vec{x}^+$.
\end{lemma}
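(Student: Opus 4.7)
The plan is to reduce to a one-dimensional problem by projecting onto the direction between the two points, and then exploit the independence structure of the Gaussian components. Let $d = \|\vec{x}^+ - \vec{x}^-\|_2$, $\vec{u} = (\vec{x}^+ - \vec{x}^-)/d$, and $\vec{m} = (\vec{x}^+ + \vec{x}^-)/2$. I introduce the scalars $g_1 = \langle \vec{g}, \vec{u} \rangle \sim N(0,1)$ and $\mu = \langle \vec{g}, \vec{m} \rangle$. Since $\langle \vec{g}, \vec{x}^\pm \rangle = \mu \pm (d/2) g_1$, the two $\|\vec{x}^+-\vec{x}^-\|_2$-separation conditions reduce to $\tau$ lying in the symmetric interval $I(\vec{g}) = [-\mu - (d/2)(g_1 - 2),\, -\mu + (d/2)(g_1 - 2)]$, which is non-degenerate precisely when $g_1 > 2$.

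Next, I would condition on $\vec{g}$ and note that the probability that $\tau \sim \Unif([-\lambda, \lambda])$ lands in $I(\vec{g})$ equals $|I(\vec{g}) \cap [-\lambda, \lambda]|/(2\lambda)$. Restricting to the favorable event $\mathcal{E} = \{g_1 \in [2, 3]\} \cap \{|\mu| \leq \lambda/2\}$, the interval $I(\vec{g})$ has length $d(g_1 - 2)$; since $d \leq 2R$ and $\lambda$ is assumed to be a sufficiently large constant multiple of $R$, the endpoints of $I(\vec{g})$ lie in $[-\lambda/2 - d/2, \lambda/2 + d/2] \subset [-\lambda, \lambda]$, so $I(\vec{g})$ is fully contained in the $\tau$-window. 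Hence, on $\mathcal{E}$ the conditional separation probability equals exactly $d(g_1 - 2)/(2\lambda)$.

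It remains to lower bound the expectation of this quantity on $\mathcal{E}$. I would decompose $\vec{m} = \langle \vec{m}, \vec{u} \rangle \vec{u} + \vec{m}^\perp$ so that $\mu = \langle \vec{m}, \vec{u} \rangle g_1 + \langle \vec{g}, \vec{m}^\perp \rangle$, where the second summand is Gaussian with variance $\|\vec{m}^\perp\|^2 \leq R^2$ and independent of $g_1$. Conditioning on $g_1 \in [2,3]$, the conditional mean of $\mu$ is bounded by $3R$ and its conditional standard deviation by $R$; once the hidden constant in $\lambda \gtrsim R$ is large enough, a standard Gaussian tail estimate yields $P(|\mu| \leq \lambda/2 \mid g_1) \geq 1/2$. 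Combining this with the strictly positive absolute constant $\int_2^3 (z-2)\,\phi(z)\, dz$ (where $\phi$ is the standard Gaussian density) gives the claimed lower bound $c\, d/\lambda$. The one delicate point is ensuring that the $\tau$-window is wide enough not to clip $I(\vec{g})$ in the relevant regime, which is exactly what the hypothesis $\lambda \gtrsim R$ with a sufficiently large hidden constant buys; I do not expect any further obstacles.
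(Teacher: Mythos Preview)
Your argument is correct. The paper does not prove this lemma itself---it is quoted verbatim as \cite[Theorem~18]{dirksen2022separation}---so there is no in-paper proof to compare against. Your approach (project onto the line through $\vec{x}^\pm$, identify the admissible $\tau$-interval of length $\|\vec{x}^+-\vec{x}^-\|_2(g_1-2)$, restrict to a favorable event on which this interval is fully contained in $[-\lambda,\lambda]$, and use independence of the parallel and orthogonal Gaussian components to control $\mu$) is the natural one and all steps go through as you describe. One cosmetic point: you overload the symbol $d$ for $\|\vec{x}^+-\vec{x}^-\|_2$, which in the paper already denotes the ambient dimension; consider renaming it.
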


\begin{proof}[Proposition~\ref{pro:interpolation/termination}]
    Fix $\vec{x}^- \in \set{X}^-$ and $\vec{x}^+ \in \set{X}^+$. We consider i.i.d.\ copies $H_1,\ldots,H_n$ of a hyperplane $H = H[\vec{w}, b]$, where $\vec{w} \sim \Normal(\vec{0}, \mat{I}_d)$ and $b \sim \Unif([-\lambda, \lambda])$ are independent. By Lemma~\ref{lem:prob_single_hyperplane_separates_two_points}, the probability that $\vec{x}^-$ and $\vec{x}^+$ is not separated by any of these hyperplanes is at most $(1 - c\delta/\lambda)^n$. By taking a union bound over all $N^- N^+$ pairs of points, we see that the probability that at least one pair has no separating hyperplane is at most 
    \begin{equation*}
        N^- N^+ \left( 1 - c \frac{\delta}{\lambda} \right)^n
        \leq N^- N^+ e^{- c \frac{\delta}{\lambda} n}
        \leq \eta,
    \end{equation*}
    where we used that $1 + x \leq e^x$ for $x \in \R$ and the last inequality follows from~\eqref{eq:interpolation/termination/condition-on-n}.
\end{proof}

\subsection{Proof of Proposition~\ref{pro:run-time}}

The calculation of the radius takes time $\O(N^- + N^+)$. The loops run for $n$ and $\hat{n}$ iterations where each iteration takes time $\O(N^- N^+)$ and $\O(n (N^- + N^+))$, respectively. Transforming all samples once with the first layer (which is needed to compute the second loop) takes time $\O(n (N^- + N^+))$. This totals $\O(n (N^- N^+ + \hat{n} N^- + \hat{n} N^+)) = \O(n N^2)$, where we used that $\hat{n} \leq N$. Applying Proposition~\ref{pro:interpolation/termination} completes the proof.

\subsection{Proof of Proposition~\ref{pro:limitShapeAct}}

Recall that the neuron $\hat{\varphi}_{\vec{x}_*^-}$ activates on $\vec{x}\in \R^d$ if and only if
\begin{equation*}
    \langle \vec{u}_{\vec{x}_*^-}, \Phi(\vec{x}) \rangle < m_{\vec{x}_*^-} = \min_{\vec{x}^+ \in \set{X}^+} \langle \vec{u}_{\vec{x}_*^-}, \Phi(\vec{x}^+) \rangle.
\end{equation*}

We make two observations. First, for any $\vec{x}\in \R^d$,
\begin{equation*}
    \langle\vec{u}_{\vec{x}_*^-}, \Phi(\vec{x}) \rangle = \sum_{i=1}^n \1_{\{\Phi(\vec{x}_*^-)_i=0\}}\Phi(\vec{x})_i = \sum_{i=1}^n \1_{\{\langle \vec{w}_i, \vec{x}_*^-\rangle+b_i\leq 0<\langle \vec{w}_i, \vec{x}\rangle+b_i\}},
\end{equation*}
and hence, by the law of large numbers and by symmetry,
\begin{equation}\label{eq:limitShapeAct/limit-n}
    \lim_{n\to\infty} \frac{1}{n}\langle\vec{u}_{\vec{x}_*^-}, \Phi(\vec{x}) \rangle = \frac{1}{2}\bP(\sign(\langle \vec{w}, \vec{x}_*^-\rangle+b)\neq \sign(\langle \vec{w}, \vec{x}\rangle+b))
\end{equation}
almost surely, where $\vec{w} \sim \Normal(\vec{0}, \mat{I}_d)$ and $b \sim \Unif([-\lambda,\lambda])$ are independent. Second, by \cite[Lemma A.1]{DMS22}, for any $\vec{x},\vec{y}\in \R^d$,
\begin{align*}
    & 2\lambda\bP_{b}(\sign(\langle \vec{w},\vec{x}\rangle+b)\neq \sign(\langle \vec{w},\vec{y}\rangle+b)) \nonumber\\
    & \ \ = |\langle \vec{w},\vec{x}-\vec{y}\rangle| 1_{\{|\langle \vec{w},\vec{x}\rangle|\leq \lambda, |\langle \vec{w},\vec{y}\rangle|\leq \lambda\}}\\
    & \ \ \ \ + 2\lambda (1_{\{\langle \vec{w},\vec{x}\rangle>\lambda, \langle \vec{w},\vec{y}\rangle<-\lambda\}} + 1_{\{\langle \vec{w},\vec{x}\rangle<-\lambda, \langle \vec{w},\vec{y}\rangle>\lambda\}})  \nonumber\\
    & \ \ \ \ + (\lambda-\langle \vec{w},\vec{x}\rangle) 1_{\{\langle \vec{w},\vec{y}\rangle>\lambda,|\langle \vec{w},\vec{x}\rangle|\leq \lambda\}} + (\lambda-\langle \vec{w},\vec{y}\rangle) 1_{\{\langle \vec{w},\vec{x}\rangle>\lambda,|\langle \vec{w},\vec{y}\rangle|\leq \lambda\}} \nonumber \\ 
    & \ \ \ \ + (\lambda+\langle \vec{w},\vec{x}\rangle) 1_{\{\langle \vec{w},\vec{y}\rangle<-\lambda,|\langle \vec{w},\vec{x}\rangle|\leq \lambda\}}  + (\lambda+\langle \vec{w},\vec{y}\rangle) 1_{\{\langle \vec{w},\vec{x}\rangle<-\lambda,|\langle \vec{w},\vec{y}\rangle|\leq \lambda\}}, 
\end{align*}
where $\bP_{b}$ is the probability with respect to $b$. As $\bP(|\langle \vec{w},\vec{z}\rangle|>\lambda)\leq 2e^{-c\lambda^2/\|\vec{z}\|_2^2}$ for any $\vec{z}\in \R^d$, we find by taking expectations with respect to $\vec{w}$, taking the limit for $\lambda\to\infty$, and using monotone convergence that
\begin{equation}\label{eq:limitShapeAct/limit-lamb}
    \lim_{\lambda\to \infty} 2\lambda\bP(\sign(\langle \vec{w},\vec{x}\rangle+b) \neq \sign(\langle \vec{w},\vec{y}\rangle+b))
    = \E|\langle \vec{w},\vec{x}-\vec{y}\rangle|
    =\sqrt{2/\pi}\|\vec{x}-\vec{y}\|_2.
\end{equation}

We proceed with the proof by distinguishing two cases. Let $\vec{x} \in \R^d$, assume $\norm{\vec{x}_*^- - \vec{x}}_2 < \min_{\vec{x}^+ \in \set{X}^+} \norm{\vec{x}_*^- - \vec{x}^+}_2$ and define
\begin{equation*}
    \varepsilon \coloneqq \frac{\min_{\vec{x}^+ \in \set{X}^+} \norm{\vec{x}_*^- - \vec{x}^+}_2 - \norm{\vec{x}_*^- - \vec{x}}_2}{2} > 0.
\end{equation*}
By~\eqref{eq:limitShapeAct/limit-lamb}, there exists $\Lambda > 0$ such that for $\lambda > \Lambda$,
\begin{equation*}
\begin{aligned}
    & \sqrt{2 \pi} \lambda\bP(\sign(\langle \vec{w},\vec{x}_*^-\rangle+b)\neq \sign(\langle \vec{w},\vec{x}\rangle+b))\\
    & \qquad < \norm{\vec{x}_*^- - \vec{x}}_2 + \varepsilon\\
    & \qquad = \min_{\vec{x}^+ \in \set{X}^+} \norm{\vec{x}_*^- - \vec{x}^+}_2 - \varepsilon\\
    & \qquad < \min_{\vec{x}^+ \in \set{X}^+} \sqrt{2 \pi} \lambda\bP(\sign(\langle \vec{w},\vec{x}_*^-\rangle+b)\neq \sign(\langle \vec{w},\vec{x}^+\rangle+b)),
\end{aligned}
\end{equation*}
and hence
\begin{equation*}
\begin{aligned}
    & \bP(\sign(\langle \vec{w},\vec{x}_*^-\rangle+b)\neq \sign(\langle \vec{w},\vec{x}\rangle+b))\\
    & \qquad < \min_{\vec{x}^+ \in \set{X}^+} \bP(\sign(\langle \vec{w},\vec{x}_*^-\rangle+b)\neq \sign(\langle \vec{w},\vec{x}^+\rangle+b)).
\end{aligned}
\end{equation*}
Further, define
\begin{equation*}
\begin{aligned}
    \delta \coloneqq \frac{1}{2} \bigl( & \min_{\vec{x}^+ \in \set{X}^+} \bP(\sign(\langle \vec{w},\vec{x}_*^-\rangle+b)\neq \sign(\langle \vec{w},\vec{x}^+\rangle+b)) \\
    & \qquad \qquad - \bP(\sign(\langle \vec{w},\vec{x}_*^-\rangle+b)\neq \sign(\langle \vec{w},\vec{x}\rangle+b)) \bigr) > 0.
\end{aligned}
\end{equation*}
By~\eqref{eq:limitShapeAct/limit-n}, almost surely, there exists $N \in \N$ such that for $n > N$,
\begin{equation*}
\begin{aligned}
    \frac{2}{n}\langle\vec{u}_{\vec{x}_*^-}, \Phi(\vec{x}) \rangle
    & < \bP(\sign(\langle \vec{w},\vec{x}_*^-\rangle+b)\neq \sign(\langle \vec{w},\vec{x}\rangle+b)) + \delta\\
    & = \min_{\vec{x}^+ \in \set{X}^+} \bP(\sign(\langle \vec{w},\vec{x}_*^-\rangle+b)\neq \sign(\langle \vec{w},\vec{x}^+\rangle+b)) - \delta\\
    & < \min_{\vec{x}^+ \in \set{X}^+} \frac{2}{n} \langle\vec{u}_{\vec{x}_*^-}, \Phi(\vec{x}^+) \rangle,
\end{aligned}
\end{equation*}
and hence
\begin{equation*}
    \langle\vec{u}_{\vec{x}_*^-}, \Phi(\vec{x}) \rangle
    < \min_{\vec{x}^+ \in \set{X}^+} \langle\vec{u}_{\vec{x}_*^-}, \Phi(\vec{x}^+) \rangle.
\end{equation*}
This shows that
\begin{equation*}
    \lim_{\lambda \to \infty} \lim_{n \to \infty} \1_{\set{A}_{\vec{x}_*^-}}(\vec{x}) = \1_{\set{B}_{\vec{x}_*^-}}(\vec{x})
\end{equation*}
almost surely if $\norm{\vec{x}_*^- - \vec{x}}_2 < \min_{\vec{x}^+ \in \set{X}^+} \norm{\vec{x}_*^- - \vec{x}^+}_2$. The remaining case $\norm{\vec{x}_*^- - \vec{x}}_2 > \min_{\vec{x}^+ \in \set{X}^+} \norm{\vec{x}_*^- - \vec{x}^+}_2$ can be proved with only minor changes and is omitted.

\subsection{Proof of Theorem~\ref{thm:pruning/covering-bound}}

The key observation to prove Theorem~\ref{thm:pruning/covering-bound} is stated in Lemma~\ref{lem:pruning/lemma}. To prove it we will need two ingredients. The first is a slight modification of~\cite[Theorem 26]{dirksen2022separation}.

\begin{lemma}
\label{lem:Thres_NN_finite_1st_layer}
    There exists an absolute constant $c>0$ such that the following holds. Let $\set{X}^-, \set{X}^+ \subset R\B_2^d$ be $\delta$-separated sets with $N^- := |\set{X}^-|$,  $N^+ := |\set{X}^+|$. Let $\mat{W} \in \R^{n \times d}$ be a matrix with standard Gaussian entries, $\vec{b} \in \R^n$ be uniformly distributed in $[-\lambda, \lambda]^n$ and let $\mat{W}$ and $b$ be independent. Consider the associated random threshold layer $\Phi\colon \R^d \to \R^{n}$
    \begin{equation*}
        \Phi(\vec{x}) = \Thres(\mat{W} \vec{x} + \vec{b}),
        \quad \vec{x} \in \R^d.
    \end{equation*}	
    Suppose that $\lambda \gtrsim R$ and 
    \begin{equation}\label{eq:Thres_NN_finite_1st_layer/condition_n}
        n \gtrsim \delta^{-1} \lambda \cdot \log(2 N^- N^+ / \eta).
    \end{equation}
    Then with probability at least $1-\eta$, the following event occurs: For every $\vec{x}^- \in \set{X}^-$, the vector $\vec{u}_{\vec{x}^-}\in \{0, 1\}^n$ 
    \begin{equation}\label{eq:lin_sep_random_ReLU_NN_finite_1st_layer:direction_minus}
        (\vec{u}_{\vec{x}^-})_i = \begin{cases}
        1, & (\Phi(\vec{x}^-))_i = 0,\\
        0, & \text{otherwise},
        \end{cases}
    \end{equation}
    satisfies $\langle \vec{u}_{\vec{x}^-},\Phi(\vec{x}^-) \rangle=0$ and
    \begin{equation*}
	\langle\vec{u}_{\vec{x}^-},\Phi(\vec{x}^+)\rangle \geq c \|\vec{x}^+-\vec{x}^-\|_2 \cdot \lambda^{-1} n \qquad \text{for all $\vec{x}^+\in \set{X}^+$.}
    \end{equation*}
\end{lemma}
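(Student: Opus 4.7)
The plan is to reduce the lemma to a Chernoff-type concentration estimate for a sum of i.i.d.\ Bernoulli random variables, followed by a union bound over pairs. The identity $\langle \vec{u}_{\vec{x}^-}, \Phi(\vec{x}^-)\rangle = 0$ is immediate from~\eqref{eq:lin_sep_random_ReLU_NN_finite_1st_layer:direction_minus}, since the indicator $\vec{u}_{\vec{x}^-}$ and the pattern $\Phi(\vec{x}^-)$ have disjoint supports. So the real content is the lower bound on $\langle \vec{u}_{\vec{x}^-}, \Phi(\vec{x}^+)\rangle$ uniformly over all pairs.

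Fix a pair $(\vec{x}^-, \vec{x}^+) \in \set{X}^- \times \set{X}^+$. Since $\sigma = \Thres$, the coordinatewise product $(\vec{u}_{\vec{x}^-})_i \cdot (\Phi(\vec{x}^+))_i$ equals $1$ precisely when $\langle \vec{w}_i, \vec{x}^-\rangle + b_i \leq 0 < \langle \vec{w}_i, \vec{x}^+\rangle + b_i$, i.e.\ when the $i$-th random hyperplane separates $\vec{x}^-$ from $\vec{x}^+$ in the oriented sense of Definition~\ref{def:hyperplane-separation} with $t = 0$. Let $X_i \in \{0,1\}$ denote this indicator. The $X_i$ are i.i.d.\ Bernoulli with $\langle \vec{u}_{\vec{x}^-}, \Phi(\vec{x}^+)\rangle = \sum_{i=1}^n X_i$. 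Because $\|\vec{x}^+ - \vec{x}^-\|_2$-separation (as in Lemma~\ref{lem:prob_single_hyperplane_separates_two_points}) in particular implies $0$-separation in the same orientation, Lemma~\ref{lem:prob_single_hyperplane_separates_two_points} yields $p \coloneqq \E X_i \geq c_1 \|\vec{x}^+ - \vec{x}^-\|_2 / \lambda$ for an absolute constant $c_1 > 0$.

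Setting $\mu \coloneqq n p \geq c_1 n \|\vec{x}^+ - \vec{x}^-\|_2 / \lambda$, a standard multiplicative Chernoff bound gives $\bP(\sum_i X_i < \mu/2) \leq e^{-\mu/8}$. Using the $\delta$-separation to bound $\mu \geq c_1 n \delta / \lambda$ in the worst case and taking a union bound over all $N^- N^+$ pairs shows that the desired inequality fails for some pair with probability at most $N^- N^+ e^{-c_1 n \delta / (8 \lambda)}$, which is in turn at most $\eta$ under the hypothesis~\eqref{eq:Thres_NN_finite_1st_layer/condition_n} (with the implicit constant therein chosen large enough). On the complementary event, for every pair $(\vec{x}^-, \vec{x}^+)$,
\[
    \langle \vec{u}_{\vec{x}^-}, \Phi(\vec{x}^+) \rangle \;\geq\; \tfrac{1}{2}\mu \;\geq\; \tfrac{c_1}{2}\,\|\vec{x}^+ - \vec{x}^-\|_2 \cdot \lambda^{-1} n,
\]
which is the stated bound with $c = c_1/2$.

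The one delicate point in this plan is that the expected value $\mu$ is smallest, and hence the concentration weakest, precisely for pairs at the minimal distance $\delta$. This is handled cleanly by the \emph{multiplicative} Chernoff bound, whose exponent degrades like $\mu \gtrsim n\delta/\lambda$; the resulting width requirement $n \gtrsim \lambda\delta^{-1}\log(N^- N^+/\eta)$ needed to absorb the union bound matches the hypothesis exactly. Replacing multiplicative Chernoff with an additive (Hoeffding-type) bound would not suffice, since the target $\mu/2$ scales with $\|\vec{x}^+-\vec{x}^-\|_2$ rather than an absolute constant.
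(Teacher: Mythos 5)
Your proposal is correct and follows essentially the same route as the paper's proof: identify $\langle \vec{u}_{\vec{x}^-}, \Phi(\vec{x}^+)\rangle$ as a count of separating hyperplanes, lower-bound the per-hyperplane success probability via Lemma~\ref{lem:prob_single_hyperplane_separates_two_points}, apply a multiplicative Chernoff bound with the $\delta$-separation controlling the worst-case exponent, and finish with a union bound over the $N^- N^+$ pairs. The only cosmetic difference is that the paper applies Chernoff to the indicators of $\|\vec{x}^+-\vec{x}^-\|_2$-separation and then uses that these imply membership in the separating index set, whereas you apply Chernoff directly to the $0$-separation indicators after lower-bounding their mean by the probability of the stronger event; these are equivalent.
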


Geometrically, Lemma~\ref{lem:Thres_NN_finite_1st_layer} states that with high probability the hyperplane $H[\vec{u}_{\vec{x}^-},0]$ linearly separates $\Phi(\vec{x}^-)$ from $\Phi(\set{X}^+)$ and the separation margin increases with both $n$ and the distance between $\vec{x}^-$ and $\set{X}^+$.

\begin{proof}
By \eqref{eq:lin_sep_random_ReLU_NN_finite_1st_layer:direction_minus} it is clear that $\langle \vec{u}_{\vec{x}^-}, \Phi(\vec{x}^-) \rangle=0$. 
    Let $\mat{W} = [\vec{w}_1, \dots, \vec{w}_n]^\top \in \R^{n \times d}$ and $\vec{b} = (b_1, \dots, b_n)^\top \in \R^n$ be the weight matrix and bias vector of $\Phi$, respectively. For $\vec{x}^- \in \set{X}^-$ and $\vec{x}^+ \in \set{X}^+$ define
    \begin{equation*}
    \begin{aligned}
        \set{I}_{\vec{x}^-, \vec{x}^+} & = \{ i \in \range{n} : \langle \vec{w}_i, \vec{x}^- \rangle \leq -b_i < \langle \vec{w}_i, \vec{x}^+ \rangle \},
    \end{aligned}
    \end{equation*}
    and define the events
    \begin{equation*}
        B_{\vec{x}^-, \vec{x}^+}^i = \{ H[\vec{w}_i, b_i] \ \norm{\vec{x}^+ - \vec{x}^-}_2\text{-separates } \vec{x}^- \text{ from } \vec{x}^+ \},
    \end{equation*}
    For $n'(\vec{x}^-, \vec{x}^+) > 0$ to be specified later, set
    \begin{equation*}
    \begin{aligned}
        B_{\vec{x}^-, \vec{x}^+, n'(\vec{x}^-, \vec{x}^+)} & = \Big\{ \sum_{i=1}^n \1_{B_{\vec{x}^-, \vec{x}^+}^i} \geq n'(\vec{x}^-, \vec{x}^+) \Big\},\\
        B & = \bigcap_{(\vec{x}^-, \vec{x}^+) \in \set{X}^- \times \set{X}^+} B_{\vec{x}^-, \vec{x}^+, n'(\vec{x}^-, \vec{x}^+)}.
    \end{aligned}
    \end{equation*}
    On the event $B$, for every $\vec{x}^- \in \set{X}^-$ and $\vec{x}^+ \in \set{X}^+$,
    \begin{equation*}
        \langle \vec{u}_{\vec{x}^-}, \Phi(\vec{x}^+) \rangle
        = \sum_{i \in \set{I}_{\vec{x}^-, \vec{x}^+}} \Thres(\langle \vec{w}_i, \vec{x}^+ \rangle + b_i)
        = |\set{I}_{\vec{x}^-, \vec{x}^+}|
        \geq n'(\vec{x}^-, \vec{x}^+).
    \end{equation*}
    For every $i \in \range{n}$, Lemma~\ref{lem:prob_single_hyperplane_separates_two_points} implies that $\bP(B_{\vec{x}^-, \vec{x}^+}^i) \geq c \norm{\vec{x}^+ - \vec{x}^-}_2 \lambda^{-1}$ for an absolute constant $c > 0$ if $\lambda \gtrsim R$. Therefore, Chernoff's inequality for sums of independent Bernoulli random variables \citep[see, e.g.,][Section 2.3]{vershynin2018high} implies that
    \begin{equation*}
        \bP\left(\sum_{i=1}^n \1_{B_{\vec{x}^-, \vec{x}^+}^i} \leq \frac{c}{2} \lambda^{-1} \norm{\vec{x}^+ - \vec{x}^-}_2 n\right) \leq \exp(- c' \lambda^{-1} \norm{\vec{x}^+ - \vec{x}^-}_2 n),
    \end{equation*}
    where $c' > 0$ is an absolute constant. Setting $n'(\vec{x}^-, \vec{x}^+) = \frac{c}{2} \norm{\vec{x}^+ - \vec{x}^-}_2 \lambda^{-1} n$, we obtain
    \begin{equation*}
        \bP(B_{\vec{x}^-, \vec{x}^+, n'(\vec{x}^-, \vec{x}^+)}^c)
        \leq \exp(- c' \lambda^{-1} \norm{\vec{x}^+ - \vec{x}^-}_2 n)
        \leq \exp(- c' \lambda^{-1} \delta n).
    \end{equation*}
    Hence, by the union bound and \eqref{eq:Thres_NN_finite_1st_layer/condition_n},
    \begin{equation*}
        \bP(B^c)
        \leq N^- N^+ \exp(- c' \lambda^{-1} \delta n)
        \leq \eta.
    \end{equation*}
\end{proof}

Our second proof ingredient is the following lemma. It is an immediate consequence
of \cite[Theorem 2.9]{DiM21}.

\begin{lemma}
    \label{lem:signPert}
    Consider $\vec{c}_1,\ldots,\vec{c}_M\subset \R^d$ and $\set{X}_1,\ldots,\set{X}_M\subset \R^d$ such that $\set{X}_j\subset \B_2^d(\vec{c}_j,r_j) \subset R\B_2^d$ for all $j\in [M]$. Let 
    \begin{equation*}
        r_j \lesssim \frac{r_j'}{\sqrt{\log(e \lambda/r_j')}},
        \qquad r' = \min_{j\in [M]} r'_j.
    \end{equation*}
    Let further $\vec{w}_1, \dots, \vec{w}_n\sim \Normal(\vec{0}, \mat{I}_d)$ and $b_1, \dots, b_n\sim \Unif([-\lambda, \lambda])$ all be independent. If $\lambda \gtrsim R$ and
    \begin{align*}
    n \gtrsim \frac{\lambda}{r'}\log (2 M / \eta) + \max_{j\in [M]} \frac{\lambda}{(r'_j)^3}w^2(\set{X}_j - \vec{c}_j),
    \end{align*} 
    then, with probability at least $1 - \eta$, for all $j\in [M]$ and $\vec{x} \in \set{X}_j$,
    \begin{equation*}
        |\{i\in [n] \ : \ \Thres(\langle \vec{w}_i,\vec{c}_j\rangle + b_i)\neq \Thres(\langle \vec{w}_i,\vec{x}\rangle + b_i)\}| \lesssim \frac{r_j' n}{\lambda}.
    \end{equation*}
\end{lemma}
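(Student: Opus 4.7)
The authors flag this lemma as an immediate consequence of \cite[Theorem~2.9]{DiM21}, a sharp uniform random hyperplane tessellation bound for the ensemble used in Algorithm~\ref{alg:pruning}. My plan is to apply that theorem separately to each augmented component $\set{Y}_j = \set{X}_j \cup \{\vec{c}_j\}$, combine the resulting events via a union bound over $j \in [M]$, and finally collapse the component diameter $r_j$ into the distance scale $r'_j$ using the logarithmic radii hypothesis.

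\textbf{Key steps.} I would begin by stating the tessellation bound from \cite{DiM21} in convenient form: for a set $\set{Y}\subset R\B_2^d$, an error parameter $\epsilon>0$ and confidence $\eta>0$, once $n \gtrsim (\lambda/\epsilon)\log(1/\eta) + (\lambda/\epsilon^3) \gw^2(\set{Y})$ hyperplanes are drawn, with probability at least $1-\eta$ the fraction of hyperplanes separating any $\vec{x},\vec{y}\in\set{Y}$ agrees with $\|\vec{x}-\vec{y}\|_2/(2\lambda)$ up to additive error $O(\epsilon/\lambda)$; this is the concentration counterpart to the limit computed in~\eqref{eq:limitShapeAct/limit-lamb}. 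I would then apply this to each $\set{Y}_j$ with local scale $\epsilon_j\simeq r'_j$ and confidence $\eta/M$, which produces the per-component sample requirement $n\gtrsim (\lambda/r'_j)\log(M/\eta) + (\lambda/(r'_j)^3)\gw^2(\set{X}_j - \vec{c}_j)$. Taking the worst case over $j$---and using $r' = \min_j r'_j$ to simplify the logarithmic term---recovers the hypothesis on $n$ in the lemma. On the intersection of the $M$ good events, for any $\vec{x}\in \set{X}_j$ the count of hyperplanes separating $\vec{x}$ from $\vec{c}_j$ is bounded by a constant times $(\|\vec{x}-\vec{c}_j\|_2 + \epsilon_j)\,n/\lambda \leq (r_j + r'_j)\,n/\lambda$. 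The radii hypothesis $r_j \lesssim r'_j/\sqrt{\log(e\lambda/r'_j)} \leq r'_j$ collapses this to $\lesssim r'_j n/\lambda$, which is the stated conclusion.

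\textbf{Main obstacle.} The heavy technical content sits inside the cited tessellation theorem, whose proof in \cite{DiM21,DMS22} relies on generic chaining and localization arguments for Gaussian processes; I would take it as a black box. The only genuine subtlety on top of that black box is the $\sqrt{\log(e\lambda/r'_j)}$ factor in the radii hypothesis: it is calibrated precisely so that, even though the naive tessellation scale is dictated by the diameter $r_j$, the finer scale $r'_j$ can still be reached within the stated sample budget and the bound on the count collapses cleanly. Everything else---selecting the $\epsilon_j$'s, matching absolute constants across the $M$ applications, and executing the union bound---is routine bookkeeping.
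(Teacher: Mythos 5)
Your strategy matches the paper's, which in fact gives no argument at all here: the proof consists of the single sentence that the lemma ``is an immediate consequence of \cite[Theorem 2.9]{DiM21}.'' Your plan---apply that tessellation theorem once per component at local scale $\epsilon_j \simeq r'_j$ with confidence $\eta/M$, take a union bound over $j \in [M]$, and use $r' = \min_j r'_j$ to absorb the log term---is exactly the natural reading of this.

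One inaccuracy worth flagging: your stated form of \cite[Theorem 2.9]{DiM21} cannot be quite right. You claim it gives a purely additive deviation $O(\epsilon/\lambda)$ on the separating fraction with sample complexity $n \gtrsim (\lambda/\epsilon)\log(1/\eta) + (\lambda/\epsilon^3)\gw^2(\set{Y})$ and no further hypotheses. If that were so, your final computation would bound the count by $\lesssim (r_j + r'_j)n/\lambda$, and the radii hypothesis could be weakened to the trivial $r_j \lesssim r'_j$; the $\sqrt{\log(e\lambda/r'_j)}$ factor would be vacuous. Since the lemma explicitly requires $r_j \lesssim r'_j/\sqrt{\log(e\lambda/r'_j)}$, the cited theorem must itself carry a diameter constraint of roughly that form as a \emph{hypothesis} (this is where the logarithmic overshoot of the hyperplane-counting process at small scales enters). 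In other words, the $\sqrt{\log}$ factor is not something that ``collapses cleanly'' in the final step of your argument---it is the precondition under which the black-box theorem applies at accuracy $\epsilon_j \simeq r'_j$ to a set of diameter $\lesssim r_j$. Your ``main obstacle'' paragraph gestures toward this, but if you write the proof out you should state the theorem with this diameter condition intact and verify it is exactly what the radii hypothesis supplies; otherwise the argument, as you have written it, appears to prove a strictly stronger lemma than the one stated, which should itself raise a flag.
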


The following result shows that the `dedicated' neuron $\hat{\varphi}_{\vec{x}_*^-}$ associated with $\vec{x}_*^-$ (defined in \eqref{eqn:dedNeur}) not only separates $\Phi(\vec{x}_*^-)$ and $\Phi(\set{X}^+)$, but in fact acts as a robust separator: it also separates $\Phi(\vec{x}^-)$ and $\Phi(\set{X}^+)$ for all points $\vec{x}^-$ in the component of the mutual covering in which $\vec{x}_*^-$ resides.

\begin{lemma}
\label{lem:pruning/lemma}
    Consider the setting of Theorem~\ref{thm:pruning/covering-bound}. For $\vec{x}_*^- \in \set{X}^-$ we define the associated neuron $\hat{\varphi}_{\vec{x}_*^-} \colon \R^n \to \{ 0, 1 \}$ by
    \begin{equation*}
        \hat{\varphi}_{\vec{x}_*^-}(\vec{z}) = \Thres(-\langle \vec{u}_{\vec{x}_*^-}, \vec{z} \rangle + m_{\vec{x}_*^-}),
    \end{equation*}
    where
    \begin{equation*}
        \vec{u}_{\vec{x}_*^-} = \1[\Phi(\vec{x}_*^-) = \vec{0}]
        \quad\text{and}\quad
        m_{\vec{x}_*^-} = \min_{\vec{x}^+ \in \set{X}^+} \langle \vec{u}_{\vec{x}_*^-}, \Phi(\vec{x}^+) \rangle.
    \end{equation*}
    Then, with probability at least $1 - \eta$, for all $\ell \in \range{M^-}$ and $\vec{x}_*^- \in \set{X}_\ell^-$,
    \begin{equation}\label{eqn:pruningLemmaFirstEqn}
        \hat{\varphi}_{\vec{x}_*^-}(\Phi(\vec{x}^-)) > 0 \quad \text{for all } \vec{x}^- \in \set{X}_\ell^-,
    \end{equation}
    and
    \begin{equation}\label{eqn:pruningLemmaSecond}
        \hat{\varphi}_{\vec{x}_*^-}(\Phi(\vec{x}^+)) = 0 \quad \text{for all } \vec{x}^+ \in \set{X}^+.
    \end{equation}
\end{lemma}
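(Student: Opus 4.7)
Statement \eqref{eqn:pruningLemmaSecond} is immediate from the definition of $m_{\vec{x}_*^-}$: since $m_{\vec{x}_*^-}\leq \langle \vec{u}_{\vec{x}_*^-},\Phi(\vec{x}^+)\rangle$ for every $\vec{x}^+\in\set{X}^+$, the argument of the threshold inside $\hat{\varphi}_{\vec{x}_*^-}(\Phi(\vec{x}^+))$ is non-positive, so the output is $0$ deterministically. The real content is in \eqref{eqn:pruningLemmaFirstEqn}, which is equivalent to the strict inequality $\langle \vec{u}_{\vec{x}_*^-},\Phi(\vec{x}^-)\rangle < m_{\vec{x}_*^-}$ for every $\vec{x}^-\in\set{X}_\ell^-$. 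My plan is to sandwich both sides by constant multiples of $\dist(\vec{c}_\ell^-,\set{C}^+)\,n/\lambda$, arranging the constant on the left to be strictly smaller than the one on the right.

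I would work on the intersection of two high-probability events. First, applying Lemma~\ref{lem:Thres_NN_finite_1st_layer} to the $\delta$-separated centers $\set{C}^-,\set{C}^+$, of sizes $M^-,M^+$, yields on an event $E_1$ of probability at least $1-\eta/2$,
\begin{equation*}
    \langle \vec{u}_{\vec{c}_\ell^-},\Phi(\vec{c}_j^+)\rangle \geq c\,\norm{\vec{c}_\ell^- - \vec{c}_j^+}_2\,\lambda^{-1}n
    \quad\text{for all } \ell\in\range{M^-},\ j\in\range{M^+}.
\end{equation*}
Second, applying Lemma~\ref{lem:signPert} to all $M^-+M^+$ components with $r_\ell' = \dist(\vec{c}_\ell^-,\set{C}^+)$ and $r_j' = \dist(\vec{c}_j^+,\set{C}^-)$ yields on an event $E_2$ of probability at least $1-\eta/2$,
\begin{equation*}
    \norm{\Phi(\vec{c}) - \Phi(\vec{x})}_0 \leq C_1\,r'\,n/\lambda
\end{equation*}
uniformly over every component (with center $\vec{c}$ and parameter $r'$) and every $\vec{x}$ in it. The radius hypothesis of Lemma~\ref{lem:signPert} is precisely \eqref{eq:condition-radius}, and its assumption on $n$ reduces to \eqref{eqn:condnminMain}: $r'\geq\delta$ by $\delta$-separation, so the logarithmic term is absorbed, and the variance term matches $\lambda\omega$ by definition.

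On $E_1\cap E_2$ I would estimate both sides. Because $(\vec{u}_{\vec{x}})_i=1$ iff $\Phi(\vec{x})_i=0$ and $\Phi$ is $\{0,1\}$-valued, $\vec{u}_{\vec{x}_*^-}$ and $\vec{u}_{\vec{c}_\ell^-}$ disagree exactly where $\Phi(\vec{x}_*^-)$ and $\Phi(\vec{c}_\ell^-)$ disagree, and $\langle \vec{u}_{\vec{x}_*^-},\Phi(\vec{x}^-)\rangle\leq \norm{\Phi(\vec{x}_*^-)-\Phi(\vec{x}^-)}_0$. Routing through $\vec{c}_\ell^-$ via the $\ell_0$-triangle inequality on $E_2$ gives
\begin{equation*}
    \langle \vec{u}_{\vec{x}_*^-},\Phi(\vec{x}^-)\rangle \leq 2 C_1 \dist(\vec{c}_\ell^-,\set{C}^+)\,n/\lambda.
\end{equation*}
For the lower bound on $m_{\vec{x}_*^-}$, pick any $\vec{x}^+\in\set{X}_j^+$ and decompose
\begin{equation*}
    \langle \vec{u}_{\vec{x}_*^-},\Phi(\vec{x}^+)\rangle = \langle \vec{u}_{\vec{c}_\ell^-},\Phi(\vec{c}_j^+)\rangle + \langle \vec{u}_{\vec{x}_*^-} - \vec{u}_{\vec{c}_\ell^-},\Phi(\vec{x}^+)\rangle + \langle \vec{u}_{\vec{c}_\ell^-},\Phi(\vec{x}^+) - \Phi(\vec{c}_j^+)\rangle.
\end{equation*}
Each error term is at most the corresponding Hamming distance (since all vectors have entries in $\{0,1\}$), hence $\leq C_1(r_\ell' + r_j')\,n/\lambda \leq 2 C_1 \norm{\vec{c}_\ell^- - \vec{c}_j^+}_2\,n/\lambda$ using $r_\ell', r_j' \leq \norm{\vec{c}_\ell^- - \vec{c}_j^+}_2$ (both are minima). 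Combining with the bound from $E_1$ and minimizing over $\vec{x}^+$ yields $m_{\vec{x}_*^-}\geq (c-2C_1)\dist(\vec{c}_\ell^-,\set{C}^+)\,n/\lambda$.

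The main obstacle is constant tuning: the argument requires $2C_1 < c - 2C_1$, i.e.\ $C_1 < c/4$, where $c$ is the fixed absolute constant of Lemma~\ref{lem:Thres_NN_finite_1st_layer}. Tracing through the proof of Lemma~\ref{lem:signPert}, shrinking the implicit constant in the radius hypothesis $r\lesssim r'/\sqrt{\log(e\lambda/r')}$ proportionally shrinks the leading constant $C_1$ in its conclusion (the expectation $\sim r n/\lambda$ of the number of separating hyperplanes drops, and the deviation is controlled by the $\omega$-term in \eqref{eqn:condnminMain}). Choosing the implicit constants in \eqref{eq:condition-radius} small enough therefore enforces $C_1 < c/4$, and a final union bound over $E_1$ and $E_2$ closes the argument with the advertised probability $1-\eta$.
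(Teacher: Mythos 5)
Your proof follows essentially the same route as the paper: (\ref{eqn:pruningLemmaSecond}) is immediate, and for (\ref{eqn:pruningLemmaFirstEqn}) you intersect an event from Lemma~\ref{lem:Thres_NN_finite_1st_layer} applied to the centers $\set{C}^\pm$ with a sign-flip event from Lemma~\ref{lem:signPert}, then sandwich $\langle \vec{u}_{\vec{x}_*^-},\Phi(\vec{x}^-)\rangle$ and $m_{\vec{x}_*^-}$ by multiples of $\dist(\vec{c}_\ell^-,\set{C}^+)\,n/\lambda$. Your three-term decomposition of $\langle \vec{u}_{\vec{x}_*^-},\Phi(\vec{x}^+)\rangle$ puts the error increments against $\Phi(\vec{x}^+)$ and $\vec{u}_{\vec{c}_\ell^-}$ rather than against $\Phi(\vec{c}_j^+)$ and $\vec{u}_{\vec{x}_*^-}$ as in the paper; this is a cosmetic change since both error terms are in any case bounded by Hamming distances.

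The one genuine issue is your constant-tuning step. You apply Lemma~\ref{lem:signPert} with $r_\ell'=\dist(\vec{c}_\ell^-,\set{C}^+)$ and $r_j'=\dist(\vec{c}_j^+,\set{C}^-)$ and then claim that shrinking the implicit constant in the radius hypothesis \eqref{eq:condition-radius} ``proportionally shrinks the leading constant $C_1$'' in Lemma~\ref{lem:signPert}'s conclusion. That is not justified: as stated, Lemma~\ref{lem:signPert} gives $|\cdot|\lesssim r_j' n/\lambda$ with a fixed absolute constant, and tightening the hypothesis $r_j\lesssim r_j'/\sqrt{\log(\cdot)}$ does not change that constant; the claim would require reopening the proof of a lemma that the paper itself only cites as ``an immediate consequence of'' \citet[Theorem~2.9]{DiM21}, and the deviation term there need not shrink with $r_j$. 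The clean fix, and what the paper actually does, is to instead \emph{feed a rescaled $r'$ into Lemma~\ref{lem:signPert}}: it sets $(r_\ell')^-=\frac{c_1}{12c_2}\dist(\vec{c}_\ell^-,\set{C}^+)$ and $(r_j')^+=\frac{c_1}{4c_2}\dist(\vec{c}_j^+,\set{C}^-)$, where $c_1,c_2$ are the constants of Lemmas~\ref{lem:Thres_NN_finite_1st_layer} and \ref{lem:signPert}. Then the lemma's conclusion automatically comes out as $\le c_2(r_\ell')^-n/\lambda = \frac{c_1}{12}\dist(\vec{c}_\ell^-,\set{C}^+)\,n/\lambda$, which is already small enough to beat the lower bound from event $A$ (giving $2\cdot\frac{c_1}{12}+\frac{c_1}{12}+\frac{c_1}{4} < c_1$), and the tightening of the radius hypothesis is absorbed by the unspecified absolute constant in \eqref{eq:condition-radius}. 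With that replacement your argument closes; without it, the requirement $C_1<c/4$ is an assumption you cannot enforce.
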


\begin{proof}
    Clearly, the choice of $m_{\vec{x}_*^-}$ ensures that \eqref{eqn:pruningLemmaSecond} holds. It remains to show that, with probability at least $1 - \eta$, 
    \begin{equation*}
        \langle \vec{u}_{\vec{x}_*^-}, \Phi(\vec{x}^-) \rangle < m_{\vec{x}_*^-} = \min_{\vec{x}^+ \in \set{X}^+} \langle \vec{u}_{\vec{x}_*^-}, \Phi(\vec{x}^+) \rangle
    \end{equation*}
    for all $\ell \in \range{M^-}$ and $\vec{x}_*^-,\vec{x}^- \in \set{X}_\ell^-$. Let $A$ be the event where, for every $\ell \in \range{M^-}$ and $j \in \range{M^+}$,
    \begin{equation*}
        \langle \vec{u}_{\vec{c}_{\ell}^-}, \Phi(\vec{c}_j^+) \rangle
        \geq c_1 \lambda^{-1} \norm{\vec{c}_{\ell}^- - \vec{c}_j^+}_2 n.
    \end{equation*}
    By Lemma~\ref{lem:Thres_NN_finite_1st_layer}, $\bP(A) \geq 1 - \eta$ under our assumptions. Let $B$ be the event where, for all $\ell\in [M^-]$ and $\vec{x}^{-}\in \set{X}_\ell^{-}$,
    \begin{equation*}
    \begin{aligned}
        \norm{\Phi(\vec{c}_\ell^-) - \Phi(\vec{x}^-)}_1
        & = |\{i\in [n] : \Thres(\langle \vec{w}_i,\vec{c}_\ell^{-}\rangle + b_i)\neq \Thres(\langle \vec{w}_i,\vec{x}^{-}\rangle + b_i)\}|\\
        & \leq c_2 \frac{(r_\ell')^{-} n}{\lambda},
    \end{aligned}
    \end{equation*}
    and, for all $j\in [M^+]$ and $\vec{x}^{+}\in \set{X}_j^{+}$,
    \begin{equation*}
    \begin{aligned}
        \norm{\Phi(\vec{c}_j^+) - \Phi(\vec{x}^+)}_1
        & = |\{i\in [n] : \Thres(\langle \vec{w}_i,\vec{c}_j^{+}\rangle + b_i)\neq \Thres(\langle \vec{w}_i,\vec{x}^{+}\rangle + b_i)\}|\\
        & \leq c_2 \frac{(r_j')^{+} n}{\lambda},
    \end{aligned}
    \end{equation*}
    where 
    \begin{equation*}
        \qquad (r_\ell')^{-}=\frac{c_1}{12 c_2} \dist(\vec{c}_{\ell}^-,\mathcal{C}^+),
        \qquad (r_j')^{+}=\frac{c_1}{4 c_2} \dist(\vec{c}_{j}^+,\mathcal{C}^-).
    \end{equation*}
    By Lemma~\ref{lem:signPert}, $\bP(B) \geq 1 - \eta$ under the stated assumptions. For the remainder of the proof, we condition on the event $A \cap B$.
    
    By using $B$, we find
    \begin{equation*}
    \begin{aligned}
        |\langle \vec{u}_{\vec{x}_*^-}, \Phi(\vec{x}^-) \rangle|
        & = |\langle \vec{u}_{\vec{x}_*^-}, \Phi(\vec{x}^-) - \Phi(\vec{x}_*^-) \rangle|\\
        & \leq \norm{\Phi(\vec{x}^-) - \Phi(\vec{x}_*^-)}_1\\
        & \leq \norm{\Phi(\vec{x}^-) - \Phi(\vec{c}_\ell^-)}_1 + \norm{\Phi(\vec{c}_\ell^-) - \Phi(\vec{x}_*^-)}_1\\
        & \leq 2 c_2 \frac{(r_\ell')^-}{\lambda} n.
    \end{aligned}
    \end{equation*}
    Now pick $j \in \range{M^+}$ and $\vec{x}^+ \in \set{X}_j^+$. Using $A$ and $B$,
    \begin{align*}
        & \langle \vec{u}_{\vec{x}_*^-}, \Phi(\vec{x}^+) \rangle \\
        & \qquad = \langle \vec{u}_{\vec{c}_\ell^-}, \Phi(\vec{c}_j^+) \rangle + \langle \vec{u}_{\vec{x}_*^-} - \vec{u}_{\vec{c}_\ell^-}, \Phi(\vec{c}_j^+) \rangle + \langle \vec{u}_{\vec{x}_*^-}, \Phi(\vec{x}^+) - \Phi(\vec{c}_j^+) \rangle\\
        & \qquad \geq \langle \vec{u}_{\vec{c}_\ell^-}, \Phi(\vec{c}_j^+) \rangle - |\langle \Phi(\vec{x}_*^-) - \Phi(\vec{c}_\ell^-), \Phi(\vec{c}_j^+)\rangle| - |\langle \vec{u}_{\vec{x}_*^-}, \Phi(\vec{x}^+) - \Phi(\vec{c}_j^+) \rangle|\\
        & \qquad \geq \langle \vec{u}_{\vec{c}_\ell^-}, \Phi(\vec{c}_j^+) \rangle - \norm{\Phi(\vec{x}_*^-) - \Phi(\vec{c}_\ell^-)}_1 - \norm{\Phi(\vec{x}^+) - \Phi(\vec{c}_j^+)}_1\\
        & \qquad \geq c_1 \lambda^{-1} \norm{\vec{c}_\ell^- - \vec{c}_j^+}_2 n - c_2 \frac{(r_\ell')^-}{\lambda} n - c_2 \frac{(r_j')^+}{\lambda} n,
    \end{align*}
    where in the second step we used that $\vec{u}_{\vec{x}} = \vec{1} - \Phi(\vec{x})$ due to the threshold activation.
    
    Combining the above we see that, for all $\ell \in \range{M^-}$, $\vec{x}_*^-, \vec{x}^- \in \set{X}_\ell^-$, $j \in \range{M^+}$, and $\vec{x}^+ \in \set{X}_j^+$,
    \begin{equation*}
        \langle \vec{u}_{\vec{x}_*^-}, \Phi(\vec{x}^-) \rangle < \langle \vec{u}_{\vec{x}_*^-}, \Phi(\vec{x}^+) \rangle,
    \end{equation*}
    where we have used that
    \begin{equation*}
        (r_\ell')^{-} < \frac{c_1}{6 c_2} \norm{\vec{c}_\ell^- - \vec{c}_j^+}_2,
        \qquad (r_j')^{+} < \frac{c_1}{2 c_2} \norm{\vec{c}_\ell^- - \vec{c}_j^+}_2.
    \end{equation*}
    Since for any $\vec{x}^+ \in \set{X}^+$ there is some $j \in \range{M^+}$ such that $\vec{x}^+ \in \set{X}_j^+$, we find for all $\ell \in \range{M^-}$ and $\vec{x}_*^-, \vec{x}^- \in \set{X}_\ell^-$,
    \begin{equation*}
        \langle \vec{u}_{\vec{x}_*^-}, \Phi(\vec{x}^-) \rangle
        < \min_{\vec{x}^+ \in \set{X}^+} \langle \vec{u}_{\vec{x}_*^-}, \Phi(\vec{x}^+) \rangle
        = m_{\vec{x}_*^-},
    \end{equation*}
    as desired.
\end{proof}

We can now complete the proof. 
\begin{proof}[Theorem~\ref{thm:pruning/covering-bound}]
    Throughout, we condition on the event from Lemma~\ref{lem:pruning/lemma}. Let us first observe that the first loop of Algorithm~\ref{alg:pruning} terminates after $n_{\min}$ iterations and hence the first layer $\Phi$ of $F$ has width $n_{\min}$. Indeed, taking $\vec{x}_*^-=\vec{x}^-$ in \eqref{eqn:pruningLemmaFirstEqn}, we see that $\hat{\varphi}_{\vec{x}^-}(\vec{x}^-) > 0$ for any $\vec{x}^- \in \set{X}^-$ and hence
    \begin{equation*}
        0
        = \langle \vec{u}_{\vec{x}^-}, \Phi(\vec{x}^-) \rangle
        < m_{\vec{x}^-}
        = \min_{\vec{x}^+ \in \set{X}^+} \langle \vec{u}_{\vec{x}^-}, \Phi(\vec{x}^+) \rangle.
    \end{equation*}
    This estimate implies that for all $\vec{x}^+ \in \set{X}^+$, there must be a hyperplane that separates $\vec{x}^-$ from $\vec{x}^+$.

    Next, using induction we show that the second loop terminates after at most $M^-$ steps, thus $\hat{n} \leq M^-$. In the first iteration, we select $\vec{x}_1^- \in \set{C} = \set{X}^-$ which is part of at least one component of the mutual covering, say $\set{X}_{i_1}^-$. By Lemma~\ref{lem:pruning/lemma}, the associated neuron $\hat{\varphi}_{\vec{x}_1^-}$ activates on all of $\set{X}_{i_1}^-$ and hence $\set{C} \cap \set{X}_{i_1}^- = \emptyset$ after the update. Suppose that the $p$-th iteration finished, thus $\set{C} \cap \set{X}_{i_j}^- = \emptyset$ for all $j \in \range{p}$. We select $\vec{x}_{p+1}^- \in \set{C} \subset \set{X}^- \setminus (\set{X}_{i_1}^- \cup \dots \cup \set{X}_{i_p}^-)$ which must be part of a new component, say $\set{X}_{i_{p+1}}^-$. Again, by the lemma the associated neuron activates on all of the component, and thus, after the update $\set{C} \cap \set{X}_{i_j}^- = \emptyset$ for all $j \in \range{p+1}$. By induction, after at most $M^-$ iterations $\set{C} = \emptyset$ and hence the algorithm terminates with $\hat{n} \leq M^-$.
\end{proof}

\subsection{Proof of Corollary~\ref{cor:support-cover-informal}}

Let $\set{C}^-$ and $\set{C}^+$ denote the centers of the mutual covering. We apply Algorithm~\ref{alg:pruning} to $\set{C}^-$ and $\set{C}^+$ (with $\lambda \approx R$ and $n_\text{min}$ from Theorem~\ref{thm:pruning/covering-bound}) with the following change: when computing the biases in the second layer, instead of taking the minimum only over $\set{C}^+$ we set, for all $\ell \in \range{M^-}$,
\begin{equation*}
    m_{\vec{c}_\ell^-} = \min_{\vec{x}^+ \in \set{X}^+} \langle \vec{u}_{\vec{c}_\ell^-}, \Phi(\vec{x}^+) \rangle.
\end{equation*}
Inspecting the proof of Theorem~\ref{thm:pruning/covering-bound}, we see that with positive probability this network has the asserted size and, moreover, interpolates $\set{X}^-$ and $\set{X}^+$.

\subsection{Proof of Proposition~\ref{prop:pruning/sharpness-of-upper-bound}}

Before we construct a data set that satisfies the properties of the proposition, we make some preliminary observations. Consider any $\set{X}^-, \set{X}^+ \subset \R^d$. Let $\Phi$ denote the first layer of the output $F$ of Algorithm~\ref{alg:pruning}. For a given $\vec{x}_*^- \in \set{X}^-$, consider its associated neuron $\varphi_{\vec{x}_*^-}$ defined in \eqref{eqn:dedNeur}. Consider $\vec{x}_*^- \neq \vec{x}^- \in \set{X}^-$ and for $t \geq 0$ set
\begin{equation*}
    \vec{x}_t = \vec{x}_*^- + t (\vec{x}^- - \vec{x}_*^-),
\end{equation*}
so that $\{\vec{x}_t \ : \ t\geq0\}$ is the ray originating from $\vec{x}_*^-$ and passing through $\vec{x}^-$.

First, we claim that $t \mapsto \langle \vec{u}_{\vec{x}_*^-}, \Phi(\vec{x}_t) \rangle$ is non-decreasing. This is an immediate consequence of the fact that $\Phi_i(\vec{x}_t) \leq \Phi_i(\vec{x}_s)$ for all $i \in \range{n}$ such that $\Phi_i(\vec{x}_*^-) = 0$ and for all $0 \leq t \leq s$. This is clear in the case $\Phi_i(\vec{x}_t) = 0$. Assuming $\Phi_i(\vec{x}_t) > 0$ (and hence $t > 0$), the assumptions imposed on $\sigma$ imply that
\begin{equation*}
    0 < \langle \vec{w}_i, \vec{x}_t \rangle + b_i
    = \langle \vec{w}_i, \vec{x}_*^- \rangle + b_i + t \langle \vec{w}_i, \vec{x}^- - \vec{x}_*^- \rangle.
\end{equation*}
As $t > 0$ and $\langle \vec{w}_i, \vec{x}_*^- \rangle + b_i \leq 0$ due to $\Phi_i(\vec{x}_*^-) = 0$, it follows that
\begin{equation*}
    \langle \vec{w}_i, \vec{x}^- - \vec{x}_*^- \rangle
    > 0.
\end{equation*}
Finally, since $\sigma$ is non-decreasing,
\begin{equation*}
\begin{aligned}
    \Phi_i(\vec{x}_t)
    & = \sigma(\langle \vec{w}_i, \vec{x}_t \rangle + b_i)\\
    & \leq \sigma(\langle \vec{w}_i, \vec{x}_t \rangle + b_i + (s - t) \langle \vec{w}_i, \vec{x}^- - \vec{x}_*^- \rangle)\\
    & = \sigma(\langle \vec{w}_i, \vec{x}_s \rangle + b_i) = \Phi_i(\vec{x}_s),
\end{aligned}
\end{equation*}
proving our claim.

Now let us make the following observation: suppose there is $\vec{x}^+ \in \set{X}^+$ which lies between $\vec{x}_*^-$ and $\vec{x}^-$ in the sense that there exists $t^+ \in (0, 1)$ such that $\vec{x}_{t^+} = \vec{x}^+$. Then, the neuron $\hat{\varphi}_{\vec{x}_*^-}$ does not activate on $\Phi(\vec{x}^-)$. To see this, we simply invoke the above claim, which yields
\begin{equation*}
    m_{\vec{x}_*^-}
    \leq \langle \vec{u}_{\vec{x}_*^-}, \Phi(\vec{x}^+) \rangle
    \leq \langle \vec{u}_{\vec{x}_*^-}, \Phi(\vec{x}^-) \rangle,
\end{equation*}
and directly implies $\hat{\varphi}_{\vec{x}_*^-}(\Phi(\vec{x}^-)) = 0$.

With these observations, we can now prove the statement of the proposition. Consider the interval $[0, 1]$ and place points $\vec{c}_\ell^-$ and $\vec{c}_j^+$ in an alternating fashion on an equispaced grid: formally, for $\ell \in \range{M^-}$ and $j \in \range{M^+}$ we set
\begin{equation*}
    \vec{c}_\ell^- = \frac{\ell - 1}{M^- - 1}
    \quad\text{and}\quad
    \vec{c}_j^+ = \frac{j - 1/2}{M^- - 1}.
\end{equation*}
Let $r_\ell^-$ and $r_j^+$ be as in Theorem~\ref{thm:pruning/covering-bound}. Choose the remaining $N^- - M^-$ points $\vec{x}^- \in \set{X}^-$ and $N^+ - M^+$ points $\vec{x}^+ \in \set{X}^+$ such that for each of them there exists $\ell \in \range{M^-}$ with $\norm{\vec{x}^- - \vec{c}_\ell^-}_2 \leq r_\ell^-$ and $j \in \range{M^+}$ with $\norm{\vec{x}^+ - \vec{c}_j^+}_2 \leq r_j^+$, respectively. Then, $\set{C}^- = \{\vec{c}_1^-, \ldots, \vec{c}_{M^-}^-\}$ and $\set{C}^+ = \{\vec{c}_1^+, \ldots, \vec{c}_{M^+}^+\}$ form a mutual covering of $\set{X}^-$ and $\set{X}^+$ as required by Theorem~\ref{thm:pruning/covering-bound}.

Let $\ell \in \range{M^-}$ be fixed. By our earlier observation, for each $\vec{x}^- \in \set{X}^- \setminus \set{X}_\ell^-$, $\varphi_{\vec{x}^-}(\Phi(\vec{c}_\ell^-)) = 0$, as there is a point $\vec{c}_j^+ \in \set{X}^+$ between $\vec{x}^-$ and $\vec{c}_\ell^-$. Thus, to classify $\vec{c}_\ell^-$ correctly, we need to choose (at least) one neuron corresponding to a point in $\set{X}_\ell^-$. As we need to classify the points $\vec{c}_\ell^-$ for all $\ell \in \range{M^-}$ correctly, we cannot include less than $M^-$ neurons in the second layer.

\section{Numerical Experiments}
\label{sec:numerical-experiments}

In this section, we study the performance of Algorithm~\ref{alg:pruning} through numerical simulations on different data sets.\footnote{Code is available at \url{https://github.com/patrickfinke/memo}. We use Python 3, Scikit-learn, and NumPy.} In particular, we want to investigate how the interpolation probability (approximated as the fraction of a fixed amount of runs that produce an interpolating network) and the width of the second layer respond to changes in the width of the first layer $n$ and the maximal bias $\lambda$. Recall that the algorithm was designed in such a way that it adapts the width of the first layer to guarantee interpolation on the input data. To have free control over this parameter we adapt the algorithm slightly for the experiments.

Hence, we formulate Algorithm~\ref{alg:numerical-experiments} which has both $n$ and $\lambda$ as hyperparameters. As the first layer might be such that not every pair of samples with different labels is separated by at least one hyperplane, we have to adjust the construction of the second layer. We keep track of the set $\set{C}$ of candidate samples whose associated neurons might be accepted into the second layer, the set $\set{U}$ of samples that have yet to be correctly classified by a neuron (the universe), and the set $\set{A}$ of samples whose associated neurons have been accepted into the second layer. Note that $\set{C} \subset \set{U}$ but there might not be equality. The algorithm stops if we either run out of candidates or all points are classified correctly. In every iteration, we draw a candidate sample at random and compute the associated neuron. If the neuron at least correctly classifies the candidate itself, we accept it into the second layer and remove every point that the neuron classifies correctly from both $\set{C}$ and $\set{U}$. This check could be omitted in Algorithm~\ref{alg:pruning} due to the construction of the first layer which also guaranteed that $\set{C} = \set{U}$.

\begin{algorithm}
\caption{Interpolation (experiments)}\label{alg:numerical-experiments}
\begin{algorithmic}[1]
\Require Disjoint and finite $\set{X}^-, \set{X}^+ \subset \R^d$ with $N^- \coloneqq |\set{X}^-|$, $N^+ \coloneqq |\set{X}^+|$, activation $\sigma\colon \R \to \R$ satisfying $\sigma(t) = 0$ for $t \leq 0$ and $\sigma(t) > 0$ for $t > 0$, width of first layer $n \geq 1$, maximal bias $\lambda \geq 0$.
\Ensure A three-layer fully-connected neural network $F\colon \R^d \to \{\pm 1\}$.
\vspace{.5em}
\phase{First layer $\Phi$}
\State Randomly sample $\mat{W} \in \R^{n \times d}$ and $\vec{b} \in \R^n$ where
\begin{equation*}
    \vec{W}_i \sim \Normal(\vec{0}, \mat{I}_d)
    \quad\text{and}\quad
    b_i \sim \Unif([-\lambda, \lambda]).
\end{equation*}
are all independent and define the first layer $\Phi(\vec{x}) = \sigma(\mat{W} \vec{x} + \vec{b})$.
\phase{Second layer $\hat{\Phi}$}
\State Initialize $\set{C} \gets \set{X}^-$, $\set{U} \gets \set{X}^-$ and $\set{A} \gets \emptyset$.
\While{$\set{C} \not= \emptyset$ \textbf{and} $\set{U} \not= \emptyset$}
    \State Select a candidate $\vec{x}_*^- \in \set{C}$ at random and update $\set{C} \gets \set{C} \setminus \{ \vec{x}_*^- \}$.
    \State Calculate $\vec{u}_{\vec{x}_*^-} \in \{0,1\}^n$ and $m_{\vec{x}_*^-} \geq 0$ according to
    \begin{equation*}
        \vec{u}_{\vec{x}_*^-} \gets \1[\Phi(\vec{x}_*^-) = \vec{0}]
        \quad\text{and}\quad
        m_{\vec{x}_*^-} \gets \min_{\vec{x}^+ \in \set{X}^+} \langle \vec{u}_{\vec{x}_*^-}, \Phi(\vec{x}^+) \rangle.
    \end{equation*}
    \If{$m_{\vec{x}_*^-} > 0$}
        \State Calculate $\set{T} \gets \{ \vec{x}^- \in \set{U} : \langle \vec{u}_{\vec{x}_*^-}, \Phi(\vec{x}^-) \rangle < m_{\vec{x}_*^-} \}$.
        \State Update $\set{C}$, $\set{U}$ and $\set{A}$ according to
        \begin{equation*}
            \set{C} \gets \set{C} \setminus \set{T},\quad
            \set{U} \gets \set{U} \setminus \set{T}\quad\text{and}\quad
            \set{A} \gets \set{A} \cup \{ \vec{x}_*^- \}.
        \end{equation*}
    \EndIf
\EndWhile
\State Define $\hat{\Phi}(\vec{z}) = \sigma(-\mat{U} \vec{z} + \vec{m})$ with $\mat{U} \in \R^{|\set{A}| \times n}$ and $\vec{m} \in \R^{|\set{A}|}$ where
\begin{equation*}
    \mat{U} \gets \begin{bmatrix} \vec{u}_{\vec{x}_*^-}^\top \end{bmatrix}_{\vec{x}_*^- \in \set{A}}
    \quad\text{and}\quad
    \vec{m} \gets \begin{bmatrix} m_{\vec{x}_*^-} \end{bmatrix}_{\vec{x}_*^- \in \set{A}}.
\end{equation*}
\phase{Output network $F$}
\State Return $F(\vec{x}) = \sign(-\langle \vec{1}, \hat{\Phi}(\Phi(\vec{x})) \rangle)$. 
\end{algorithmic}
\end{algorithm}

In the following, we present five experiments. First, we focus on the verification of our theoretical results through illustrative experiments on simple data sets. In Section~\ref{sec:experiments/binary-classification}, we apply Algorithm~\ref{alg:numerical-experiments} to the Two Moons data set, which allows us to verify our main result and illustrate the underlying geometric intuition. In Section~\ref{sec:experiments/sample-size-limit} we verify that, in a controlled setting which is guaranteed to satisfy our assumptions, the network size indeed does not depend on the number of samples. Next, we examine the performance on real world data. In Section~\ref{sec:experiments/binaryMNIST} we investigate binary classification subproblems of the MNIST data set. We introduce an extension to multi-class classification in Section~\ref{sec:experiments/multiclass-classification} and apply it to MNIST. Additionally, in Section~\ref{sec:experiments/cifar10}, we consider the CIFAR-10 data set. Finally, we present a worst-case example in Section~\ref{sec:experiments/worst-case}. In all experiments, we let $\sigma$ be the threshold activation.

\subsection{Binary Classification on Two Moons}
\label{sec:experiments/binary-classification}

In this section, we apply Algorithm~\ref{alg:numerical-experiments} to the 2D Two Moons\footnote{See \url{https://scikit-learn.org/stable/modules/generated/sklearn.datasets.make_moons.html}.} data set (Figure~\ref{fig:experiments/moons/dataset}), allowing us to easily visualize the output of the algorithm in the input domain. While this is only a synthetic toy data set, it provides a clear geometric structure with well-separated classes. At the same time, the data is not linearly separable, and not all pairs of samples with different labels can be efficiently separated by hyperplanes that pass through the origin, making it a good first testing ground for the effect of the parameter $\lambda$.

\paragraph{Interpolation probability.}

In Figure~\ref{fig:experiments/moons/interpolation} we observe a clear phase transition in the interpolation probability which is in line with the prediction of Theorem~\ref{thm:pruning/covering-bound}, where we treat all complexity terms depending on the data set as constant. As can be seen from the contour lines, for $\lambda$ larger than the data radius, $n \gtrsim \lambda$ is enough to guarantee interpolation with any fixed probability. On the other hand, one can observe that a large enough $\lambda$ is also necessary for efficient interpolation, as for $\lambda = 0$ interpolation does not happen for any value of $n$.

It is noteworthy that the optimal value of $\lambda$ is smaller than the data radius. This is intuitive here, as a maximal bias exceeding the radius of $\set{X}^+$ already guarantees the efficient separation of pairs of opposite labels in the first layer.

\paragraph{Width of the second layer $\hat{\Phi}$.}

As can be seen in Figure~\ref{fig:experiments/moons/pruning}, the width of the second layer becomes much smaller than the number of points. We are mainly interested in the part of the parameter space where the interpolation probability is close to one. In this region, the width attains its minimum and is essentially constant.

\bigskip

Due to the two-dimensionality of the data, it is possible to visualize the decision boundary of our method in input space, see Figure~\ref{fig:experiments/moons/decision-boundary-with-pruning}. Neurons of the second layer have (approximately) circular activation regions that are centered at their corresponding candidate points and which extend all the way to the other class. The third layer takes a union of these regions---the boundary of this union is the decision boundary. We can repeat this visualization for different values of the hyperparameters, see Figure~\ref{fig:experiments/moons/decision-boundary-grid}. For $\lambda = 0$ the method fails to separate pairs of samples with opposite labels because all hyperplanes pass through the origin. If $\lambda$ is large enough and as $n$ grows, the method begins to succeed. In line with Proposition~\ref{pro:limitShapeAct}, the activation regions of the individual neurons become more circular as $n$ increases, which can be best seen in the rightmost column of Figure~\ref{fig:experiments/moons/decision-boundary-grid}.

\begin{figure}
    \centering
    \begin{subfigure}{\textwidth}
        \centering
        \includegraphics{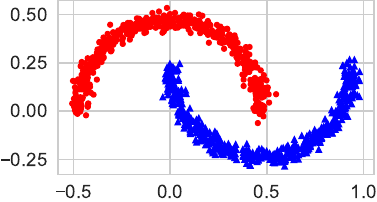}
        \caption{\textbf{Two Moons.} A $d = 2$ dimensional data set of two interleaving half circles. Each class has $N^- = N^+ = 500$ samples and the radius is $R = 1$.}
        \label{fig:experiments/moons/dataset}
    \end{subfigure}

    \begin{subfigure}{\textwidth}
        \centering
        \parbox{0.95\textwidth}{
            \parbox{.45\textwidth}{%
                \includegraphics{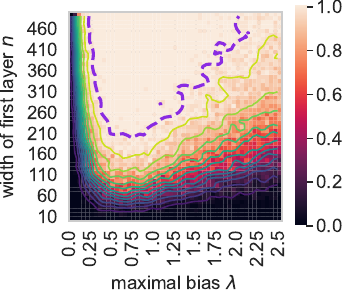}   
            }
            \parbox{.45\textwidth}{%
                \includegraphics{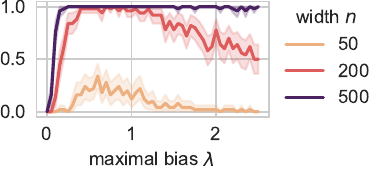}
                \includegraphics{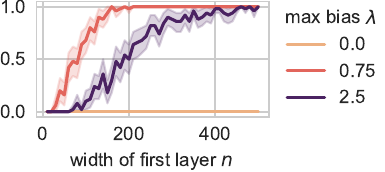}
            }
        }
        \caption{\textbf{Interpolation probability.} (Left) The interpolation probability (average over 250 runs) as a function of the width of the first layer $n$ and maximal bias $\lambda$. The 99\% contour line is at the dashed purple line. (Right) Horizontal (top) and vertical (bottom) slices of the heatmap with $95\%$ confidence intervals.}
        \label{fig:experiments/moons/interpolation}
    \end{subfigure}

    \begin{subfigure}{\textwidth}
        \centering
        \parbox{0.95\textwidth}{
            \parbox{.45\textwidth}{%
                \includegraphics{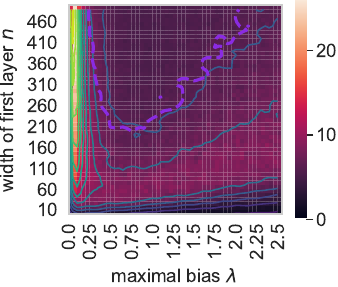}
            }
            \parbox{.45\textwidth}{%
                \includegraphics{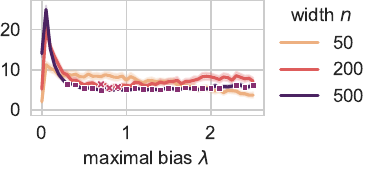}
                \includegraphics{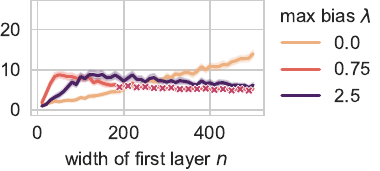}
            }
        }
        \caption{\textbf{Width of the second layer $\hat{\Phi}$.} (Left) The width of the second layer $\hat{\Phi}$ (average over 250 runs) as a function of the width of the first layer $n$ and maximal bias $\lambda$. The 99\% interpolation probability contour line is at the dashed purple line. (Right) Horizontal (top) and vertical (bottom) slices of the heatmap with $95\%$ confidence intervals. Markers indicate an interpolation probability $\geq 99\%$, compare Figure~\ref{fig:experiments/moons/interpolation}.}
        \label{fig:experiments/moons/pruning}
    \end{subfigure}

    \caption{Binary classification on the Two Moons data set.}
    \label{fig:experiments/moons}
\end{figure}

\begin{figure}
    \centering
    \includegraphics{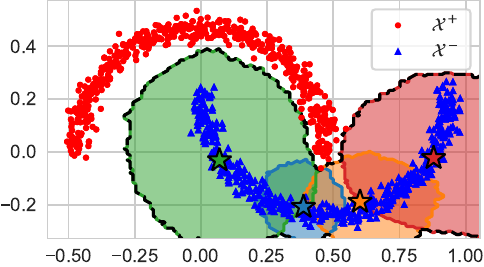}
    \caption{\textbf{Decision boundary.} Each star marks an accepted point and the region of the same color is the activation region of its associated neuron. The decision boundary of the network is the boundary of the union of these regions. Here, we used $n = 2\,000$ and $\lambda = 1$.}
    \label{fig:experiments/moons/decision-boundary-with-pruning}
\end{figure}

\begin{figure}
    \centering
    \includegraphics[width=.32\linewidth]{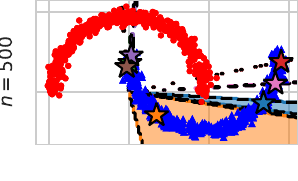}\hspace{-0.0ex}%
    \includegraphics[width=.32\linewidth]{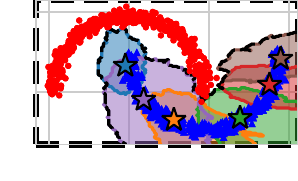}\hspace{-0.0ex}%
    \includegraphics[width=.32\linewidth]{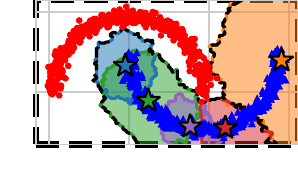}\vspace{-1ex}
    \includegraphics[width=.32\linewidth]{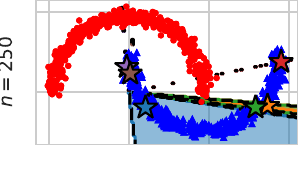}\hspace{-0.0ex}%
    \includegraphics[width=.32\linewidth]{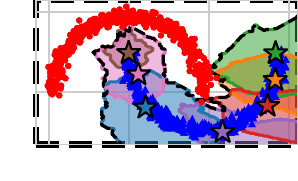}\hspace{-0.0ex}%
    \includegraphics[width=.32\linewidth]{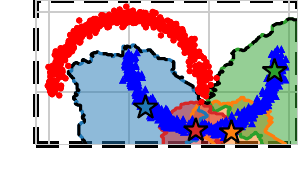}\vspace{-1ex}
    \includegraphics[width=.32\linewidth]{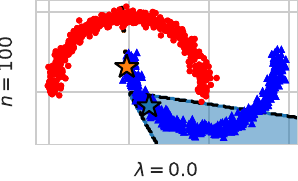}\hspace{-0.0ex}%
    \includegraphics[width=.32\linewidth]{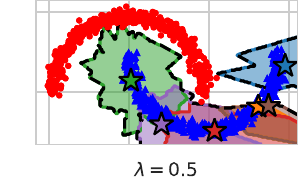}\hspace{-0.0ex}%
    \includegraphics[width=.32\linewidth]{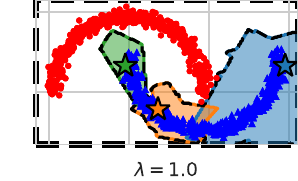}
    \caption{\textbf{Decision boundaries for different choices of hyper-parameters.} Similar to Figure~\ref{fig:experiments/moons/decision-boundary-with-pruning} but includes all combinations of hyper-parameters $n \in  \{100, 250, 500\}$ (rows) and $\lambda \in \{0, 0.5, 1\}$ (columns). Plots in which the network interpolates the data are marked with a thick dashed frame.}
    \label{fig:experiments/moons/decision-boundary-grid}
\end{figure}

\subsection{Behaviour in the Sample Size Limit}
\label{sec:experiments/sample-size-limit}

In Theorem~\ref{thm:pruning/covering-bound}, the size of the interpolating network is independent of the number of samples and only dictated by the parameters of the mutual covering. To illustrate this numerically, we consider a scenario where we sample points from a distribution whose support consists of two disjoint, compact sets representing two classes. We expect that as we iteratively sample points from the distribution, the size of the interpolating network should saturate and be bounded by the parameters of the mutual covering of the support of the distribution (satisfying the restrictions in Theorem~\ref{thm:pruning/covering-bound}).

To verify this, we return to the Two Moons data set from Section~\ref{sec:experiments/binary-classification}. We fix the maximal bias $\lambda = 1$ and vary the number of points $N$ by drawing samples from the data distribution.\footnote{We use \code{sklearn.datasets.make\_moons(n\_samples=N, noise=0.05)} from the scikit-learn Python package to generate the samples.}

\begin{figure}
    \centering
    \begin{subfigure}{\textwidth}
        \centering
        \includegraphics[width=.3\linewidth]{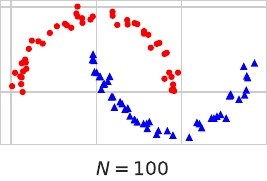}
        \includegraphics[width=.3\linewidth]{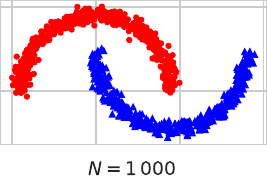}
        \includegraphics[width=.3\linewidth]{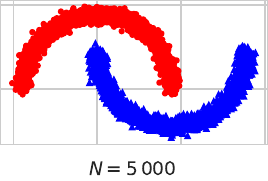}
        \caption{\textbf{Two Moons.} Continuously drawing samples from the distribution of Two Moons (Figure~\ref{fig:experiments/moons/dataset}) leads to a better representation of the support. Each class has always $N/2$ samples and the radius is $R = 1$.}
        \label{fig:experiments/moons-size/dataset}
    \end{subfigure}

    \begin{subfigure}{\textwidth}
        \centering
        \parbox{.95\textwidth}{
            \parbox{.45\textwidth}{%
                \includegraphics{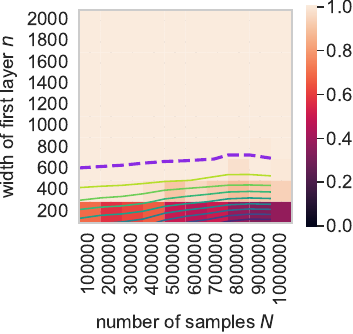}
            }
            \parbox{.45\textwidth}{%
                \includegraphics{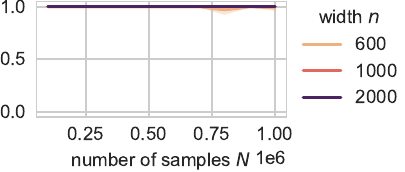}
                \includegraphics{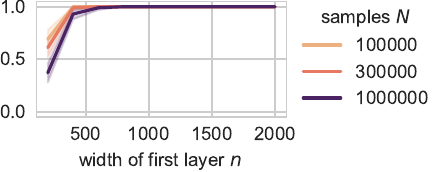}
            }
        }
        \caption{\textbf{Interpolation probability.} (Left) The interpolation probability (average over 100 runs) as a function of the width of the first layer $n$ and the number of samples $N$. The 99\% contour line is at the dashed purple line. (Right) Horizontal (top) and vertical (bottom) slices of the heatmap with $95\%$ confidence intervals.}
        \label{fig:experiments/moons-size/interpolation}
    \end{subfigure}

    \begin{subfigure}{\textwidth}
        \centering
        \parbox{0.95\textwidth}{
            \parbox{.45\textwidth}{%
                \includegraphics{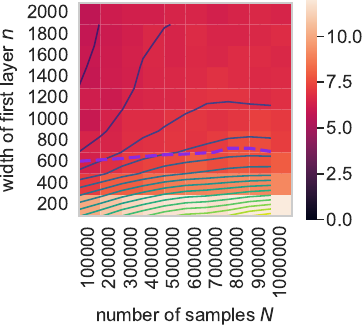}
            }
            \parbox{.45\textwidth}{%
                \includegraphics{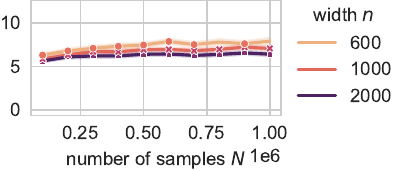}
                \includegraphics{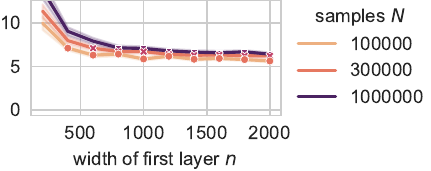}
            }
        }
        \caption{\textbf{Width of the second layer $\hat{\Phi}$.} (Left) The width of the second layer $\hat{\Phi}$ (average over 100 runs) as a function of the width of the first layer $n$ and the number of samples $N$. The 99\% interpolation probability contour line is at the dashed purple line. (Right) Horizontal (top) and vertical (bottom) slices of the heatmap with $95\%$ confidence intervals. Markers indicate an interpolation probability $\geq 99\%$, compare Figure~\ref{fig:experiments/moons-size/interpolation}.}
        \label{fig:experiments/moons-size/pruning}
    \end{subfigure}
    
    \caption{Sample size limit on the Two Moons data set.}
    \label{fig:experiments/moons-size}
\end{figure}

\paragraph{Interpolation probability.}

The contour lines in the heatmap in Figure~\ref{fig:experiments/moons-size/interpolation} show which width of the first layer is required to achieve interpolation with a fixed probability for a certain number of samples. We can observe that there is an increase in the required width up to around $800\,000$ samples. After this threshold, however, a constant width of the first layer is enough to interpolate any number of samples.

\paragraph{Width of the second layer $\hat{\Phi}$.}

As in the other experiments we are interested in the part of the parameter space where the interpolation probability is almost one. Similar to the contour lines of the interpolation probability we observe that to obtain a fixed width of the second layer there is an increase in the required width of the first layer only up to a certain threshold (again, around $800\,000$ samples). After this threshold, a constant width of the first layer is enough to obtain a fixed width of the second layer.

\bigskip

Combining the above observations we note the following: there is a threshold in the number of samples such that for larger sample sizes there is a width of the first layer for which the network interpolates with probability close to one and the width of the second layer stays constant. Hence, as the width of the second layer is only lower for smaller sample sizes, a neural network of constant size (whose parameters can be computed via our algorithm) suffices to interpolate any number of samples.

\subsection{Binary Classification on MNIST}
\label{sec:experiments/binaryMNIST}

In the previous section, we ran a controlled experiment with a data generating distribution that was guaranteed to satisfy the assumptions of our main theorem and which could be used to draw an unlimited number of samples. It is natural to ask if the network size can also be observed to saturate in terms of the number of samples on real data. Examining binary classification subproblems of the MNIST data set~\citep{lecun1998mnist}, we find that the answer is `only sometimes'. We illustrate this in Figure~\ref{fig:mnist/one_vs_one}, which depicts the results for the `$1$ vs.\ $9$' and `$1$ vs.\ $8$' subproblems. For `$1$ vs.\ $9$', the second layer width clearly saturates as the number of samples grows. On the other hand, for `$1$ vs.\ $8$', although the curve seems to flatten a little, the second layer essentially grows linearly. For other binary subproblems, we observed that it was more common that the network size did not completely saturate. We emphasize that this does not contradict our claim that our approach yields a network of a size that is independent from the number of samples. Let us point to two possible explanations. First, MNIST may not contain enough samples to accurately represent the underlying distribution. Recall from Section~\ref{sec:experiments/sample-size-limit} that, even for the simple Two Moons data set, we needed around $800\, 000$ samples to demonstrate a clear saturation effect. Second, there may be an overlap in the class distributions which violates our separation assumption. This can happen quite easily for real data due to the presence of noise.

\begin{figure}
    \centering
    \begin{subfigure}{.5\textwidth}
        \centering
        \includegraphics{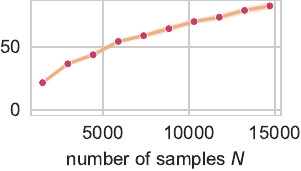}
        \subcaption{$1$ vs.\ $8$}
    \end{subfigure}%
    \begin{subfigure}{.5\textwidth}
        \centering
        \includegraphics{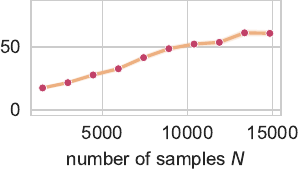}
        \subcaption{$1$ vs.\ $9$}
    \end{subfigure}
    \caption{\textbf{Width of the second layer $\hat{\Phi}$ on MNIST subproblems.} The width of the second layer $\hat{\Phi}$ (average over 25 runs) as a function of the number of samples $N$ for two binary classification subproblems of MNIST. We fixed $n = 5\, 000$ and $\lambda = 0.5$. Markers indicate an interpolation probability $\geq 99\%$, which is everywhere on all curves in this figure.}
    \label{fig:mnist/one_vs_one}
\end{figure}

\subsection{Multi-Class Classification on MNIST}
\label{sec:experiments/multiclass-classification}

Recall that our method is designed for binary problems. One-versus-many is a common strategy to extend binary classification methods to multi-class problems: for each class, train a binary classifier to distinguish between this class and all other classes. At inference time, query all classifiers and output the class label corresponding to the classifier with the highest confidence score.

We extend Algorithm~\ref{alg:numerical-experiments} to multi-class problems in a similar manner. However, as the first layer is obtained in an identical way for every execution of our method, we reuse it across all classes. One can use a simple union bound argument to prove high success probability for this case. Let $K \geq 2$ denote the total number of classes and $\set{X}_k$ the set of samples of class $k \in \range{K}$. Sample the first layer $\Phi$ at random as in Algorithm~\ref{alg:numerical-experiments}. Then, for each class $k \in \range{K}$ compute the second and third layer while using $\Phi$ as the first layer and $\set{X}^- = \set{X}_k$ and $\set{X}^+ = \bigcup_{\ell \not= k} \set{X}_\ell$ as input data. It is convenient to modify the third layer to map samples of $\set{X}^-$ to $1$ and samples of $\set{X}^+$ to $0$. Denote the concatenation of the second and third layers by $F_k$. Define the final classifier $F\colon (\set{X}_1 \cup \dots \cup \set{X}_K) \to \{0,1\}^K$ by
\begin{equation*}
    F(\vec{x}) = (F_1(\Phi(\vec{x})), \dots, F_K(\Phi(\vec{x})))
\end{equation*}
which outputs the class label as a one-hot encoding. We apply this method to MNIST.

\paragraph{Interpolation probability.}

In Figure~\ref{fig:experiments/mnist/interpolation} we again observe a clear phase transition in the interpolation probability. As in the case of Two Moons, this behaves as predicted by Theorem~\ref{thm:pruning/covering-bound}, as for $\lambda$ larger than the radius of the data, $n \gtrsim \lambda$ is enough to guarantee interpolation with any fixed probability. For $\lambda = 0$ the method not only interpolates but it does so with the narrowest first layer. That this works can be intuitively explained by the angular separation of MNIST. The minimal angle between two samples from MNIST is around $0.17$ (in contrast to about $2.44 \cdot 10^{-6}$ for Two Moons). Hence, it is possible to efficiently separate pairs of samples with hyperplanes through the origin.

\paragraph{Width of the second layer $\hat{\Phi}$.}

Again we are interested in the part of the parameter space where the interpolation probability is close to one. In Figure~\ref{fig:experiments/mnist/pruning} we observe that, while $\lambda = 0$ seems to be the optimal choice (for the interpolation probability), increasing $n$ may still lead to a reduction of the width of the second layer. Figure~\ref{fig:mnist/width-ext} reveals that the width does decrease well after interpolation is possible, and in fact, $\lambda \approx 0.5$ yields an even lower value. This might be due to the effect that can be seen in Figure~\ref{fig:experiments/moons/decision-boundary-grid}, where for $\lambda = 0$ the activation regions of the neurons of the second layer are `wedges' and become more circular for larger $\lambda$, which then might prove beneficial to the width of the second layer. Compared to the binary classification experiments in the previous sections, the width of the second layer is relatively large. For the most part, this is due to the larger number of classes: due to our one-versus-many approach, the width of the second layer scales as $\sum_{i=1}^K M_i^-$, where $K$ is the number of classes and $M_i^{-}$ is the mutual covering number for the one-versus-rest problem for class $i$. Additionally, MNIST may simply not admit a `small' mutual covering. Although the concept of mutual covering adapts to the relative positioning of the classes, it is not clear whether a covering with Euclidean balls yields the right complexity measure for image data. It would be an interesting future research direction to adapt our method to a different notion of covering that is more suitable for specific types of data, such as images.

\begin{figure}
    \centering
    \begin{subfigure}{\textwidth}
        \centering
        \includegraphics{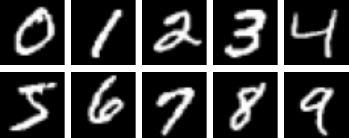}
        \caption{\textbf{MNIST.} A multi-class classification data set containing a total of $70.000$ grayscale images of handwritten digits. Each image has dimension $d = 28 \times 28 = 784$. We mapped the pixel values from $\{0, \dots, 255\}$ to $[0, 1]$ and normalized the radius to $R = 1$.}
        \label{fig:experiments/mnist/dataset}
    \end{subfigure}

    \begin{subfigure}{\textwidth}
        \centering
        \parbox{.95\textwidth}{
            \parbox{.45\textwidth}{%
                \includegraphics{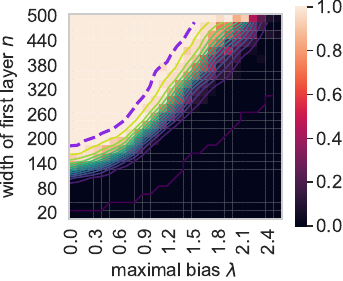}
            }
            \parbox{.45\textwidth}{%
                \includegraphics{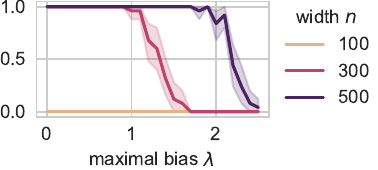}
                \includegraphics{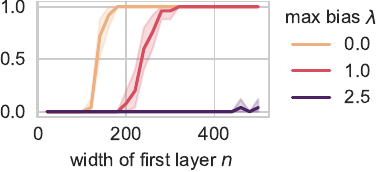}
            }
        }
        \caption{\textbf{Interpolation probability.} (Left) The interpolation probability (average over 25 runs) as a function of the width of the first layer $n$ and maximal bias $\lambda$. The 99\% contour line is at the dashed purple line. (Right) Horizontal (top) and vertical (bottom) slices of the heatmap with $95\%$ confidence intervals.}
        \label{fig:experiments/mnist/interpolation}
    \end{subfigure}

    \begin{subfigure}{\textwidth}
        \centering
        \parbox{.95\textwidth}{
            \parbox{.45\textwidth}{%
                \includegraphics{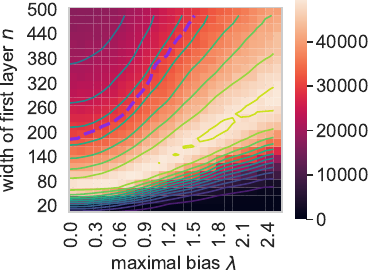}
            }
            \parbox{.45\textwidth}{%
                \includegraphics{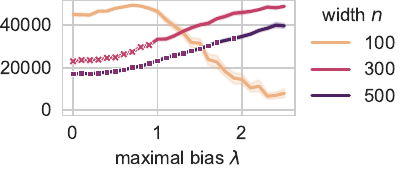}
                \includegraphics{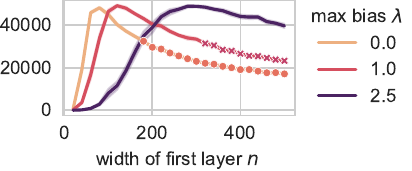}
            }
        }
        \caption{\textbf{Width of the second layer $\hat{\Phi}$.} (Left) The width of the second layer $\hat{\Phi}$ (average over 25 runs) as a function of the width of the first layer $n$ and maximal bias $\lambda$. The 99\% interpolation probability contour line is at the dashed purple line. (Right) Horizontal (top) and vertical (bottom) slices of the heatmap with $95\%$ confidence intervals. Markers indicate an interpolation probability $\geq 99\%$, compare Figure~\ref{fig:experiments/mnist/interpolation}.}
        \label{fig:experiments/mnist/pruning}
    \end{subfigure}
    
    \caption{Multi-class classification on the MNIST data set.}
    \label{fig:experiments/mnist}
\end{figure}

\begin{figure}
    \centering
    \begin{subfigure}{.5\textwidth}
        \includegraphics{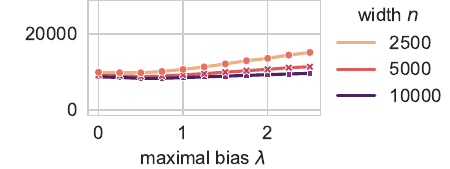}
    \end{subfigure}%
    \begin{subfigure}{.5\textwidth}
        \includegraphics{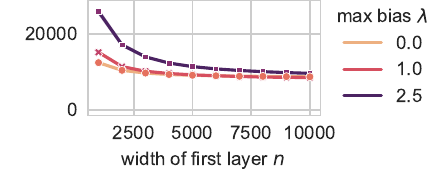}
    \end{subfigure}
    \caption{\textbf{Width of the second layer $\hat{\Phi}$ on MNIST.} Horizontal (left) and vertical (right) slices of the heatmap in Figure~\ref{fig:experiments/mnist/pruning} for an extended range of the width $n$ of the first layer. Markers again indicate an interpolation probability $\geq 99\%$, which is everywhere on all curves in this figure.}
    \label{fig:mnist/width-ext}
\end{figure}

\subsection{Multi-Class Classification on CIFAR-10}
\label{sec:experiments/cifar10}

Next, we apply the extension for multi-class problems from the previous section to CIFAR-10. Due to the color channels and a slightly higher resolution, the dimension is larger than that of MNIST. Additionally, photos of real objects provide more variety than handwritten digits.

\paragraph{Interpolation probability.}

In Figure~\ref{fig:experiments/cifar10/interpolation} we see a clear phase transition in the interpolation probability. As in the other experiments, this behaves as predicted by the Theorem~\ref{thm:pruning/covering-bound}: for $\lambda$ larger than the data radius, $n \gtrsim \lambda$ yields interpolation for any fixed probability. As with MNIST in the previous section, $\lambda = 0$ is the best choice.

\paragraph{Width of the second layer $\hat{\Phi}$.}

In Figure~\ref{fig:experiments/cifar10/pruning} we observe that the width of the second layer seems almost constant in the part of the parameter space where the interpolation probability is close to one. Considering even larger values of $n$ in Figure~\ref{fig:cifar10/width-ext}, the width of the second layer decreases well beyond the interpolation threshold and the optimal choice of the maximal bias seems to be around $\lambda = 0.25$. Again, relative to the number of samples, the width of the second layer is very large. As in the case of MNIST in the previous section, one might conjecture that the data is either ill-conditioned in terms of the mutual covering or violates one of our assumptions. We will come back to this in Section~\ref{sec:conclusion}.

\begin{figure}
    \centering
    \begin{subfigure}{\textwidth}
        \centering
        \includegraphics{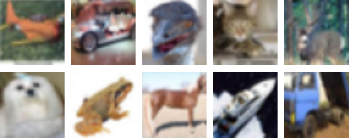}
        \caption{\textbf{CIFAR-10.} The CIFAR-10 data set consists of $60\,000$ color images in $10$ classes of different objects, with $6\,000$ images per class. Each image has dimension $d = 32 \times 32 \times 3 = 3072$. The pixel values reside in $[0, 1]$ and we normalized the radius to $R = 1$.}
        \label{fig:experiments/cifar10/dataset}
    \end{subfigure}

    \begin{subfigure}{\textwidth}
        \centering
        \parbox{.95\textwidth}{
            \parbox{.45\textwidth}{%
                \includegraphics{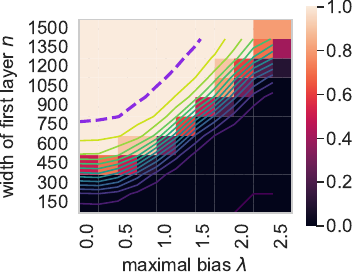}
            }
            \parbox{.45\textwidth}{%
                \includegraphics{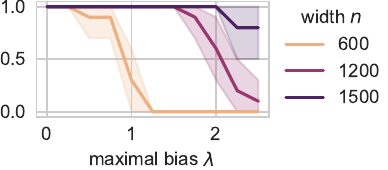}
                \includegraphics{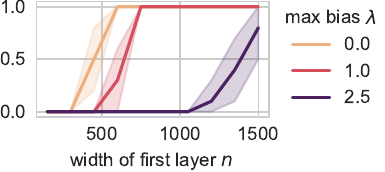}
            }
        }
        \caption{\textbf{Interpolation probability.} (Left) The interpolation probability (average over $10$ runs) as a function of the width of the first layer $n$ and maximal bias $\lambda$. The 99\% contour line is at the dashed purple line. (Right) Horizontal (top) and vertical (bottom) slices of the heatmap with $95\%$ confidence intervals.}
        \label{fig:experiments/cifar10/interpolation}
    \end{subfigure}

    \begin{subfigure}{\textwidth}
        \centering
        \parbox{.95\textwidth}{
            \parbox{.45\textwidth}{%
                \includegraphics{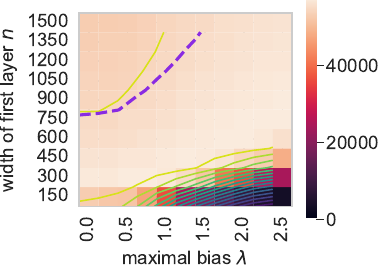}
            }
            \parbox{.45\textwidth}{%
                \includegraphics{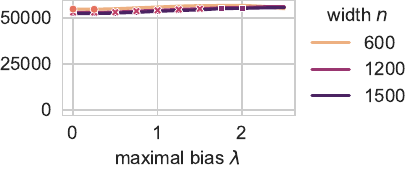}
                \includegraphics{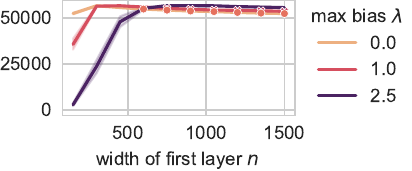}
            }
        }
        \caption{\textbf{Width of the second layer $\hat{\Phi}$.} (Left) The width of the second layer $\hat{\Phi}$ (average over 10 runs) as a function of the width of the first layer $n$ and maximal bias $\lambda$. The 99\% interpolation probability contour line is at the dashed purple line. (Right) Horizontal (top) and vertical (bottom) slices of the heatmap with $95\%$ confidence intervals. Markers indicate an interpolation probability $\geq 99\%$, compare Figure~\ref{fig:experiments/mnist/interpolation}.}
        \label{fig:experiments/cifar10/pruning}
    \end{subfigure}
    
    \caption{Multi-class classification on the CIFAR-10 data set.}
    \label{fig:experiments/cifar10}
\end{figure}

\begin{figure}
    \centering
    \begin{subfigure}{.5\textwidth}
        \includegraphics{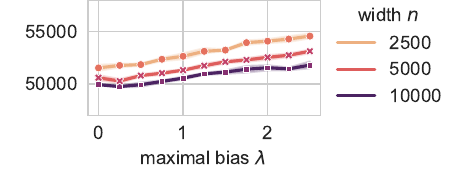}
    \end{subfigure}%
    \begin{subfigure}{.5\textwidth}
        \includegraphics{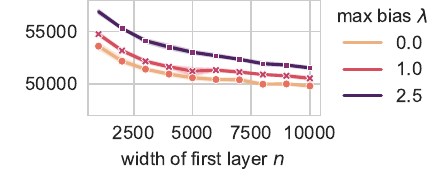}
    \end{subfigure}
    \caption{\textbf{Width of the second layer $\hat{\Phi}$ on CIFAR-10.} Horizontal (left) and vertical (right) slices of the heatmap in Figure~\ref{fig:experiments/cifar10/pruning} for an extended range of the width $n$ of the first layer. Markers again indicate an interpolation probability $\geq 99\%$, which is everywhere on all curves in this figure.}
    \label{fig:cifar10/width-ext}
\end{figure}

\subsection{A Worst-Case Example}
\label{sec:experiments/worst-case}

We conclude with a constructed example that demonstrates that our algorithm can in certain cases fail to produce a small interpolating net. Figure~\ref{fig:lines-worst-case} shows samples drawn from two parallel lines, where the distances of samples between classes are smaller than the distances of samples within each class. This forces the components of the mutual covering (and the activation regions of the neurons in the second layer) to be so small that they only cover a single point. Hence, the width of the second layer scales as the number of samples, which is the worst case. This example shows that, although our algorithm is guaranteed to produce small interpolating neural networks on data with a small mutual covering number, it may not take advantage of alternative benign structures (linear separability in this constructed example). 

\begin{figure}
    \centering
    \includegraphics{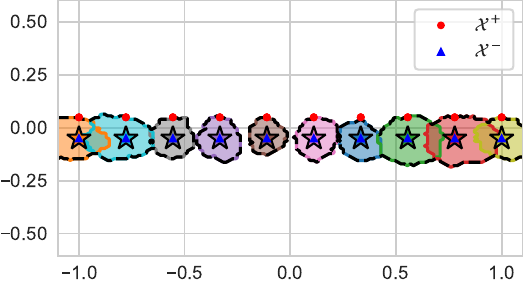}
    \caption{\textbf{Parallel lines.} Points are sampled from two parallel lines such that the distance of samples between classes is smaller than the distance of samples within each class. In this case, each neuron in the second layer activates only for its associated point, and hence the second layer has maximal width. Note, as all points of $\set{X}^-$ are accepted into the second layer, they are all marked with stars. Here, we used $n = 2\,000$ and $\lambda = 1.0$.}
    \label{fig:lines-worst-case}
\end{figure}

\section{Conclusion}
\label{sec:conclusion}

In this paper, we presented an instance-specific viewpoint on the memorization problem for neural networks. We quantified the sufficient network size that guarantees interpolation of given data with two classes in terms of a mutual covering that takes both the geometric complexities and the mutual arrangement of the classes into account. Under our assumptions, the network size depends only on the mutual covering and does not depend on the number of samples in the data set. In this way, our result moves beyond worst-case memorization capacity bounds, which cannot be independent of the number of samples. We gave a constructive proof by presenting a randomized algorithm that is guaranteed to produce an interpolating network for given input data with high probability. We illustrated our theoretical guarantees, in particular the independence of the number of samples, by testing our randomized interpolation algorithm in controlled numerical experiments. In addition, we tested our algorithm on image data and found that it produced relatively large interpolating networks in many cases. In future work, we aim to improve our algorithm for real data by making it robust to noise and by making it tailored to low-complexity structures present in real data such as images.


\newpage

\acks{S.D.\ and M.G.\ acknowledge support by the DFG Priority Programme DFG-SPP 1798 Grant DI 2120/1-1. The authors thank the anonymous reviewers for their comments and suggestions that lead to improvements in our work.}


\vskip 0.2in
\bibliography{ref}

\end{document}